\documentclass{article} 
\usepackage[accepted]{icml2025}


\usepackage{amsmath,amsfonts,bm}









\def\eqref#1{(\ref{#1})}









\def\1{\bm{1}}



\def\rd{{\textnormal{d}}}









\DeclareMathAlphabet{\mathsfit}{\encodingdefault}{\sfdefault}{m}{sl}
\SetMathAlphabet{\mathsfit}{bold}{\encodingdefault}{\sfdefault}{bx}{n}











\newcommand{\R}{\mathbb{R}}

\newcommand{\KL}{D_{\mathrm{KL}}}



\DeclareMathOperator*{\argmin}{arg\,min}

\DeclareMathOperator{\Tr}{Tr}

\usepackage{hyperref}
 \hypersetup{
     colorlinks=true,
     linkcolor=citationcolor,
     filecolor=blue,
     citecolor=citationcolor,      
     urlcolor=cyan,
     }
\usepackage{url}            
\usepackage{wrapfig}

\usepackage{booktabs}       
\usepackage{amsfonts}       
\usepackage{amssymb}
\usepackage{amsmath}
\usepackage{amsthm}
\usepackage{mathtools}

\usepackage{nicefrac}       
\usepackage{microtype}      
\usepackage{graphicx}
\usepackage{subfigure}

\usepackage{cleveref}
\usepackage{tikz}
\usepackage{tikz-cd}
\usepackage{centernot}
\usepackage{bbm}
\usepackage{calc}
\usepackage{caption}
\usepackage{lipsum}  
\usepackage{wrapfig}
\usepackage{blindtext}
\usepackage{multirow}
\usepackage{pifont}
\usepackage{bigints}
\usepackage{url}
\usepackage{enumitem}
\usepackage{natbib}

\usepackage{algorithm}
\usepackage{algorithmic}
\usepackage{threeparttable}  
\usepackage{xcolor,soul}
\usepackage{colortbl}

\definecolor{citationcolor}{RGB}{80, 90, 180}
\definecolor{background}{RGB}{240, 240, 250}
\newcommand{\cellbg}{\cellcolor{background}}
\sethlcolor{background}
\DeclareRobustCommand{\best}[1]{\hl{#1}}


\newcommand{\bc}{\mathbf{c}}

\newcommand{\bp}{\mathbf{p}}

\newcommand{\br}{\mathbf{r}}

\newcommand{\bv}{\mathbf{v}}
\newcommand{\bw}{\mathbf{w}} 
\newcommand{\bx}{\mathbf{x}}
\newcommand{\by}{\mathbf{y}}

\newcommand{\bA}{\mathbf{A}}
\newcommand{\bB}{\mathbf{B}}
\newcommand{\bC}{\mathbf{C}}
\newcommand{\bD}{\mathbf{D}}
\newcommand{\bE}{\mathbf{E}}

\newcommand{\bI}{\mathbf{I}}

\newcommand{\bL}{\mathbf{L}}

\newcommand{\bN}{\mathbf{N}}

\newcommand{\bP}{\mathbf{P}}

\newcommand{\bR}{\mathbf{R}}

\newcommand{\bU}{\mathbf{U}}
\newcommand{\bV}{\mathbf{V}}
\newcommand{\bW}{\mathbf{W}}
\newcommand{\bX}{\mathbf{X}}
\newcommand{\bY}{\mathbf{Y}}
\newcommand{\bZ}{\mathbf{Z}}



\newcommand{\calA}{{\mathcal{A}}}
\newcommand{\calB}{{\mathcal{B}}}
\newcommand{\calC}{{\mathcal{C}}}

\newcommand{\calE}{{\mathcal{E}}}
\newcommand{\calF}{{\mathcal{F}}}
\newcommand{\calG}{{\mathcal{G}}}
\newcommand{\calH}{{\mathcal{H}}}

\newcommand{\calJ}{{\mathcal{J}}}

\newcommand{\calL}{{\mathcal{L}}}

\newcommand{\calN}{{\mathcal{N}}}

\newcommand{\calU}{{\mathcal{U}}}
\newcommand{\calV}{{\mathcal{V}}}

\newcommand{\calZ}{{\mathcal{Z}}}

\newcommand{\bbA}{\mathbb{A}}

\newcommand{\bbE}{\mathbb{E}}

\newcommand{\bbN}{\mathbb{N}}

\newcommand{\bbP}{\mathbb{P}}
\newcommand{\bbQ}{\mathbb{Q}}
\newcommand{\bbR}{\mathbb{R}}





\DeclareMathOperator*{\arginf}{arg\,inf}

\def\[#1\]{\begin{align}#1\end{align}}
\newcommand{\norm}[1]{\left\lVert{#1}\right\rVert}

\newcommand{\ie}{\textit{i}.\textit{e}., }
\newcommand{\eg}{\textit{e}.\textit{g}., }

\theoremstyle{plain}
\newtheorem{theorem}{Theorem}[section]
\newtheorem{proposition}[theorem]{Proposition}
\newtheorem{lemma}[theorem]{Lemma}

\newtheorem{assumption}[theorem]{Assumption}

\newcommand{\la}{\langle}
\newcommand{\ra}{\rangle}
\newcommand{\mc}{\mathcal}
\newcommand{\mb}{\mathbf}

\def\[#1\]{\begin{align}#1\end{align}}

\newcommand{\blue}[1]{{\color{blue}{#1}}}

\definecolor{myellow}{RGB}{194, 125, 47}
\definecolor{mgreen}{RGB}{48, 160, 111}
\definecolor{mteal}{RGB}{31, 154, 122}
\definecolor{mpurple}{RGB}{120, 111, 177}
\newcommand{\yellow}[1]{{\color{myellow}{#1}}}
\newcommand{\green}[1]{{\color{mgreen}{#1}}}
\newcommand{\teal}[1]{{\color{mteal}{#1}}}
\newcommand{\purple}[1]{{\color{mpurple}{#1}}}

\usepackage{thmtools}
\usepackage{thm-restate}

\definecolor{citationcolor}{RGB}{80, 90, 180}

\newcommand{\dt}{\rd t}

\definecolor{mygray}{gray}{0.95}
\definecolor{mygray}{gray}{0.95}
\newcommand{\graybox}[1]{%
\begingroup
\setlength{\fboxsep}{0pt}%
\colorbox{mygray} {
\begin{minipage}{\linewidth}
\vspace{-0.5em}%
{#1}%
\end{minipage}%
}
\endgroup
}

\newcommand{\mean}{\mathbf{m}}

\icmltitlerunning{Functional Adjoint Sampler}

\begin{document}

\twocolumn[
\icmltitle{Functional Adjoint Sampler: Scalable Sampling on Infinite Dimensional Spaces}

\begin{icmlauthorlist}
\icmlauthor{Byoungwoo Park}{yyy}
\icmlauthor{Juho Lee}{yyy}
\icmlauthor{Guan-Horng Liu}{sch}
\end{icmlauthorlist}

\icmlaffiliation{yyy}{KAIST}
\icmlaffiliation{sch}{FAIR at Meta}

\icmlcorrespondingauthor{Byoungwoo Park}{bw.park@kaist.ac.kr}
\icmlcorrespondingauthor{Guan-Horng Liu}{ghliu@meta.com}

\vskip 0.3in
]

\printAffiliationsAndNotice{}

\begin{abstract}

Learning-based methods for sampling from the Gibbs distribution in finite-dimensional spaces have progressed quickly, yet theory and algorithmic design for infinite-dimensional function spaces remain limited. This gap persists despite their strong potential for sampling the paths of conditional diffusion processes, enabling efficient simulation of trajectories of diffusion processes that respect rare events or boundary constraints. In this work, we present the adjoint sampler for infinite-dimensional function spaces, a stochastic optimal control-based diffusion sampler that operates in function space and targets Gibbs-type distributions on infinite-dimensional Hilbert spaces. Our Functional Adjoint Sampler (FAS) generalizes Adjoint Sampling~\citep{havens2025adjoint} to Hilbert spaces based on a SOC theory called stochastic maximum principle, yielding a simple and scalable matching-type objective for a functional representation. We show that FAS achieves superior transition path sampling performance across synthetic potential and real molecular systems, including Alanine Dipeptide and Chignolin.
\end{abstract}

\section{Introduction}

The study of constructing Markov diffusion processes under certain conditions---such as rare events and prescribed terminal distributions---has long been a central theme in probability and statistical physics~\citep{doob1984classical, chetrite2015nonequilibrium}. 
With the recent explosive interests in generative modeling~\citep{song2021scorebased, lipman2023flow}, the framework has emerged in many other machine learning applications, including data assimilation~\citep{chopin2023computational, park2025amortized}, population modeling~\citep{liu2022deep, chen2023deep}, molecule dynamics simulation~\citep{plainer2023transition, holdijk2022path}, and---of particular focus of this work---sampling from the Gibbs distribution~\citep{tzen2019theoretical}.%

Specifically, we are interested in generating samples from a Gibbs distribution $\pi(\bx) \propto e^{-U(\bx)}$ given its unnormalized energy function $U(\bx)$. Modern computational methods---known as \textit{diffusion samplers}---achieve this by learning diffusion processes whose terminal distribution matches the law of $\pi$.
Among these, adjoint-based diffusion samplers~\citep{havens2025adjoint, liu2025adjoint, choi2025non} form a distinctive family by reformulating the sampling problem as a stochastic optimal control problem (SOC)~\citep{fleming1993controlled, kappen2005path} and, from which, employing Adjoint Matching (AM)~\citep{domingo-enrich2025adjoint}. For instance, Adjoint Sampling (AS)~\citep{havens2025adjoint} learns the drift of a \textit{memoryless}\footnote{
    Practically, AS~\citep{havens2025adjoint} fulfills the memoryless condition with a fixed initial condition.
} stochastic differential equation with an adjoint-matching objective that depends solely on on-policy samples and a simulation-free target.
This yields a highly scalable method compared to vanilla SOC-based methods that rely on simulation-based targets~\citep{zhang2022path}, as well as those that require importance-weighted samples~\citep{phillips2024particle, akhound2024iterated}.

Despite their practical successes, all of the aforementioned adjoint samplers generate samples only in {finite}-dimensional state spaces, \eg $\bx ~ {\in} ~\R^d$. This essentially precludes their applications to sampling Gibbs distributions defined on more general \textit{trajectory} spaces---a family of distributions that appears prevalently in computational chemistry, with a representative example known as the \textit{transition path sampling} (TPS)~\citep{bolhuis2002transition, vanden2010transition}.
At its core, TPS aims to identify rare reactive paths by sampling ensembles of dynamical trajectories between metastable states driven by stochastic fluctuations or external forcing~\citep{durr1978onsager}. Studying these transitions enables accurate estimations of reaction rates and mechanisms that explain how molecules transition between metastable states, benefits various applications such as protein folding~\citep{dobson2003protein, noe2009constructing}, enzyme catalysis~\citep{basner2005enzyme}, drug design~\citep{tiwary2015kinetics} and nucleation~\citep{sosso2016crystal}.

As TPS aims to generate a Gibbs distribution of trajectories following specific energy structures, it is nature to consider samplers formulated directly in these \textit{infinite-dimensional function spaces}. 
Indeed, trajectories of $\bbR^d$-valued diffusion processes are commonly viewed as random functions~\citep{hairer2009sampling} where the associated probability laws typically reside on function spaces such as $\calH := L^2(\bL, \bbR^d)$, where $\bL \subseteq \bbR$ denotes a time interval.
In this case, classical sampling methods based on Langevin-type infinite-dimensional stochastic differential equations
have been investigated~\citep{stuart2004conditional, beskos2008mcmc}, despite remaining primarily of theoretical interest rather than practical utility. 
On the other hand, existing diffusion samplers for TPS~\citep{holdijk2022path, seong2025transition} remains confined to finite dimensional state spaces because they rely on reference dynamics driven by local energy gradients, which preclude a scalable AM objective.

In this paper, we introduce \textbf{Functional Adjoint Sampler} (FAS), a scalable matching algorithm that extends AS to infinite-dimensional function spaces. Formally, we recast sampling from Gibbs-type distributions in their infinite-dimensional formulation and build diffusion samplers in function spaces. Leveraging infinite-dimensional SOC, we cast the non-equilibrium sampling (NES) as a function-space generalization of~\citep{tzen2019theoretical, zhang2022path}. Furthermore, under standard regularity, we demonstrate that Adjoint Matching (AM) can also be extended to function spaces by applying the \textit{Stochastic Maximum Principle}~\citep{bensoussan1983stochastic, du2013maximum}, providing a simple and scalable training objective.

In practice, we take a particular interets in the problem of  TPS, our method generates conditional paths across diverse settings, from synthetic potential to molecular systems such as Alanine Dipeptide and Chignolin. Specifically, FAS operates directly in infinite dimensional function spaces, choice of a basis that enforces the prescribed boundary condition ($\ie$ Dirichlet boundary) of the simulated path, thereby yielding perfect transition between metastable states without extra projection or penalty terms. This principled formulation ensures well posedness of the conditional diffusion and supports stable discretizations, while outperforming prior works which based on finite-dimensional dynamics.

We summarize our contributions as follows:
\begin{itemize}[leftmargin=20pt]
    \item  We introduces a function-space diffusion sampler that draws samples from Gibbs-type distribution on infinite-dimensional path spaces. Taking an infinite-dimensional Ornstein–Uhlenbeck semigroup as the reference dynamics, We casts NES as a infinite-dimensional SOC problem.
    \item Based on the SOC theory called \textit{Stochastic Maximum Principle}, we propose FAS, a generalization of the matching-based training objective of AS~\citep{havens2025adjoint} to Hilbert spaces and yielding a scalable training algorithm suitable for functional representations.
    \item We demonstrate superior performance of FAS on TPS, outperforming prior finite-dimensional methods across synthetic potential and challenging real molecular systems including Alanine Dipeptide and Chignolin, while maintaining perfect transition-hit rates to a target states.
\end{itemize}

\paragraph{Notation.} Throughout the paper, we consider a real and separable Hilbert space $\mc{H}$, equipped with the norm $\norm{\cdot}_{\mc{H}}$ and inner product $\la \cdot, \cdot \ra_{\mc{H}}$. We denote $\mathbb{P}^{(\cdot)}$ the path measure on the space of all continuous mappings $\Omega=C([0, T], \mc{H})$ and $\mb{X}^{(\cdot)} \in \calH$ the stochastic processes associated with this path measure. We denote the evaluated value of a (random) function $\bX[\cdot]: \bbR^n \to \bbR^d$ at a point $u \in \bbR^n$ from domain space $\bbR^n$ as $\bX[u] \in \bbR^d$, which will be omitted unless necessary for clarity. For a function $\mc{V} : [0, T] \times \mc{H} \to \mathbb{R}$, we define $D_{\mb{x}}\mc{V}, D_{\mb{xx}}\mc{V}$ as the first and second order Fréchet derivatives with respect to the variable  $\mb{x} \in \mc{H}$, respectively, and $\partial_t \mc{V}$ as the derivative with respect to the time variable $t \in [0, T]$. The Hermitian adjoint will be denoted as $\dag$.

\section{Preliminaries}

Here, we provide a concise overview of the necessary probability concepts for studying diffusion samplers formulated in infinite-dimensional state spaces. Additional review can be found in Section~\ref{sec:overview}.

\paragraph{Gaussian measure in Hilbert spaces} 
Let $(\Omega, \mc{F}, \mathbb{Q})$ and $(\calH, \mc{B}(\calH))$ be a measurable state space in which $\calH$-valued random variable $\mb{X}:\Omega \to \calH$ defines the push-forward measure $\nu := \mb{X}_{\#} \mathbb{Q}$. Then, the measure $\nu$ is called Gaussian when every $\bbR$-valued random variable $\la u, \mb{X} \ra_{\calH}$ is Gaussian for all $u \in \calH$. In this case, there is a unique mean function $\mean \in \calH$ satisfying $\la \mean, u \ra_{\calH} = \bbE_{\nu}[\la \bX, u \ra_{\calH}]$ and unique non-negative self-adjoint covariance operator $Q : \calH \to \calH$ defined by $\la u, Qv \ra_{\calH} = \bbE_{\nu}[\la \bX - \mean, u \ra_{\calH} \la \bX - \mean , v\ra_{\calH}]$ for all $u, v \in \calH$, where it is trace class operator $\ie \text{Tr}(Q)< \infty$. We write $\nu = \calN(\mean, Q)$ and say that $\bX$ is centered when $\mean = 0$. Assume now that $\bX$ is centred with law $\calN(0, Q)$. Then, there exists an eigen-system $\{(\lambda^{(k)}, \phi^{(k)}) \in \mathbb{R} \times \calH : k \in \mathbb{N}\}$ such that $Q\phi^{(k)} = \lambda^{(k)} \phi^{(k)}$ and $\text{Tr}(Q) = \sum_{k=1}^{\infty} \la Q \phi^{(k)}, \phi^{(k)}\ra_{\calH} =\sum_{k=1}^{\infty} \lambda^{(k)} < \infty$.

\paragraph{$Q$-Wiener process and Cameron-Martin space} In infinite-dimensional spaces, we distinguish two Gaussian driving noises. Let $\calU$ be a separable Hilbert space with orthonormal basis $\{\phi^{(k)}\}_{k \ge 1}$. A \textit{cylindrical Wiener process} is the natural extension of $\bbR$-valued Wiener process, formal series $\bW_t = \sum_{k=1}^{\infty} \bW^{(k)}_t \phi^{(k)}$, where $\{\bW^{(k)}_t\}_{k \geq 1}$ is the set of independent Brownian motion. It satisfies independent increments $\bW_{t + \Delta_t} - \bW_t \sim \mc{N}(0, \Delta_t\bI_{\calU})$ but the covariance $\mb{I}_{\calU}$ is \textit{not} trace class when $\dim \calU = \infty$. In other words, $\calN(0, \Delta_t \bI_{\calU})$ is supported only in a larger distribution space, not in $\calU$ itself, thereby $\bW_t$ is \textit{not} an $\calU$-valued random variable. Here, we embed $\calU$ into a larger Hilbert space $\calH$ and define an Hilbert–Schmidt operator $Q^{1/2}: \calU \to \calH$ such that $Q = Q^{1/2}(Q^{1/2})^{\dag}$ is trace class on $\calH$. Then, we can introduce the following $\calH$-valued covariated noise process:
\[\label{eq: Q wiener process}
    \bW^Q_t := Q^{1/2}\bW_t = \textstyle{\sum_{k=1}^{\infty} }\sqrt{\lambda^{(k)}}\bW^{(k)}_t  \phi^{(k)} \in \calH,
\]
where $\lambda^{(k)}$ is an eigen-value of $Q$. Consequently, we obtain, $\bW^Q_{t + \Delta_t} - \bW^Q_t \sim \mc{N}(0, \Delta_t Q)$ with trace class operator $Q$. We call $\bW^Q_t$ as \textit{$Q$-Wiener process}, which is now $\calH$-valued random variable. Here, we call the embedded subspace $\calU$ of $\calH$ as \textit{Cameron-Martin Space} of $Q$ equipped with norm:
\[
    \la u, v\ra_{\calU} = \la Q^{-1/2}u, Q^{-1/2}v \ra_{\calH}, \quad \forall u, v \in \calH.
\]
\paragraph{Infinite dimensional stochastic differential equations} With the infinite dimensional Wiener processes defined in~\eqref{eq: Q wiener process}, we introduce the infinite-dimensional stochastic differential equations as follows~\citep{da2014stochastic}:
\[\label{eq:uncontrolled SDE}
    d\bX_t = \calA \bX_t \rd t + \sigma_t \rd \bW^Q_t, \quad \bX_0=\bx_0 \in \calH,
\]
where $\calA : \calH \to \calH$ is a linear operator and $\sigma_t: \bbR_{>0} \to \bbR_{>0}$ is a noise schedule. Under the mild conditions in~\Cref{assumption:operator}, the dynamics in~\eqref{eq:uncontrolled SDE} generates an Ornstein–Uhlenbeck $C_0$-semigroup~\citep{da2002second} and admit the unique \textit{mild} solution. The resulting stochastic process $\{\bX_t\}_{t \geq 0}$ is then an $\calH$-valued Gaussian process with trace-class covariance operator:
\[\label{eq:Gaussian measure}
    \bX_t \sim \nu^{\bx_0}_t := \calN(e^{t\calA}\bx_0, Q_t),
\]
where $Q_t = \int_0^t e^{(t-s)\calA} \sigma^2_t Q e^{(t-s)\calA^{\dag}} \rd s$.
Finally, as $t \to \infty$, the law of $\bX_{\infty}$ converges to the Gaussian invariant law
\[\label{eq:invariant measure}
\nu_{\infty} = \calN(0, Q_{\infty}), \; \text{with} \; \; Q_{\infty} = -\tfrac{1}{2}\sigma_{\infty} \calA^{-1} Q \]

\section{Functional Adjoint Sampler}
In this section, we present a scalable diffusion sampler that operates directly in infinite-dimensional Hilbert spaces, targeting a \textit{distribution of functions}. Full proofs and necessary derivations are provided in \Cref{sec:proofs and derivations}.

\subsection{Non-Equilibrium Sampling in Function Spaces}
Sampling from a Gibbs distribution is a fundamental task. The goal is to draw samples from a target distribution $\pi$ that is known only through an unnormalized energy $U:\calH\to\bbR$:
\[\label{eq:target measure}
    \rd \pi(\bx) = \frac{1}{\calZ} e^{-U(\bx)}  \rd \nu (\bx), \quad \calZ = \int_{\calH}  e^{-U(\bx)}  \rd \nu(\bx),
\]
where $\nu$ is the measure defined on some space $\calH$. In finite dimensions, $\eg \calH = \mathbb{R}^d$, the Gibbs density can be expressed with respect to Lebesgue measure $\ie \rd\pi(\bx) \propto e^{-U(\bx)} \rd\bx$. However, due to the absence of an equivalent formulation of Lebesgue measure for  Hilbert spaces, the expression in~\eqref{eq:target measure} is well-defined only after specifying an appropriate reference measure $\nu$. The following lemma states a condition for the target law to be well defined for arbitrary Hilbert spaces.
\begin{restatable}[Finite Normalization Constant]{lm}{lmone}\label{lemma:partition} Let us assume $U(\bx) \geq \beta \norm{\bx}^2_{\calH} - C$ with constants $\beta, C >0$. Then, for any $\text{Tr}(Q) < \infty$, choosing $\nu = \calN(0, Q)$ in~\eqref{eq:target measure} results in a finite normalization constant $\calZ < \infty$.
\end{restatable}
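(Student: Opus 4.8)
The plan is to reduce the claim to the elementary observation that the quadratic lower bound on $U$ forces the integrand $e^{-U}$ to be uniformly bounded on all of $\calH$, after which finiteness is immediate because $\nu$ is a probability measure. It is worth noting up front that the only feature of the hypothesis $U(\bx)\ge \beta\norm{\bx}_{\calH}^2 - C$ that enters is that it makes $-U$ bounded above; the growth rate $\beta$ itself is irrelevant to mere finiteness of $\calZ$ (it would matter only if one wanted a quantitative tail estimate, or a lower bound on $\calZ$).

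First I would record the role of the trace-class hypothesis: since $\Tr(Q)<\infty$, the reference law $\nu=\calN(0,Q)$ is a genuine Borel probability measure supported on $\calH$ itself---this is exactly the distinction between $Q$-Wiener noise and cylindrical noise recalled in the Preliminaries---so that $\int_{\calH}\rd\nu(\bx)=1$, and moreover $\bbE_{\nu}\norm{\bX}_{\calH}^2=\Tr(Q)<\infty$, hence $\norm{\bX}_{\calH}<\infty$ $\nu$-a.s. (the last fact is reassuring but not actually needed below). Next, from $\beta>0$ and the hypothesis, for every $\bx\in\calH$ we have $-U(\bx)\le C-\beta\norm{\bx}_{\calH}^2\le C$, and therefore the pointwise bound $e^{-U(\bx)}\le e^{C}$.

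Finally I would integrate this pointwise bound against $\nu$. Assuming $U$ is Borel measurable, which is a standing assumption so that the integral defining $\calZ$ is meaningful, monotonicity of the integral gives
\[
    \calZ \;=\; \int_{\calH} e^{-U(\bx)}\,\rd\nu(\bx) \;\le\; e^{C}\int_{\calH}\rd\nu(\bx) \;=\; e^{C}\;<\;\infty,
\]
which is the assertion. There is no substantive obstacle here: the two points deserving a word of care are that $\nu$ is a bona fide probability measure on $\calH$ (supplied by $\Tr(Q)<\infty$) and that $U$ is measurable; if one additionally wants the complementary bound $\calZ>0$ needed to make $\rd\pi/\rd\nu=\calZ^{-1}e^{-U}$ a probability density, that follows from a mild further assumption such as local boundedness of $U$, but it is not required for the stated conclusion.
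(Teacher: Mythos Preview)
Your proof is correct and, in fact, more elementary than the paper's. You observe that $\beta>0$ forces $-U(\bx)\le C$, so $e^{-U}$ is uniformly bounded by $e^{C}$; integrating a bounded function against the probability measure $\nu=\calN(0,Q)$ then gives $\calZ\le e^{C}<\infty$ immediately. The paper instead invokes Fernique's theorem to secure $\int_{\calH}e^{r\norm{\bx}^2_{\calH}}\rd\nu(\bx)<\infty$ for some $r>0$, bounds $e^{-U(\bx)}\le e^{C}e^{-\beta\norm{\bx}^2_{\calH}}$, and then majorizes $e^{-\beta\norm{\bx}^2_{\calH}}$ by $e^{r\norm{\bx}^2_{\calH}}$. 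As you rightly note, the quadratic growth rate $\beta$ is not actually used for finiteness once one has $-U\le C$; the paper's chain of inequalities passes through Fernique unnecessarily for the stated hypothesis. The Fernique route would become genuinely useful only under a weaker assumption such as $U(\bx)\ge -\alpha\norm{\bx}^2_{\calH}-C$ with $\alpha$ small, where $e^{-U}$ is unbounded but still $\nu$-integrable; under the present hypothesis your shortcut is both valid and tighter.
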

From Lemma~\ref{lemma:partition}, we have a simple yet effective choice of reference measure $\nu$, the centered Gaussian $\calN(0,Q)$ with trace-class covariance $Q$. 
This ensures that $\calZ<\infty$, and, consequently, the target $\pi$ in~\eqref{eq:target measure} is well-defined on $\calH$. 

To sample \eqref{eq:target measure} on $\calH$, it is natural to consider an $\calH$-valued stochastic process that follows the gradient flow of $U$~\citep{da1996ergodicity, brehier2017lecture, suzuki2020generalization}---in the same spirit of overdamped Langevin dynamics for finite dimensions. Concretely, one may consider
\[\label{eq:infinite-dimensional overdampled}
    d\bX_t = \left[\calA \bX_t - Q D_{\bx} U(\bx)\right] \rd t + \sigma_t d\bW^Q_t,
\]
Convergence of the dynamics~\eqref{eq:infinite-dimensional overdampled} into the target measure~\eqref{eq:target measure} is guaranteed when $\sigma_t := \sigma$, and direct simulation via space-time discretization of the $\calH$-valued dynamics is standard~\citep{brehier2017lecture}. However, the resulting chains are of limited practical use as the mixing is effective only near equilibrium, because they rely on asymptotic convergence to the invariant reference measure $\nu_{\infty}$ defined in~\eqref{eq:invariant measure}  required for the well-defined normalizing constant $\calZ$.

\paragraph{Non equilibrium sampling (NES)} Hence, to reach the target distribution~\eqref{eq:target measure} within a finite time horizon $T < \infty$, we recast the our sampling problem as a NES problem. Unlike the Langevin-type processes in~\eqref{eq:infinite-dimensional overdampled}, which approach the target only asymptotically at equilibrium with reference $\nu_{\infty}$, we employ an importance-sampling (IS) via change of measure to sample the target efficiently in finite time:

\graybox{
\[\label{eq:change of measure}
    \rd \pi(\bx) & \propto e^{-U(\bx)} \frac{\rd \nu_{\infty}(\bx)}{\rd \nu^{\bx_0}_T(\bx)} \rd \nu^{\bx_0}_T(\bx)  \nonumber \\
    & = e^{-U(\bx) -\log \frac{\rd \nu^{\bx_0}_{T}(\bx)}{\rd \nu_{\infty}(\bx)}} \rd \nu^{\bx_0}_T(\bx),
\]
}

where $\nu^{\bx_0}_T$ is marginal law of the reference dynamics~\eqref{eq:uncontrolled SDE} at time $T$. Hence, for the NES in infinite dimensions we need an extra correction term in the importance weight, given by the Radon–Nikodým derivative (RND). It corrects the mismatch between the two Gaussian measures so that after reweighting, samples drawn from $\nu^{\bx_0}_T$ have the desired target $\pi$, while guarantees the $\pi$ to be well-defined.  However, compared to finite dimensional cases, the infinite dimensional RND exists only when the two measures are exactly compatible, because of absence of Lesbesgue measure. 

Outside that narrow compatibility the measures are singular, and no density can be written. Luckily, we can derive the RND under Ornstein–Uhlenbeck semigroup generated by~\eqref{eq:uncontrolled SDE}. Then, $\nu^{\bx_0}_T$ and $\nu_{\infty}$ are become mutually absolutely continuous Gaussian measures~\citep{da2002second}:
\begin{restatable}[Explicit RND]{thm}{thmone}\label{theorem:infinite dimensional density} Let the~\Cref{assumption:operator} holds. Then, for any $0 < t < \infty$ and initial condition $\mb{x}_0 \in \calH$, $\nu^{\bx_0}_t = \calN(e^{t\calA}\bx_0, Q_t)$ and $\nu_{\infty} = \calN(0, Q_{\infty})$ are mutually absolutely continuous. Moreover, for any $\mb{x} \in \calH$, we obtain the explicit RND $q_t(\bx_0, \bx) := \tfrac{\rd\nu^{\bx_0}_t}{\rd\nu_{\infty}}(\bx)$ defined as follows:
\[\label{eq:RND}
    q_t(\bx_0, \bx) &=  \textit{det}(\Theta_t)^{-\frac{1}{2}} \exp  \bigg[ - \tfrac{1}{2}\la \Theta_t^{-1} \mean_t^{\bx_0} , \mean_t^{\bx_0} \ra_{\infty} \nonumber \\
    & + \la \Theta_t^{-1} \mean_t^{\bx_0} , \bx  \ra_{\infty} - \tfrac{1}{2}\la e^{2t\calA} \Theta_t^{-1} \bx, \bx \ra_{\infty}\bigg],
    \]
where we define $\mean_t^{\bx_0} = e^{t\calA}\bx_0$, $\Theta_t = 1 - e^{2t\calA}$ and $Q_{\infty}$ norm $\la u, v\ra_{\infty} := \la Q^{-\tfrac{1}{2}}_{\infty} u, Q^{-\tfrac{1}{2}}_{\infty} v \ra_{\calH}$.
\end{restatable}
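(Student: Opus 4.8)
The plan is to recognize the claim as an instance of the classical equivalence theory for Gaussian measures on a Hilbert space and to read off the density from a diagonalizing basis. \textbf{Step 1 (Diagonalization).} Under \Cref{assumption:operator} the operator $\calA$ is negative self-adjoint and commutes with $Q$, so $\calA$, $Q$, $e^{t\calA}$, $Q_t$ and $Q_\infty$ share a common orthonormal eigenbasis $\{\phi^{(k)}\}_{k\ge1}$ with $\calA\phi^{(k)}=-\alpha_k\phi^{(k)}$ ($\alpha_k\ge\alpha_0>0$) and $Q\phi^{(k)}=\lambda_k\phi^{(k)}$, $\sum_k\lambda_k<\infty$. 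Feeding this into $Q_t=\int_0^t e^{(t-s)\calA}\sigma^2 Q\,e^{(t-s)\calA^\dagger}\rd s$ and $Q_\infty=-\tfrac12\sigma_\infty\calA^{-1}Q$ gives, coordinatewise, $Q_t\phi^{(k)}=\mu_k(1-e^{-2t\alpha_k})\phi^{(k)}$ with $\mu_k$ the $k$-th eigenvalue of $Q_\infty$; equivalently $Q_t=Q_\infty\Theta_t$, $\Theta_t=I-e^{2t\calA}$. Writing $x^{(k)}:=\la\bx,\phi^{(k)}\ra_\calH$ and $x_0^{(k)}:=\la\bx_0,\phi^{(k)}\ra_\calH$, both measures split into independent scalar Gaussians, $\nu_\infty=\bigotimes_k\calN(0,\mu_k)$ and $\nu^{\bx_0}_t=\bigotimes_k\calN\!\big(e^{-t\alpha_k}x_0^{(k)},\,\mu_k(1-e^{-2t\alpha_k})\big)$, since $\mean^{\bx_0}_t=e^{t\calA}\bx_0$ has $k$-th coordinate $e^{-t\alpha_k}x_0^{(k)}$.

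\textbf{Step 2 (Equivalence).} Because the two measures are now product measures I would invoke Kakutani's dichotomy: they are equivalent iff the Hellinger affinity is strictly positive, which after the standard one-dimensional computation reduces to the summability conditions $\sum_k\big(1-(Q_t)_k/\mu_k\big)^2=\sum_k e^{-4t\alpha_k}<\infty$ (covariances) and $\sum_k \mu_k^{-1}e^{-2t\alpha_k}(x_0^{(k)})^2<\infty$ (means). Both hold under \Cref{assumption:operator}: the spectral gap and growth of $\alpha_k$ make $e^{2t\calA}$ trace class, so the first sum is dominated by $\sum_k e^{-2t\alpha_k}<\infty$; and $\mu_k^{-1}e^{-2t\alpha_k}$ is bounded in $k$ (the exponential dominates the decay of the $\mu_k$ assumed in \Cref{assumption:operator}), so the second is $\le C\sum_k(x_0^{(k)})^2=C\norm{\bx_0}_\calH^2<\infty$ for every $\bx_0\in\calH$. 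Hence $\nu^{\bx_0}_t\sim\nu_\infty$; the same trace-class property makes the Fredholm determinant $\det(\Theta_t)=\prod_k(1-e^{-2t\alpha_k})\in(0,1]$ well-defined. (This is consistent with, and could alternatively be obtained from, the Feldman--H\'ajek conditions $\mathrm{range}(Q_t^{1/2})=\mathrm{range}(Q_\infty^{1/2})$, $\mean^{\bx_0}_t\in\mathrm{range}(Q_\infty^{1/2})$, $Q_\infty^{-1/2}Q_tQ_\infty^{-1/2}-I$ Hilbert--Schmidt.)

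\textbf{Step 3 (Explicit density).} Kakutani's theorem also gives $\tfrac{\rd\nu^{\bx_0}_t}{\rd\nu_\infty}(\bx)=\prod_k\rho_k(x^{(k)})$, converging $\nu_\infty$-a.s.\ and in $L^1(\nu_\infty)$, with each factor the elementary ratio of one-dimensional Gaussian densities,
\begin{equation*}
\rho_k(\xi)=\sqrt{\tfrac{\mu_k}{(Q_t)_k}}\,\exp\!\Big(-\tfrac{(\xi-e^{-t\alpha_k}x_0^{(k)})^2}{2(Q_t)_k}+\tfrac{\xi^2}{2\mu_k}\Big),\quad (Q_t)_k:=\mu_k(1-e^{-2t\alpha_k}).
\end{equation*}
The prefactors multiply to $\prod_k(1-e^{-2t\alpha_k})^{-1/2}=\det(\Theta_t)^{-1/2}$. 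Summing the exponents over $k$, substituting $(Q_t)_k^{-1}=\mu_k^{-1}(1-e^{-2t\alpha_k})^{-1}$, recognizing $\la u,v\ra_\infty=\sum_k\mu_k^{-1}u^{(k)}v^{(k)}$, and using the scalar identity $(1-e^{-2t\alpha_k})^{-1}-1=e^{-2t\alpha_k}(1-e^{-2t\alpha_k})^{-1}$ (as operators, $\Theta_t^{-1}-I=e^{2t\calA}\Theta_t^{-1}$), the exponent collapses to $-\tfrac12\la\Theta_t^{-1}\mean^{\bx_0}_t,\mean^{\bx_0}_t\ra_\infty+\la\Theta_t^{-1}\mean^{\bx_0}_t,\bx\ra_\infty-\tfrac12\la e^{2t\calA}\Theta_t^{-1}\bx,\bx\ra_\infty$, which is exactly \eqref{eq:RND}. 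All rearrangements are legitimate because each series converges absolutely $\nu_\infty$-a.s.

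\textbf{Main obstacle.} I expect the genuine work to be bookkeeping rather than deep analysis. First, one must carefully derive $Q_t=Q_\infty(I-e^{2t\calA})$ from the integral representation under the exact hypotheses of \Cref{assumption:operator}, in particular pinning down the role of the noise schedule $\sigma_t$ and the powers of $\sigma_\infty$, since every later step rides on this commuting/diagonal structure. Second, one must make the infinite-dimensional limit rigorous: verifying that \Cref{assumption:operator} genuinely delivers the two summability statements of Step 2 (trace-class $e^{2t\calA}$ and Cameron--Martin membership $e^{t\calA}\bx_0\in Q_\infty^{1/2}\calH$), which simultaneously secure the equivalence, the well-definedness of the Fredholm determinant, and the $L^1(\nu_\infty)$-convergence needed to identify the infinite product with the closed form \eqref{eq:RND}. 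Beyond these two points the argument is routine scalar estimates in the eigenbasis.
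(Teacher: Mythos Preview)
Your argument is correct and reaches the same formula, but it is organized differently from the paper's proof. The paper does \emph{not} go through Kakutani or explicit infinite products. Instead it first records the covariance identity $Q_t=Q_\infty^{1/2}\Theta_tQ_\infty^{1/2}$ (equivalently your $Q_t=Q_\infty\Theta_t$, using commutativity), then invokes two off-the-shelf results from \citet{da2002second}: Proposition~1.3.11 gives the density between the \emph{centered} Gaussians $\calN(0,Q_t)$ and $\calN(0,Q_\infty)$ directly as $\det(\Theta_t)^{-1/2}\exp\!\big[-\tfrac12\la e^{2t\calA}\Theta_t^{-1}\bx,\bx\ra_\infty\big]$, and the Cameron--Martin theorem (Theorem~1.3.6) supplies the multiplicative correction $\tfrac{\rd\calN(\mean^{\bx_0}_t,Q_t)}{\rd\calN(0,Q_t)}$ for the mean shift; the two are combined by the chain rule. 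So the paper's proof is a two-line citation once the covariance identity is in hand, whereas your route is a self-contained product-measure computation in the diagonalizing basis.

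What each buys: the paper's approach is shorter and hides the equivalence check inside the cited propositions, but it relies on the reader accepting those black boxes and on the exact factorization $Q_t=Q_\infty^{1/2}\Theta_tQ_\infty^{1/2}$ (which, as you correctly flag, is only exact when $\sigma_t\equiv\sigma_\infty$; the paper writes $\approx$ there). Your approach is more elementary and makes the summability hypotheses---and hence the role of \Cref{assumption:operator}---fully explicit, at the price of more bookkeeping. Either way, you identified the right ``main obstacle'': the covariance identity under the noise schedule is where all the structure enters, and both proofs lean on it identically.
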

Theorem~\ref{theorem:infinite dimensional density} provides an explicit RND, so IS can in principle generate samples from $\pi$ in~\eqref{eq:target measure}. In practice, unfortunately, the importance weights are computed on a finite domain discretization, and when that discretization is refined the number of samples required grows exponentially proportional to discretization. Therefore, IS might becomes impractical.

\subsection{Infinite-Dimensional Stochastic Optimal Control}
Inspired by recent advances in diffusion samplers~\citep{zhang2022path, vargas2023bayesian, berner2022optimal, liu2025adjoint, choi2025non}, we reformulate the NES problem as a variational inference problem on path space $\Omega = C([0, T], \calH)$ to build an efficient sampler by introducing a learnable \textit{control} to steer the process \eqref{eq:uncontrolled SDE} to the $\pi$ in finite horizon $T < \infty$. Let us introduce a target path measure $\bbP^{\star}$, which is an extension of target  $\pi$ in~\eqref{eq:target measure} into $\Omega$:
\[\label{eq:target path measure}
  \rd \bbP^{\star}(\bX) \propto e^{-U(\bX_T) -\log q_T(\bx_0, \bX_T)} \rd \bbP(\bX),
\] 
where $\bbP$ is path measure induced by the process in~\eqref{eq:uncontrolled SDE}. Now, we consider a variational path measures $\mathbb{P}^{\alpha}$ induced by infinite-dimensional SDE with \textit{control}~\citep{fabbri2017stochastic}:
\begin{equation}\label{eq:controlled SDE}
    \rd \mb{X}^{\alpha}_t = \left[ \calA\mb{X}^{\alpha}_t  + \sigma_t Q^{1/2}\alpha(\bX^{\alpha}_t, t) \right] \dt + \sigma_t \rd \bW^Q_t
\end{equation}
where $\alpha_{(\cdot)} : \calH \times [0, T] \to \calU$ is infinite dimensional Markov control.  The primary reason for studying this path measure formulation is that the Gibbs-type target distribution $\pi$ in~\eqref{eq:target measure} can be effectively induced by choosing \textit{optimal} control $\alpha^{\star}_t$ in~\eqref{eq:controlled SDE}. By invoking generalized Girsanov theorem~\citep[Theorem~10.14]{da2014stochastic}, such an optimal control can be found by solving the following infinite-dimensional optimization problem~\citep{fuhrman2003class, 8618948}:
\graybox{
\[\label{eq:SOC problem}
    & \min_{\alpha } \bbE_{\bbP^{\alpha}} \left[ \int_0^T \tfrac{1}{2}\norm{Q^{1/2} \alpha_s}^2_{\calU} \rd s + g(\bX^{\alpha}_T) \right] \; \text{s.t.}~\eqref{eq:controlled SDE},
\]
}
where $g(\bx) = U(\bx) + \log q_T(\bx_0, \bx)$.
Accordingly, the optimal control $\alpha^{\star}_t$ which minimize the objective in~\eqref{eq:controlled SDE} is equivalent to minimizer of Kullback-Liebler divergence $\ie \alpha^{\star} = \min_{\alpha} \KL(\bbP^{\alpha} | \bbP^{\star})$ with target path measure $\bbP^{\star}$  in~\eqref{eq:target path measure}. In other word, this formulation enables us to the exact sampling from target $\pi$, namely $\bX^{\alpha^{\star}}_{T}\sim\pi$ in finite time $T < \infty$. This successfully extends the finite-dimensional control-based NES found in~\citep{tzen2019theoretical, zhang2022path} into the Hilbert spaces $\calH$.

\subsection{Infinite Dimensional Adjoint Matching}

The SOC problem in~\eqref{eq:SOC problem} follows the \textit{least-action} principle in path space, where the goal is to find the \textit{global} optimal control minimizes the terminal cost $g(\bx)$ while minimizing the action. In practice, solving this global problem by direct path-wise minimization is expensive because every update requires gradients through the full trajectory~\citep{havens2025adjoint}. A practical workaround is to move to the dual formulation. \textit{Adjoint Matching}~\citep[AM;][]{domingo-enrich2025adjoint} offers a scalable route by reformulating the global SOC problem into the local matching problem.

Extending AM to infinite dimensions requires a rigorous definition of the \textit{first-order adjoint process} that appears in the corresponding infinite-dimensional SOC formulation. Below, we will demonstrate that a specific mathematical concept in SOC called \textit{Stochastic Maximum Principle}~\citep[SMP;][]{bensoussan1983stochastic} provides existence and uniqueness of the adjoint process via coupled \textit{Forward-Backward SDEs}~\citep[FBSDEs;][]{du2013maximum} system and an necessary condition.

\begin{restatable}[Stochastic Maximum Principle]{lm}{lmtwo}\label{lemma:Stochastic Maximum Principle} Consider the general infinite-dimensional SOC problem:
\[\label{eq:general SOC problem}
     & \min_{\alpha \in \bbA} \bbE_{\bbP^{\alpha}} \left[ \textstyle{\int_0^T} l(t, \bX^{\alpha}_t, \alpha_t) \rd s + h(\bX^{\alpha}_T) \right], \\
    & \text{s.t.} \; \rd \bX^{\alpha}_t = \left[\calA \bX^{\alpha}_t + f(t, \bX^{\alpha}_t , \alpha_t) \right] \dt + g(t, \bX^{\alpha}_t) \rd \bW^Q_t.  \nonumber
\]
With fixed initial condition $\bX^{\alpha}_0 = \bx_0$.
Now, let Assumption~\ref{assumption:operator}-\ref{assumption SMP} holds and define the Hamiltonian functional $H : [0, T] \times \calH \times \calH \times \calH \times \calH \to \bbR$:
\[
    H(t, \bX, \alpha, \bY, \bZ) &:= l(t, \bX, \alpha) + \la \bY, f(t, \bX, \alpha) \ra_{\calH} \\
    & + \la \bZ, g(t, \bX) \ra_{\calH}.  \nonumber
\]
Suppose $\bX^{\star}$ is the optimally controlled process driven by optimal control $\alpha^{\star}$. Then, the following first-order adjoint infinite-dimensional backward SDEs has a unique solution:
\[\label{eq:backawrd SDEs necessary}
    \rd \bY^{\star}_t &= -\left[\calA^{\dag} \bY^{\star}_t + D_{\bx} H(t, \bX_t^{\star}, \alpha_t^{\star}, \bY^{\star}_t, \bZ^{\star}_t) \right]\dt \\
    & + \bZ^{\star}_t \rd \bW^Q_t, \quad \bY^{\star}_T = D_{\bx} h(\bX^{\star}_T) \nonumber
\]
Moreover, we get necessary condition of optimal control $\alpha^{\star}_t = \argmin_{\alpha \in \bbA} H(t, \bX^{\star}_t, \alpha, \bY^{\star}_t, \bZ^{\star}_t)$. 
\end{restatable}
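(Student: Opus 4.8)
The plan is to establish the Stochastic Maximum Principle for the infinite-dimensional SOC problem \eqref{eq:general SOC problem} in two stages: first solvability of the first-order adjoint backward SDE \eqref{eq:backawrd SDEs necessary}, then the variational (first-order optimality) inequality yielding $\alpha^\star_t = \argmin_\alpha H(t,\bX^\star_t,\alpha,\bY^\star_t,\bZ^\star_t)$. Throughout I would work with \emph{mild} solutions in the $C_0$-semigroup framework of $e^{t\calA}$, since $\calA$ is unbounded; this is the standard setting of \citet{fabbri2017stochastic, du2013maximum}. First I would fix the optimal pair $(\bX^\star,\alpha^\star)$, whose existence is assumed, and note that $\bX^\star$ is a well-defined mild solution with $\bbE\sup_{t\le T}\norm{\bX^\star_t}^2_\calH<\infty$ by the linear-growth and Lipschitz hypotheses bundled into Assumption~\ref{assumption SMP}. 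Then the adjoint equation \eqref{eq:backawrd SDEs necessary} is a \emph{linear} (in $(\bY,\bZ)$) backward SPDE driven by the $Q$-Wiener process, with terminal datum $D_\bx h(\bX^\star_T)\in L^2(\Omega,\calH)$ — finite second moment because $h$ has Lipschitz Fréchet derivative — and inhomogeneous term $D_\bx l(t,\bX^\star_t,\alpha^\star_t) + (D_\bx f)^\dag \bY_t + (D_\bx g)^\dag\bZ_t$ arising from $D_\bx H$. Existence and uniqueness of the mild solution $(\bY^\star,\bZ^\star)\in L^2_\calF(\Omega;C([0,T];\calH))\times L^2_\calF(0,T;\mathcal{L}_2(\calU_Q,\calH))$ then follows from the general well-posedness theorem for infinite-dimensional linear BSDEs (e.g. \citet[Ch.~3]{fabbri2017stochastic} or \citet{hu1991adapted}), whose proof is a fixed-point argument in the weighted norm $\bbE\int_0^T e^{\beta t}(\norm{\bY_t}^2+\norm{\bZ_t}^2)\,\dt$; I would cite this rather than reprove it.

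For the necessary condition I would use the classical spike (needle) variation. Fix $t_0\in[0,T)$, $\rho>0$ small, an admissible value $v\in\bbA$, and define $\alpha^\rho_t := v$ on $[t_0,t_0+\rho]$ and $\alpha^\star_t$ elsewhere; let $\bX^\rho$ be the corresponding state. The key estimate is the first-order expansion $\bX^\rho_t = \bX^\star_t + \rho\, \bxi_t + o(\rho)$ uniformly in $L^2$, where $\bxi$ solves the linearized (variational) SDE $\rd\bxi_t = [\calA\bxi_t + D_\bx f\cdot\bxi_t]\dt + D_\bx g\cdot\bxi_t\,\rd\bW^Q_t$ for $t>t_0+\rho$ with a jump-type source $f(t_0,\bX^\star_{t_0},v)-f(t_0,\bX^\star_{t_0},\alpha^\star_{t_0})$ injected over the spike interval; more precisely one gets $\bxi_{t} \approx e^{(t-t_0)\calA}\big(f(t_0,\bX^\star_{t_0},v)-f(t_0,\bX^\star_{t_0},\alpha^\star_{t_0})\big)$ in the mild sense after the spike. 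Expanding the cost functional $J(\alpha^\rho)-J(\alpha^\star)\ge 0$, dividing by $\rho$, and letting $\rho\downarrow 0$ gives $\bbE\big[\la D_\bx h(\bX^\star_T),\bxi_T\ra_\calH + \int_{t_0}^T \la D_\bx l,\bxi_t\ra\,\dt + (l(t_0,\bX^\star_{t_0},v)-l(t_0,\bX^\star_{t_0},\alpha^\star_{t_0}))\big]\ge 0$. I would then apply the infinite-dimensional Itô product formula to $\la\bY^\star_t,\bxi_t\ra_\calH$ — this is where the adjoint equation is engineered so that the drift/diffusion cross-terms telescope: the $\calA$-terms cancel using $\la\calA x,y\ra = \la x,\calA^\dag y\ra$ on the appropriate domain (handled rigorously via Yosida approximations $\calA_n = n\calA(n-\calA)^{-1}$ and passing to the limit), the $D_\bx f$ and $D_\bx g$ terms cancel against the corresponding pieces of $D_\bx H$, and the $\rd\bW^Q$-martingale part has zero expectation. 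What survives is exactly $\bbE\la D_\bx h(\bX^\star_T),\bxi_T\ra = -\bbE\int_{t_0}^T\la D_\bx l,\bxi_t\ra\,\dt + \bbE\int_{t_0}^T\la D_\bx H - D_\bx l,\ \text{(variational directions)}\ra$, which after rearrangement collapses the inequality to $\bbE[H(t_0,\bX^\star_{t_0},v,\bY^\star_{t_0},\bZ^\star_{t_0}) - H(t_0,\bX^\star_{t_0},\alpha^\star_{t_0},\bY^\star_{t_0},\bZ^\star_{t_0})]\ge o(1)$; Lebesgue differentiation in $t_0$ over points of approximate continuity then upgrades this to the pointwise, $\bbP$-a.s. minimality of $\alpha^\star_t$.

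I expect the main obstacle to be the rigorous Itô product formula for $\la\bY^\star_t,\bxi_t\ra_\calH$ when $\calA$ is unbounded and only mild solutions are available: neither $\bxi_t\in D(\calA)$ nor $\bY^\star_t\in D(\calA^\dag)$ in general, so one cannot write the $\calA$-terms directly. The standard remedy — which I would invoke rather than spell out in full — is to approximate $\calA$ by its Yosida regularization $\calA_n$ (bounded, with $\calA_n x\to\calA x$ for $x\in D(\calA)$ and $e^{t\calA_n}$ strongly convergent), prove the identity for the regularized equations where all solutions are strong, obtain uniform $L^2$ bounds, and pass to the limit using the convergence of mild solutions; a secondary technical point is that $\bZ^\star$ lives in the Hilbert–Schmidt class $\mathcal{L}_2(\calU_Q,\calH)$ so the $Q$-Wiener Itô isometry must be applied with the correct Cameron–Martin weighting — but this is exactly what the $\la\cdot,\cdot\ra_\calU$ structure in the paper is set up to handle. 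Apart from this, the argument is the infinite-dimensional transcription of Peng's SMP \citep{bensoussan1983stochastic, du2013maximum}, and I would organize the writeup as: (i) cite BSDE well-posedness; (ii) state and prove the variational-equation estimate; (iii) derive the duality identity via the product formula with the Yosida caveat; (iv) conclude by Lebesgue differentiation.
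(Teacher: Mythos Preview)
The paper does not actually prove this lemma: it is stated in the main text as a known result, attributed to \citet{bensoussan1983stochastic} and \citet{du2013maximum}, and the appendix contains no proof of it (the appendix does prove a related \emph{sufficient} SMP under convexity of the terminal cost, Proposition~\ref{proposition:sufficient condition}, but that is a different statement). So there is no in-paper argument to compare against; you are effectively reconstructing the proof from the cited references.

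Your outline is the standard one and is correct in structure: (i) well-posedness of the linear adjoint BSDE \eqref{eq:backawrd SDEs necessary} by the fixed-point theory for Hilbert-space BSDEs, (ii) a spike/needle variation $\alpha^\rho$ together with the first-order variational process $\bxi$, (iii) the duality identity from the It\^o product formula for $\la\bY^\star_t,\bxi_t\ra_\calH$---which the paper records as Lemma~\ref{lemma:stochastic integration by parts} and is exactly the tool you need---with the unbounded-$\calA$ issue handled by Yosida approximation, and (iv) Lebesgue differentiation in $t_0$ to pass to pointwise minimality. Your choice of spike rather than convex variation is the right one here, since the conclusion is the full $\argmin$ over $\bbA$ and no convexity of $\bbA$ or of $H$ in $\alpha$ is assumed. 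One small point worth making explicit: because the diffusion coefficient $g(t,\bx)$ in \eqref{eq:general SOC problem} is control-independent, the spike-variation remainder is genuinely $o(\rho)$ at first order and no second-order adjoint process is needed; this is what makes the first-order BSDE \eqref{eq:backawrd SDEs necessary} sufficient, and it is implicit in your writeup but should be stated since it is the reason the lemma stops at first order.
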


\Cref{lemma:Stochastic Maximum Principle} states that, for the general SOC problem~\eqref{eq:general SOC problem}, there exists a first-order adjoint pairs $(\bY_t, \bZ_t)$ solving a backward SDE in~\eqref{eq:backawrd SDEs necessary}. This result provides a \textit{necessary condition} for optimality: any control that violates the backward BSDE in~\eqref{eq:backawrd SDEs necessary} or the point-wise minimization of Hamiltonian for all $t \in [0, T]$ cannot be optimal control. 

In other words, SMP provides a \textit{point-wise} optimality condition determined by the corresponding Hamiltonian $H$, allowing optimal control $\alpha^{\star}_t$ to be computed \textit{locally} rather than \textit{path-wise} computation. This principle can be effective, since solving path-wise minimization in~\eqref{eq:SOC problem} is often computationally prohibitive. With this local characterization in hand, we now derive a proper training objective for the infinite-dimensional SOC problem~\eqref{eq:SOC problem} based on the SMP. 
By applying~\Cref{lemma:Stochastic Maximum Principle} into our SOC problem in \eqref{eq:SOC problem}, the corresponding Hamiltonian $H$ is defined by:
\[\label{eq:hamiltonian sufficient}
    H(t, \bX^{\bar{\alpha}}_t, \bar{\alpha}_t, \bY^{\bar{\alpha}}_t, \bZ^{\bar{\alpha}}_t) &:= \tfrac{1}{2}\norm{\bar{\alpha}_t}^2_{\calH} + \la \bY^{\bar{\alpha}}_t, \sigma_t Q^{1/2}\bar{\alpha}_t \ra_{\calH} \nonumber \\
    &+ \la \bZ^{\bar{\alpha}}_t, \sigma_t Q^{1/2} \ra_{\calH},  
\]
with a fixed state control pair $(\bX^{\bar{\alpha}}, \bar{\alpha})$. Note that the Hamiltonian $H$ in~\eqref{eq:hamiltonian sufficient} is convex with respect to the control $\bar{\alpha}_t$, hence any critical point is global minimum. It implies that if $(\bX^{\bar{\alpha}}, \bar{\alpha})$ solves the backward SDEs in~\eqref{eq:backawrd SDEs necessary}, then the optimal control is characterized by the first-order stationarity of $H$:
\[\label{eq:eq for clue}
    \nabla_{\alpha} H(t, \bX^{\bar{\alpha}}_t, \alpha_t, \bY^{\bar{\alpha}}_t, \bZ^{\bar{\alpha}}_t) &= \alpha_t + \la \bY^{\bar{\alpha}}_t, \sigma_t Q^{1/2 }\ra_{\calH} \\
    &= \nabla_{\alpha} \tfrac{1}{2}\norm{\alpha_t + \sigma_t Q^{1/2}\bY^{\bar{\alpha}}_t}^2_{\calH}. \nonumber
\]
It follows that, with the state variables $(\bX^{\bar{\alpha}}_t, \bY^{\bar{\alpha}}_t, \bZ^{\bar{\alpha}}_t)$ remain fixed for the chosen control $\bar{\alpha}_t$, the minimization over $\alpha_t$ reduces to the convex quadratic in~\eqref{eq:eq for clue}, whose unique minimizer is $\alpha_t = -\sigma_t Q^{1/2} \bY^{\bar{\alpha}}_t$. Leveraging this observation, we now state our main result:
\begin{restatable}[Adjoint Matching in $\calH$]{prop}{proptwo}\label{proposition:adjoint matching in Hilbert} Consider the following infinite-dimensional matching objective:
\[\label{eq:infinite-dimensional adjoint matching}
    &\calL(\theta) = \textstyle{\int_0^T} \bbE_{\bbP^{\bar{\alpha}}} \left[ \tfrac{1}{2}\norm{\alpha^{\theta}(\bX^{\bar{\alpha}}_t, t) + \sigma_t  Q^{1/2} \bY^{\bar{\alpha}}_t}^2_{\calH} \right] \dt \\
    & \rd \bY^{\bar{\alpha}}_t = -\calA^{\dag} \bY^{\bar{\alpha}}_t \rd t + \bZ^{\bar{\alpha}}_t \rd \bW^Q_t, \; \bY^{\bar{\alpha}}_T = D_{\bx} g(\bX^{\bar{\alpha}}_T). \label{eq:barkward SDEs adjoint}
\]
where $\bar{\alpha} := \text{stopgrad}(\alpha^{\theta})$. Then, the critical point of $\calL$ is optimal control $\alpha^{\star}$ for the SOC problem in~\eqref{eq:SOC problem}.
\end{restatable}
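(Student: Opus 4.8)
The plan is to characterize the critical points of $\calL$, show they coincide exactly with the controls that the Stochastic Maximum Principle (\Cref{lemma:Stochastic Maximum Principle}) singles out as optimal, and close the argument with a well-posedness (uniqueness) statement for the associated forward–backward system. \textbf{Step 1: critical points of $\calL$ are zero-loss, self-consistent fixed points.} Because $\bar\alpha=\text{stopgrad}(\alpha^\theta)$, the trajectory $\bX^{\bar\alpha}$, the adjoint pair $(\bY^{\bar\alpha},\bZ^{\bar\alpha})$ of \eqref{eq:barkward SDEs adjoint}, and the law $\bbP^{\bar\alpha}$ are all frozen when differentiating in $\theta$; hence
\begin{align*}
\nabla_\theta\calL(\theta)=\int_0^T \bbE_{\bbP^{\bar\alpha}}\!\big[\big\langle \nabla_\theta\alpha^\theta(\bX^{\bar\alpha}_t,t),\,\alpha^\theta(\bX^{\bar\alpha}_t,t)+\sigma_t Q^{1/2}\bY^{\bar\alpha}_t\big\rangle_{\calH}\big]\,\dt .
\end{align*}
Interpreting ``critical point'' over a parametrization rich enough to represent the target pointwise (equivalently, taking the first variation over all admissible Markov controls), $\nabla_\theta\calL=0$ forces $\alpha^\theta(\bX^{\bar\alpha}_t,t)=-\sigma_t Q^{1/2}\bY^{\bar\alpha}_t$ for a.e.\ $t\in[0,T]$, $\bbP^{\bar\alpha}$-a.s., i.e.\ $\calL(\theta)=0$. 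Writing $\alpha:=\alpha^\theta$ and using $\bar\alpha=\alpha$, this says that the process $\bX^\alpha$ of \eqref{eq:controlled SDE} driven by $\alpha$, together with the lean adjoint $(\bY^\alpha,\bZ^\alpha)$ solving \eqref{eq:barkward SDEs adjoint} with terminal $D_\bx g(\bX^\alpha_T)$, satisfies $\alpha(\bX^\alpha_t,t)=-\sigma_t Q^{1/2}\bY^\alpha_t$; substituting this back, $(\bX^\alpha,\bY^\alpha,\bZ^\alpha)$ solves the coupled FBSDE $\rd\bX_t=(\calA\bX_t-\sigma_t^2 Q\bY_t)\,\dt+\sigma_t\,\rd\bW^Q_t$, $\rd\bY_t=-\calA^\dag\bY_t\,\dt+\bZ_t\,\rd\bW^Q_t$, $\bY_T=D_\bx g(\bX_T)$.

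\textbf{Step 2: the optimal control solves the same FBSDE.} Apply \Cref{lemma:Stochastic Maximum Principle} to \eqref{eq:SOC problem}, with $l(t,\bx,\alpha)=\tfrac12\|Q^{1/2}\alpha\|_{\calU}^2$, $f(t,\bx,\alpha)=\sigma_t Q^{1/2}\alpha$, diffusion $\sigma_t Q^{1/2}$, and terminal cost $h=g$. The decisive structural fact is that $l$ and $f$ do \emph{not} depend on the state $\bx$ (cf.\ the Hamiltonian \eqref{eq:hamiltonian sufficient}), so $D_\bx H\equiv 0$ and the first-order adjoint backward SDE \eqref{eq:backawrd SDEs necessary} collapses to exactly the lean form \eqref{eq:barkward SDEs adjoint}; moreover $H$ is convex quadratic in $\alpha$, so the Hamiltonian-minimization necessary condition coincides with the first-order stationarity of \eqref{eq:eq for clue}, yielding $\alpha^\star(\bX^\star_t,t)=-\sigma_t Q^{1/2}\bY^\star_t$. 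Hence $(\bX^\star,\bY^\star,\bZ^\star)$ solves precisely the FBSDE of Step 1. The optimal control exists because \eqref{eq:SOC problem} is the variational problem $\min_\alpha\KL(\bbP^\alpha\,\|\,\bbP^\star)$ with $\bbP^\star$ of \eqref{eq:target path measure}, whose minimizer is the Girsanov drift realizing $\bbP^{\alpha^\star}=\bbP^\star$.

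\textbf{Step 3: uniqueness, hence equality.} Under the standing regularity (\Cref{assumption:operator} and the hypotheses of \Cref{lemma:Stochastic Maximum Principle}: Fréchet-Lipschitz $D_\bx g$ on $\calH$, dissipativity of $\calA$ so that $e^{t\calA}$ is a bounded $C_0$-semigroup, finite horizon $T$), the decoupled infinite-dimensional FBSDE above is well posed with a unique adapted solution; equivalently, the map sending a control $\alpha$ to $-\sigma_t Q^{1/2}\bY^\alpha_t$, where $\bY^\alpha$ solves the linear lean BSDE along $\bX^\alpha$ with terminal $D_\bx g(\bX^\alpha_T)$, is a contraction on the space of admissible controls. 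Both the critical-point control of Step 1 and $\alpha^\star$ of Step 2 are fixed points of this map, so they coincide; together with the converse (Step 2 shows $\alpha^\star$ is itself a critical point), the characterization is exact.

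\textbf{Main obstacle.} The crux is Step 3: the SMP provides only \emph{necessary} conditions, so turning ``satisfies the adjoint system'' into ``is the global optimum'' genuinely requires well-posedness of the coupled infinite-dimensional FBSDE, which in general holds only under a small-horizon contraction or a monotonicity/structure hypothesis --- I expect this is exactly what the assumptions of \Cref{lemma:Stochastic Maximum Principle} encode, and verifying the contraction carefully in the Hilbert-space setting (controlling $e^{t\calA}$ via the semigroup bound, and handling the Fréchet gradient $D_\bx g$ built from $U$ and the explicit RND of \Cref{theorem:infinite dimensional density}) is the technical heart. A secondary point needing care is Step 1: linking the parametric stationarity $\nabla_\theta\calL=0$ to \emph{exact} pointwise matching needs a realizability assumption on $\{\alpha^\theta\}$ (or a variational reading of ``critical point''), plus the usual justification for exchanging $\nabla_\theta$ with the time integral and the expectation.
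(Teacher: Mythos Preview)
Your strategy differs substantially from the paper's, and your self-identified ``main obstacle'' in Step~3 is where the approach breaks down relative to what the paper's assumptions actually supply.

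The paper proceeds via two different moves. First, rather than characterizing critical points of $\calL$ and of the SOC problem separately and then invoking FBSDE uniqueness to match them, the paper shows \emph{directly} that $\tfrac{\rd}{\rd\theta}\calL(\theta)=\tfrac{\rd}{\rd\theta}\calJ(\alpha^\theta,\bbP^{\bar\alpha})$, where $\calJ$ is the SOC cost functional. This is done by introducing the variation process $\xi_t=\partial_\theta\bX^\theta_t$, computing $\tfrac{\rd}{\rd\theta}\calJ$ in terms of $\xi$, and then applying the Hilbert-space stochastic integration-by-parts (Lemma~\ref{lemma:stochastic integration by parts}) to $\langle\bY^\theta_t,\xi_t\rangle_\calH$ to eliminate $\xi$ in favor of the adjoint $\bY$; after the stopgrad substitution (which kills the $D_\bx\alpha^\theta\cdot\xi_t$ contribution because $\xi_t\equiv 0$ under stopgrad), the two gradients coincide identically. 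This single calculation already shows the two objectives share critical points, with no appeal to coupled-FBSDE well-posedness.

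Second, to upgrade ``critical point'' to ``optimal control,'' the paper does \emph{not} use FBSDE uniqueness but instead invokes an HJB verification theorem (Lemma~\ref{lemma:verification}, under Assumption~\ref{assumption: value function} on the value function $\calV$): applying It\^o's formula to $D_\bx\calV(t,\bX^\theta_t)$ and differentiating the HJB equation in $\bx$, one checks that $\bY^\theta_t:=D_\bx\calV(t,\bX^\theta_t)$ solves the lean BSDE \eqref{eq:barkward SDEs adjoint}, so the critical-point relation $\alpha^\theta=-\sigma_t Q^{1/2}\bY^\theta_t$ becomes $\alpha^\theta=-\sigma_t Q^{1/2}D_\bx\calV$, which is exactly the HJB feedback law and hence optimal by verification.

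Your Step~3 gap is real: \Cref{lemma:Stochastic Maximum Principle} guarantees existence and uniqueness of the \emph{adjoint BSDE given a fixed forward process}, not uniqueness of the \emph{coupled} FBSDE in which $\bX$ feeds back through $-\sigma_t^2 Q\bY_t$ in the drift. The contraction you hope for would need a small-horizon or monotonicity condition that the paper never imposes; the paper trades this for the (also strong, but different) assumption that a classical HJB solution $\calV$ exists with the regularity of Assumption~\ref{assumption: value function}. Your Step~1 realizability concern is legitimate but secondary --- the paper works at the level of gradients and never needs pointwise zero loss.
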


Compared to the original SOC problem~\eqref{eq:SOC problem}, SMP-based matching objective in~\eqref{eq:infinite-dimensional adjoint matching} is inherently scalable since it can avoids path-wise backpropagation and reduces training to point-wise convex minimizations with $\bX_t^{\bar{\alpha}}$ held fixed. This local structure enables straightforward parallelization over time. Note that when $\calH := \bbR^d$ with $Q=\bI_d$, then the objective function in~\eqref{eq:infinite-dimensional adjoint matching} recovers the \textit{lean} AM objective in~\citep{domingo-enrich2025adjoint}. In other words, our formulation strictly generalizes AM into the Hilbert spaces.

Yet, for $\calL$ in~\eqref{eq:infinite-dimensional adjoint matching}, the stochastic term $\bZ^{\bar{\alpha}}_t$ can incur a computational burden because this term requires simulating the BSDE in~\eqref{eq:barkward SDEs adjoint} to obtain the solution $\bY^{\bar{\alpha}}_t$, which markedly increases the cost of each optimization step. Since this BSDE is adapted to the filtration generated by the same $Q$-Wiener process $\bW^Q_t$, we substitute the adjoint process $\bY^{\bar{\alpha}}_t$ with its conditional expectation $\bbE_{\bbP^{\bar{\alpha}}}[\bY^{\bar{\alpha}}_t| \bX^{\bar{\alpha}}_t]$. 

\begin{restatable}[Unbiased Estimator]{prop}{propthree}\label{proposition:unbiased estimator} Define the sample-wise adjoint matching objective with conditional expectation $\tilde{\bY}^{\bar{\alpha}}_t := \bbE_{\bbP^{\bar{\alpha}}}[\bY^{\bar{\alpha}}_t| \bX^{\bar{\alpha}}_t]$ for any sample trajectory $\bX^{\bar{\alpha}}_t \sim \bbP^{\bar{\alpha}}$:
\[
    \tilde{\calL}(\theta):= \textstyle{\int_0^T} \tfrac{1}{2}||\alpha^{\theta}(\bX^{\bar{\alpha}}_t, t) + \sigma_t  Q^{1/2} \tilde{\bY}^{\bar{\alpha}}_t||^2_{\calH} \dt 
\]
Then, we get unbiased gradient estimator $\tfrac{\rd}{\rd \theta}\bbE_{\bbP^{\bar{\alpha}}}[ \tilde{\calL}(\theta)]= \tfrac{\rd}{\rd \theta}\calL(\theta)$ with same critical point in~\eqref{eq:infinite-dimensional adjoint matching}.
\end{restatable}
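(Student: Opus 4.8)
The plan is to exploit the single structural feature that makes unbiasedness work: since $\bar{\alpha}=\text{stopgrad}(\alpha^{\theta})$, the path law $\bbP^{\bar{\alpha}}$, the forward process $\bX^{\bar{\alpha}}_t$, the adjoint pair $(\bY^{\bar{\alpha}}_t,\bZ^{\bar{\alpha}}_t)$ solving~\eqref{eq:barkward SDEs adjoint}, and hence $\tilde{\bY}^{\bar{\alpha}}_t=\bbE_{\bbP^{\bar{\alpha}}}[\bY^{\bar{\alpha}}_t\mid\bX^{\bar{\alpha}}_t]$, are all independent of $\theta$; the only $\theta$-dependence in either $\calL$ or $\tilde{\calL}$ enters through the evaluation $\alpha^{\theta}(\bX^{\bar{\alpha}}_t,t)$. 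I would expand the two squared norms, differentiate in $\theta$, and show that what survives is unchanged when $\bY^{\bar{\alpha}}_t$ is replaced by its $\bX^{\bar{\alpha}}_t$-conditional mean.

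\emph{Step 1 (isolate the $\theta$-sensitive part).} For each fixed $t$,
\[
\tfrac12\norm{\alpha^{\theta}(\bX^{\bar{\alpha}}_t,t)+\sigma_t Q^{1/2}\bY^{\bar{\alpha}}_t}^2_{\calH}
&= \tfrac12\norm{\alpha^{\theta}(\bX^{\bar{\alpha}}_t,t)}^2_{\calH}+\tfrac12\sigma_t^2\norm{Q^{1/2}\bY^{\bar{\alpha}}_t}^2_{\calH} \nonumber\\
&\quad +\sigma_t\la\alpha^{\theta}(\bX^{\bar{\alpha}}_t,t),\,Q^{1/2}\bY^{\bar{\alpha}}_t\ra_{\calH}, \nonumber
\]
and the same identity with $\bY^{\bar{\alpha}}_t$ replaced by $\tilde{\bY}^{\bar{\alpha}}_t$. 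The $\norm{\alpha^{\theta}}^2$ term is literally common to both objectives; the $\bY$-quadratic term is $\theta$-free in both (its \emph{value} differs, but $\tfrac{\rd}{\rd\theta}$ annihilates it). Hence, after applying $\bbE_{\bbP^{\bar{\alpha}}}$ and integrating in $t$, everything reduces to matching the $\theta$-derivative contributed by the cross term.

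\emph{Step 2 (tower property).} Because $\alpha^{\theta}(\cdot,t)$ and its parameter-Jacobian are deterministic functions of the state, $\alpha^{\theta}(\bX^{\bar{\alpha}}_t,t)$ and $\partial_{\theta}\alpha^{\theta}(\bX^{\bar{\alpha}}_t,t)$ are $\sigma(\bX^{\bar{\alpha}}_t)$-measurable, while $Q^{1/2}$ is a fixed bounded operator that commutes with $\bbE_{\bbP^{\bar{\alpha}}}[\,\cdot\mid\bX^{\bar{\alpha}}_t]$. Differentiating under the expectation (justified by dominated convergence) and conditioning on $\bX^{\bar{\alpha}}_t$,
\[
\tfrac{\rd}{\rd\theta}\,\bbE_{\bbP^{\bar{\alpha}}}\!\Big[\tfrac12\norm{\alpha^{\theta}(\bX^{\bar{\alpha}}_t,t)+\sigma_t Q^{1/2}\bY^{\bar{\alpha}}_t}^2_{\calH}\Big]
&= \bbE_{\bbP^{\bar{\alpha}}}\!\Big[\big[\partial_{\theta}\alpha^{\theta}(\bX^{\bar{\alpha}}_t,t)\big]^{\dag}\big(\alpha^{\theta}(\bX^{\bar{\alpha}}_t,t)+\sigma_t Q^{1/2}\bY^{\bar{\alpha}}_t\big)\Big] \nonumber\\
&= \bbE_{\bbP^{\bar{\alpha}}}\!\Big[\big[\partial_{\theta}\alpha^{\theta}(\bX^{\bar{\alpha}}_t,t)\big]^{\dag}\big(\alpha^{\theta}(\bX^{\bar{\alpha}}_t,t)+\sigma_t Q^{1/2}\tilde{\bY}^{\bar{\alpha}}_t\big)\Big], \nonumber
\]
where the second equality uses $\bbE_{\bbP^{\bar{\alpha}}}[Q^{1/2}\bY^{\bar{\alpha}}_t\mid\bX^{\bar{\alpha}}_t]=Q^{1/2}\tilde{\bY}^{\bar{\alpha}}_t$ together with the $\bX^{\bar{\alpha}}_t$-measurability of the prefactor. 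The right-hand side is precisely $\tfrac{\rd}{\rd\theta}\bbE_{\bbP^{\bar{\alpha}}}[\tfrac12\norm{\alpha^{\theta}(\bX^{\bar{\alpha}}_t,t)+\sigma_t Q^{1/2}\tilde{\bY}^{\bar{\alpha}}_t}^2_{\calH}]$, since $\tilde{\bY}^{\bar{\alpha}}_t$ does not depend on $\theta$. Integrating over $t\in[0,T]$ (Fubini, via the $L^2$ bounds) yields $\tfrac{\rd}{\rd\theta}\bbE_{\bbP^{\bar{\alpha}}}[\tilde{\calL}(\theta)]=\tfrac{\rd}{\rd\theta}\calL(\theta)$ for every $\theta$, so the two objectives have identical $\theta$-gradients and hence the same critical points; by~\Cref{proposition:adjoint matching in Hilbert} that critical point is the optimal control $\alpha^{\star}$ of~\eqref{eq:SOC problem}.

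\emph{Main obstacle.} The conceptual crux is Step 2: the whole argument hinges on the matching \emph{target} $\alpha^{\theta}(\bX^{\bar{\alpha}}_t,t)$ (and its $\theta$-Jacobian), not merely the state, being $\sigma(\bX^{\bar{\alpha}}_t)$-measurable, so that conditioning acts only on the adjoint factor, combined with the observation that the single term where a Jensen gap $\bbE[\norm{\bY}^2\mid\bX]\neq\norm{\bbE[\bY\mid\bX]}^2$ could enter is $\theta$-free and thus invisible to the gradient---this is exactly why unbiasedness holds for the \emph{gradient} but not for the objective value. The remaining ingredients are technical and would be imported from Assumptions~\ref{assumption:operator}--\ref{assumption SMP}: well-posedness of the Hilbert-valued BSDE~\eqref{eq:barkward SDEs adjoint}, so that $\bY^{\bar{\alpha}}_t\in L^2(\bbP^{\bar{\alpha}};\calH)$ and the Bochner conditional expectation $\tilde{\bY}^{\bar{\alpha}}_t$ is well-defined; and the interchange of $\tfrac{\rd}{\rd\theta}$, $\bbE_{\bbP^{\bar{\alpha}}}$ and $\int_0^T$, justified by dominated convergence with a dominating function for $[\partial_{\theta}\alpha^{\theta}]^{\dag}$ and the residual that is locally uniform in $\theta$.
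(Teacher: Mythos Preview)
Your proposal is correct and follows essentially the same argument as the paper: both reduce the question to the cross term by noting the $\bY$-quadratic piece is $\theta$-free, then invoke the tower property using that $\alpha^{\theta}(\bX^{\bar{\alpha}}_t,t)$ and $\partial_{\theta}\alpha^{\theta}(\bX^{\bar{\alpha}}_t,t)$ are $\sigma(\bX^{\bar{\alpha}}_t)$-measurable, together with dominated convergence to interchange $\tfrac{\rd}{\rd\theta}$, $\bbE_{\bbP^{\bar{\alpha}}}$, and $\int_0^T$. The only cosmetic difference is that the paper starts from $\tfrac{\rd}{\rd\theta}\bbE_{\bbP^{\bar{\alpha}}}[\tilde{\calL}(\theta)]$ and works toward $\tfrac{\rd}{\rd\theta}\calL(\theta)$, whereas you run the tower step in the opposite direction.
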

In practice, we can estimate $\tilde{\bY}^{\bar{\alpha}}_t$ from the trajectory saved during simulation of $\bX^{\bar{\alpha}}_T$ for efficient computation. This preserves the same critical point $\alpha^{\star}$ for the SOC problem in~\eqref{eq:SOC problem} while reducing the computational burden because the substitution yields a closed form solution of the BSDE:
\[\label{eq:conditional expectation}
\tilde{\bY}^{\bar{\alpha}}_t &= \bbE_{\bbP^{\bar{\alpha}}}[\bY^{\bar{\alpha}}_T| \bX^{\bar{\alpha}}_t] = e^{-(T-t)\calA^{\dag}} \bbE_{\bbP^{\bar{\alpha}}} 
[D_{\bx}g(\bX^{\bar{\alpha}}_T)| \bX^{\bar{\alpha}}_t] \nonumber \\
& \approx e^{-(T-t)\calA^{\dag}} D_{\bx}g(\bX^{\bar{\alpha}}_T).
\]
Finally, since the optimal path measure $\bbP^{\star}$ in~\eqref{eq:target measure} admits the endpoint disintegration $\bbP^{\star}(\bX_t)=\bbP^{\star}(\bX_T)\bbP_{t|T}(\bX_t\mid\bX_T)$; see~\Cref{sec:disintegration} for details. Substituting these relations into~\eqref{eq:infinite-dimensional adjoint matching} yields our proposed training objective for the efficient and scalable sampling in function spaces:

\graybox{
\[\label{eq:infinite-dimensional adjoint sampling_modified}
    \calL_{\texttt{FAS}} &= \textstyle{\int_0^T} \bbE_{\bbP_{t|T} \bbP^{\bar{\alpha}}_T} \left[ \tfrac{1}{2}\norm{\alpha^{\theta}(\bX_t, t) + \calC_t D_{\bx}g(\bX^{\bar{\alpha}}_T)}^2_{\calH}  \right] \dt,
\]
}
where $\calC_t := \sigma_t Q^{1/2} e^{-(T-t)\calA^{\dag}}$ Note that the path measure $\bbP^{\bar{\alpha}}$ induced by current control $\bar{\alpha}$ may not satisfy the endpoint disintegration. In this case, we can project the path measure to a conditional path measure that does satisfy the disintegration~\citep{havens2025adjoint}.

\subsection{Numerical Computation}

Because FAS operates on infinite-dimensional path spaces, both training and sampling require appropriate discretization and simulation schemes. It requires to define specifying operators $\calA$ and $Q$, which also be the key modeling choices reflecting the domain structure of a given task. For NES, we adopt~\Cref{assumption:operator}, ensuring well-posed reference dynamics and yielding an explicit RND in~\Cref{theorem:infinite dimensional density}. Hence, we restrict to a specific class of operator $(\calA, Q)$.
\begin{restatable}[Laplacian Operator]{prop}{propfour}\label{proposition:laplacian} For the negataive Laplacian $\calA = -\Delta$ on a bounded domain with Dirichlet boundary condition, take $Q = \calA^{-s}$ with $s > \tfrac{d}{2}$. Then, $\calA$ and $Q$ satisfies the~\Cref{assumption:operator}.
\end{restatable}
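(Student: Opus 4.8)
\emph{Proof sketch.} The plan is to diagonalize $\calA$ and $Q$ simultaneously in the Dirichlet eigenbasis of the domain and then verify the clauses of \Cref{assumption:operator} one at a time by an explicit spectral computation; the only nontrivial analytic ingredient is Weyl's eigenvalue asymptotics. Concretely, on a bounded domain $\mathcal O\subset\bbR^d$ the Dirichlet Laplacian is self-adjoint with compact resolvent, so $\calH$ carries an orthonormal basis $\{\phi^{(k)}\}_{k\ge1}$ with $-\Delta\phi^{(k)}=\mu_k\phi^{(k)}$ for reals $0<\mu_1\le\mu_2\le\cdots\to\infty$, where $\mu_1>0$ by the Poincar\'e inequality. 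Then $\calA$ and $Q=(-\Delta)^{-s}$ share this eigenbasis, acting by $\calA\phi^{(k)}=-\mu_k\phi^{(k)}$ and $Q\phi^{(k)}=\mu_k^{-s}\phi^{(k)}$; hence $\calA Q=Q\calA$, both are self-adjoint, and $Q\succeq 0$ is bounded ($\mu_k^{-s}\le\mu_1^{-s}$). I will invoke Weyl's law $\mu_k\asymp k^{2/d}$ (equivalently $\#\{k:\mu_k\le\lambda\}\asymp|\mathcal O|\,\lambda^{d/2}$), which holds, together with completeness of $\{\phi^{(k)}\}$ in $\calH$, for any bounded open $\mathcal O$.

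For the structural and semigroup clauses: since $\calA$ is self-adjoint with spectrum in $(-\infty,-\mu_1]$, it generates an analytic $C_0$-semigroup with $e^{t\calA}\phi^{(k)}=e^{-t\mu_k}\phi^{(k)}$ and $\|e^{t\calA}\|_{\calH\to\calH}\le e^{-\mu_1 t}$, so~\eqref{eq:uncontrolled SDE} is an Ornstein--Uhlenbeck $C_0$-semigroup; $\calA$ is moreover boundedly invertible, which makes the invariant covariance $Q_\infty=-\tfrac12\sigma_\infty\calA^{-1}Q$ in~\eqref{eq:invariant measure} a well-defined non-negative self-adjoint operator with $Q_\infty\phi^{(k)}=\tfrac{\sigma_\infty}{2}\mu_k^{-(s+1)}\phi^{(k)}$. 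This settles generation of the reference dynamics, commutation of $\calA$ and $Q$, invertibility of $\calA$, and positivity of $Q$ and $Q_\infty$.

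The crux is the trace-class requirement, and this is precisely where $s>d/2$ is needed: $\mathrm{Tr}(Q)=\sum_{k\ge1}\mu_k^{-s}\asymp\sum_{k\ge1}k^{-2s/d}$ is finite if and only if $2s/d>1$, i.e.\ $s>d/2$ (a constant multiplicity factor enters if $\calH$ consists of vector-valued functions, but it does not change the threshold). Equivalently $Q^{1/2}$ is Hilbert--Schmidt and the Cameron--Martin space $\calU=Q^{1/2}\calH$ embeds densely in $\calH$. The finite-time and stationary covariances are then trace class a fortiori: the eigenvalue of $Q_t=\int_0^t e^{(t-r)\calA}\sigma_r^2Q\,e^{(t-r)\calA}\,\rd r$ on $\phi^{(k)}$ is at most $C_t\,\mu_k^{-(s+1)}$ for a finite constant $C_t$, and that of $Q_\infty$ is $\tfrac{\sigma_\infty}{2}\mu_k^{-(s+1)}$, both summable against $\sum_k k^{-2(s+1)/d}<\infty$. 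Combining these checks verifies every clause of \Cref{assumption:operator}.

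\textbf{Main obstacle.} No single step is hard; the content is concentrated in invoking Weyl's law correctly and reading off the summability condition $s>d/2$. Two bookkeeping points deserve care. First, the sign convention: $\calA$ (not $-\calA$) must be the negative operator, so that $e^{t\calA}$ contracts and $Q_\infty=-\tfrac12\sigma_\infty\calA^{-1}Q$ is positive; this is the intended reading of ``negative Laplacian $\calA=-\Delta$'' with $Q=\calA^{-s}$ interpreted through the functional calculus as $(-\Delta)^{-s}$. Second, if \Cref{assumption:operator} additionally imposes fractional-domain embeddings (e.g.\ $D((-\calA)^{\gamma})$ continuously included in a Sobolev or H\"older space so that the controlled drift is well-defined), these follow from Sobolev embedding combined with the same Weyl asymptotics and only sharpen the lower bound on $s$ within the same $\tfrac d2$-scaling, which the stated hypothesis already accommodates.
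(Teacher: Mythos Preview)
Your proposal is correct and follows essentially the same route as the paper: diagonalize $\calA$ and $Q$ simultaneously in the Dirichlet eigenbasis, verify self-adjointness, dissipativity, and commutation by direct spectral computation, and use Weyl's asymptotics $\mu_k\asymp k^{2/d}$ to obtain $\mathrm{Tr}(Q)=\sum_k\mu_k^{-s}<\infty$ precisely when $s>d/2$. Your additional remarks on the sign convention, semigroup contraction, and the trace-class property of $Q_t$ and $Q_\infty$ are all correct and go slightly beyond what the paper records, but the core argument is identical.
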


\paragraph{Boundary condition} Since Laplacian $\Delta$ admits different realizations depending on the boundary conditions (BCs)~\citep{evans2022partial}, we restrict to Dirichlet BCs in this study. While Neumann BCs are common in other domain~\citep{rissanen2023generative, lim2023scorebased, park2024stochastic}, our goal is conditional path sampling, where Dirichlet BCs are natural because it left boundary values fixed. For example, in TPS, a transition path between metastable states $\bA$ and $\bB$ is meaningful when these states are fixed as boundary conditions $\ie \bX_t[0] = \bA, \bX_t[L] = \bB$ for a path $\bX_t[\cdot] : \bbR_{0>} \to \bbR^d$ for all $t \in [0, T]$. We leave the details for enfocting BCs for TPS on Appendix~\ref{sec:boundary condition}.

\paragraph{Numerical simulation} 
Simulating infinite-dimensional SDEs is a well studied domain~\citep{lord2014introduction}. Since it often requires a finite-dimensional approximation, we use a Galerkin truncation schemes~\citep{shardlow1999numerical} by approximating the function with $|K|$ basis functions $\{\phi^{(k)}\}_{k=1}^K$:
\[
    & \bX_t \approx \sum_{k=1}^K \bX^{(k)}_t \phi^{(k)}, \; \text{with} \nonumber \\
    & \; \bX^{(k)}_t := \la \bX_t, \phi^{(k)} \ra_{\calH} = \int_{0}^L \bX_t[u]\phi^{(k)}[u] \rd u.
\]
Hence, with the finite eigen-system $\{(\lambda^{(k)}, \phi^{(k)}) \in \bbR \times \calH : k \in [1, \cdots, K] \}$ of Dirichlet Laplacian $\Delta$, the $\calH$-valued controlled SDEs~\eqref{eq:uncontrolled SDE} reduces to the $|K|$-dimensional coefficient SDEs:
\[\label{eq:approximated SDEs}
    \rd \bX^{(k)}_t &= \bigg[ -\lambda^{(k)} \bX^{(k)}_t + \sigma_t (\lambda^{(k)})^{-s/2} \la \alpha(\bX_t, t), \phi^{(k)} \ra_{\calH} \bigg] \rd t \nonumber \\
    & + \sigma_t (\lambda^{(k)})^{-s/2} d\bW^{(k)}_t,
\]
for where $\bW^{(k)}_t$ are standard Wiener process. In particular, since \eqref{eq:approximated SDEs} is a finite-dimensional mode-wise diagonal system, we can simulate each mode with any standard numerical SDE integrator. It is worth noting that the eigen-system $\{(\lambda^{(k)}, \phi^{(k)}) \in \bbR \times \calH : k=1, \dots, K \}$ depends on the choice of the Laplacian $\Delta$. In this work, we select an eigen-system that satisfies Proposition~\ref{proposition:laplacian} to ensure well-posed dynamics for our TPS theory on a one-dimensional domain ($u \in [0, L]$). Other parameterization strategies can be applied to different domains, such as image or PDE generation (rectangular domain) or molecular conformations (graph domain), as discussed in Appendix~\ref{sec:discussion other domain}.

\section{Related Works}
We leave brief introduction of transition path sampling and related work~\Cref{sec:brief introduction of the transition path theory}.
\paragraph{Infinite-dimensional generative models} 
Many recent works extend finite-dimensional generative models to infinite-dimensional function spaces. Score-based diffusion models~\citep{song2021scorebased} have been extended to function space~\citep{lim2023score, lim2023scorebased, pidstrigach2023infinite, franzese2024continuous}. Later, matching-based generative models follow the same direction. For example, flow matching~\citep{lipman2023flow} has been generalized to function space~\citep{kerrigan2024functional}, and Schrödinger bridge matching~\citep{shi2024diffusion} has been adapted as well~\citep{park2024stochastic}. However, these algorithms operate in a \textit{sample-to-sample} setting where paired samples from both distributions are available for training. To the best of our knowledge, scalable function-space generative modeling for \textit{sample-to-energy} such as targeting sampling from a Gibbs distribution $\pi(\bx) \propto e^{-U(\bx)}$ in~\eqref{eq:target measure}, has not been investigated.

\paragraph{Diffusion sampler} Diffusion samplers address the \textit{sample-to-energy} setting, where the goal is to draw samples from the Gibbs distribution $\pi(\bx)\propto e^{-U(\bx)}$. Based on SOC theory, this line of work reformulates sampling as an optimization problem and methods for learning such samplers have advanced rapidly~\citep{zhang2022path, vargas2023bayesian, berner2022optimal}. Due to the target is specified only through a possibly complex energy functional $U$, optimization tends to be computationally demanding. This challenge motivates adjoint-based samplers~\citep{havens2025adjoint, liu2025adjoint}, which avoid simulation-based targets and yield a method that scales better than vanilla SOC approaches. Existing approaches are limited to finite-dimensional settings, and extensions to function space remain unexplored, despite the appeal and practical importance.

\section{Experiments}

\subsection{Experimental Setup}

We evaluate FAS on three benchmarks for TPS. We start with a synthetic potential, then move to two molecular systems. For the synthetic case, following \citep{du2024doob}, we use the standard TPS benchmark, the Müller–Brown potential. For molecular systems, we consider alanine dipeptide and chignolin, with 22 and 138 atoms in 3D, respectively. All molecular simulations use Cartesian coordinates, and we compute potential energy with OpenMM \citep{Eastman2023}. Further experimental details appear in \Cref{sec:experimental details}.

\paragraph{Baselines and evaluation} We compare FAS with: unbiased MD (UMD) at elevated temperature, steered MD (SMD) with forces on selected collective variables, and deep learning methods PIPS \citep{holdijk2022path}, TPS-DPS \citep{seong2025transition}, TR-LV \citep{blessing2025trust}, and DL \citep{du2024doob}. Note that DL uses a different training environments, so we report it only on the synthetic benchmark. 
We evaluate with metrics that capture reliability and probabilistic fidelity. Specifically, transition-hit probability (\textbf{\texttt{THP}}) captures the fraction of trajectories that reach the target transition state $\bB$ at the terminal time, whereas energy of transition states (\textbf{\texttt{ETS}}) is the maximum potential energy along a path.  
On the synthetic potential, we follow \citep{du2024doob} and additionally report the log likelihood (\textbf{\texttt{LLK}}) of a transition path distribution (TPD) defined in~\eqref{eq:terminal_cost_synthetic}. 

\begin{figure}[t]
\begin{minipage}[b]{1.\linewidth}
    \centering
    \footnotesize
    \captionof{table}{Comparison on synthetic potential. We report the metrics averaged over $64$ paths. \best{Best results are highlighted}.}
    \vspace{-5pt}
\renewcommand{\arraystretch}{1.2} 
\setlength{\tabcolsep}{6pt} 
\begin{tabular}{l | c c c c}
        \toprule
         Methods & \textbf{\texttt{THP}} ($\%$ ↑)  & \textbf{\texttt{ETS}} (↓) & \textbf{\texttt{LLK}} (↑) \\
        \midrule 
        MCMC
        & - 
        & -13.77{\color{gray}\tiny$\pm$16.43} 
        & - \\
        DL~{\tiny\citep{du2024doob}} 
        & 82 
        & -14.93{\color{gray}\tiny$\pm$12.61} 
        & 10.42{\color{gray}\tiny$\pm$0.42}\\
        \midrule
        FAS \textbf{(Ours)}
        &\cellbg  100
        &\cellbg  -36.70{\color{gray}\tiny$\pm$3.09}
        &\cellbg  11.48{\color{gray}\tiny$\pm$0.05} \\
        \bottomrule
    \end{tabular}
\label{table:MB potential}
\end{minipage}
\begin{minipage}[b]{1.\linewidth}
    \centering
    \vspace{7pt}
    \includegraphics[width=1.\linewidth]{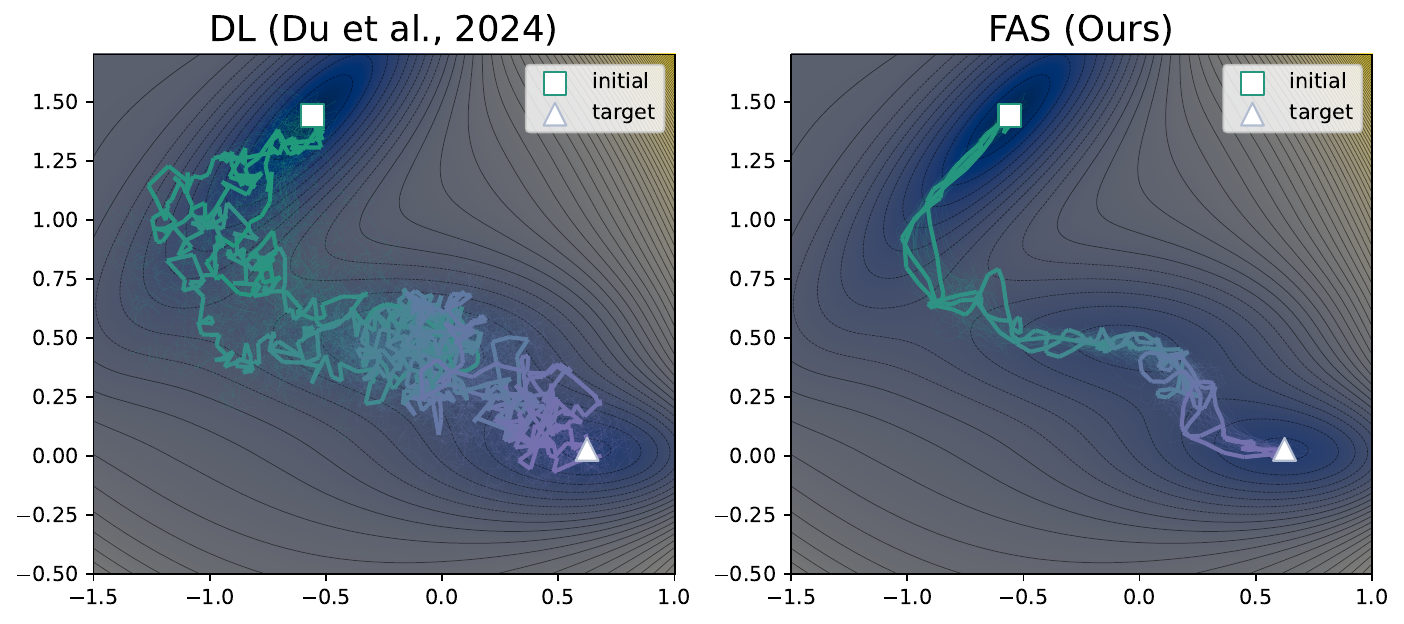}
    \vspace{-6mm}
    \captionof{figure}{\textbf{TPS for synthetic potential.} Sampled transiton path on energy landscape. }
    \label{fig:samples paths}
\end{minipage}
\vspace{-3mm}
\end{figure}

\paragraph{Energies} 
We adopt the TPD described in~\citep{holdijk2022path, du2024doob} for the Müller-Brown potential:
\[\label{eq:terminal_cost_synthetic}
    & \bbP(\bX) = \mathbf{1}_{\bA}(\bX[0])\mathbf{1}_{\bB}(\bX[L]) \\
    & \prod_{u=0}^{L-1} \calN(\bX[N+1] |\bX[N] - \tfrac{1}{\gamma \mean} \nabla_{\bX} V(\bX[u]) \delta_u, \tfrac{2k_B T}{\gamma \mean} \bI_{3N} \delta_u), \nonumber
\]
where both $\mathbf{1}_{\bA}$ and $\mathbf{1}_{\bB}$ are indicator functions and $V$ is potential function. 
For molecular systems, we observed that TPS in~\eqref{eq:terminal_cost_synthetic} led to unstable training because it relies on local potential gradients. We therefore derive a TPD based on the Feynman-Kac formula \citep{moral2011feynman}, which removes the reliance on local gradients:
\[\label{eq:terminal_cost_molecule}
    \bbP(\bX) &= \mathbf{1}_{\bA}(\bX[0])\mathbf{1}_{\bB}(\bX[L]) \\
    & \prod_{u=0}^{L-1} e^{-\tfrac{1}{k_B T} V(\bX[u])}  \calN(\bX[u+1] \big| \bX[u], \tfrac{2k_B T}{\gamma \mean} \bI_{3N} \delta_u). \nonumber
\]
We provide further discussions on the choice of TPD in Appendix~\ref{sec: choice of the terminal cost}.

\begin{figure}[h]
\begin{minipage}[b]{1.\linewidth}
\includegraphics[width=0.99\textwidth,]{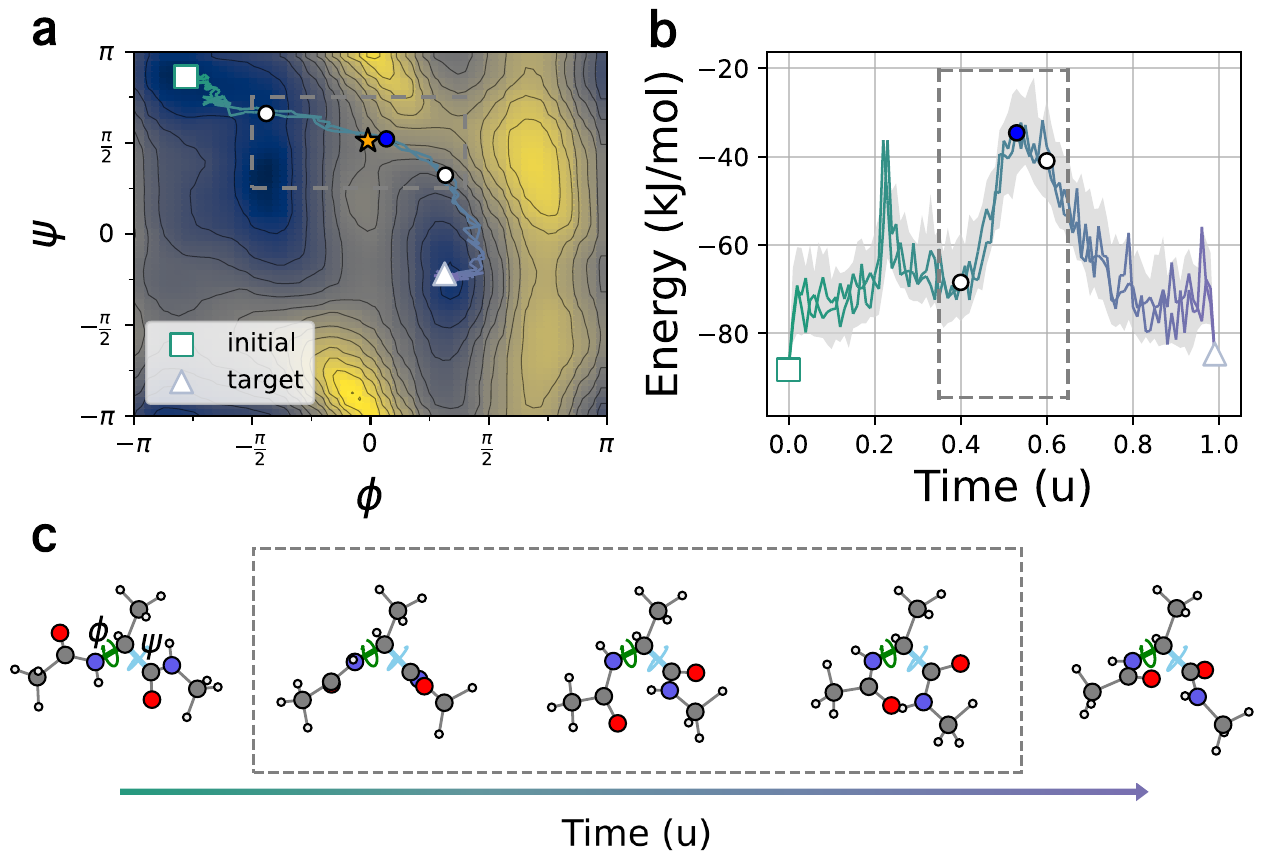}
\caption{\textbf{TPS on alanine dipeptide.} (a) Sampled transition path projected on conformational landscape as a function of CVs. The saddle point that separates the two-meta stable states is depicted as \textcolor{orange}{{\scriptsize$\bigstar$}} and the highest energy along paths sampled from FAS depicted as \blue{$\bullet$}. (b) Potential energy plot. Grey box highlights the transition-state region where paths reach the \textbf{\texttt{ETS}}. (c) Visualization of conformation for given meta-stable states \teal{C5} (far left) and \purple{C7ax} (far right).}\label{figure:TPS for ALDP}
\end{minipage}
\end{figure}

\subsection{Experimental Results}

\paragraph{The Müller Brown potential} \Cref{table:MB potential} reports the results on the synthetic Müller Brown potential, where the goal is sample paths that connect the top left and bottom right minima. FAS attains lower $\textbf{\texttt{ETS}}$ and higher $\textbf{\texttt{LLK}}$ than baselines, showing it yields reliable paths that remain in low energy valleys as shown in~\Cref{fig:samples paths} with small variance.  

\begin{table*}[t]
\caption{
    Comparison between baselines on the molecular transition path sampling of the alanine dipeptide and chignolin. We report the metrics averaged over $64$ paths. \best{Best results are highlighted}.}
\vskip -0.07in
\centering
\resizebox{\textwidth}{!}
{%
\renewcommand{\arraystretch}{1.2} 
\setlength{\tabcolsep}{7pt} 
\begin{tabular}{@{} l ccc ccc}
\toprule
& \multicolumn{3}{c}{Alanine Dipeptide}
& 
\multicolumn{3}{c}{Chignolin}
\\ \cmidrule(lr){2-4} \cmidrule(lr){5-7}
Method & \textbf{\texttt{RMSD}} (Å $\downarrow$) & $\textbf{\texttt{THP}}$ ($\%$ $\uparrow$) & \textbf{\texttt{ETS}} (kJ/mol $\downarrow$) & \textbf{\texttt{RMSD}} (Å $\downarrow$) & $\textbf{\texttt{THP}}$ ($\%$ $\uparrow$) & \textbf{\texttt{ETS}} (kJ/mol $\downarrow$)
\\ \midrule
UMD
& 1.19{\color{gray}\scriptsize$\pm$0.03} 
& 6.25 
& 812.47{\color{gray}\scriptsize$\pm$148.80} 
& 7.23{\color{gray}\scriptsize$\pm$0.93} 
& 1.56 
& 388.17 
\\
SMD {\scriptsize\citep{izrailev1999steered}}
& 0.56{\color{gray}\scriptsize$\pm$0.27}       
& 54.69 
& 78.40{\color{gray}\scriptsize$\pm$12.76}       
& 0.85{\color{gray}\scriptsize$\pm$0.24}       
& 34.38 
& 179.52{\color{gray}\scriptsize$\pm$138.87}   
\\
PIPS {\scriptsize\citep{holdijk2022path}}
& 0.66{\color{gray}\scriptsize$\pm$0.15}       
& 43.75 
& 28.17{\color{gray}\scriptsize$\pm$10.86}       
& 4.66{\color{gray}\scriptsize$\pm$0.17}       
& 0.00 
& - 
\\
TPS-DPS {\scriptsize\citep{seong2025transition}}
& 0.47{\color{gray}\scriptsize$\pm$0.18}       
& 39.58{\color{gray}\scriptsize$\pm$28.13}       
& 46.34{\color{gray}\scriptsize$\pm$10.16}       
& 1.06{\color{gray}\scriptsize$\pm$0.08}       
& 25.00{\color{gray}\scriptsize$\pm$10.69}   
& -189.81{\color{gray}\scriptsize$\pm$23.01}   
\\
TR-LV {\scriptsize\citep{blessing2025trust}}
& 0.29{\color{gray}\scriptsize$\pm$0.03}       
& 61.25{\color{gray}\scriptsize$\pm$4.05}       
& 49.11{\color{gray}\scriptsize$\pm$5.84}       
& 0.90{\color{gray}\scriptsize$\pm$0.01}       
& 43.95{\color{gray}\scriptsize$\pm$5.64}   
& -303.98{\color{gray}\scriptsize$\pm$28.65}   
\\ \midrule
FAS \textbf{(Ours)}
& \cellbg  0.00{\color{gray}\scriptsize$\pm$0.00} 
& \cellbg  100.00{\color{gray}\scriptsize$\pm$0.00} 
& \cellbg  -29.46{\color{gray}\scriptsize$\pm$3.87} 
& \cellbg  0.00{\color{gray}\scriptsize$\pm$0.00} 
& \cellbg  100.00{\color{gray}\scriptsize$\pm$0.00} 
& \cellbg  -360.85{\color{gray}\scriptsize$\pm$53.71 } 
\\ \bottomrule
\end{tabular}
\label{tab:molecule}
}
\vspace{-3mm}
\end{table*}

\begin{figure*}[t]
\vspace{3mm}
\begin{minipage}{1.\linewidth}
    \includegraphics[width=1.\linewidth]{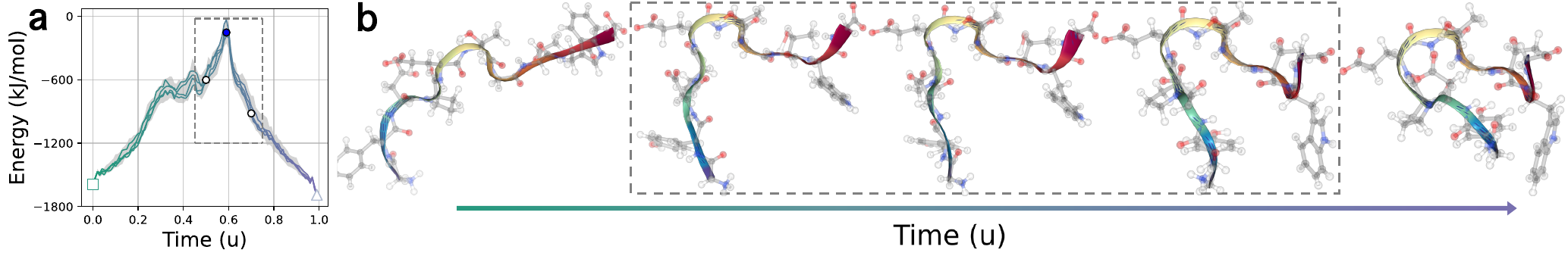}
\end{minipage}
\vspace{-4mm}
\caption{\textbf{TPS on Chignolin.} (a) Potential energy plot. (b) Visualization of protein folding trajectory from \teal{unfolded} (far left) to \purple{folded} (far right). Grey box highlights the transition-state region where trajectories reach the \textbf{\texttt{ETS}}. }
\label{fig:chignolin}
\vspace{-4mm}
\end{figure*}

\paragraph{Alanine dipeptide} We evaluate FAS on alanine dipeptide, a peptide with two alanine residues, and target transitions between meta-stable states \teal{C5} and \purple{C7ax}. We use torsion angles $\phi$ and $\psi$ as collective variables (CVs) from atomic coordinates. Table~\ref{tab:molecule} summarizes alanine dipeptide results. Notably, our function space approaches achieve perfect \textbf{\texttt{THP}} with zero \textbf{\texttt{RMSD}} due to the Dirichlet boundary formulation enforces these endpoints by construction. Perfect \textbf{\texttt{THP}} implies that all sampled paths are reactive, which maximizes compute use and prevents bias in transition rate and mechanism estimates hence lowers variance of path statistics. Moreover, FAS also attain significantly lower \textbf{\texttt{ETS}} than baselines, implying the path generated by FAS is reliable. 

Moreover, as shown in Figure~\ref{figure:TPS for ALDP}a, the free energy surface over these CVs shows a high barrier with a saddle point. Reliable reactive paths are expected to cross near this saddle because it is the lowest barrier route. The highest energy along paths sampled from FAS occurs close to the saddle, which supports that the sampled paths are physically meaningful transition paths. 
Figure~\ref{figure:TPS for ALDP}c provides a visualization of configuration for the molecule transition.

\paragraph{Chignolin} Finally, we evaluate sampling of the folding dynamics of the small protein Chignolin. As shown in \Cref{tab:molecule}, FAS achieves the lowest \textbf{\texttt{ETS}}. Because of chignolin has a higher dimensional and more rugged landscape with a narrow folding funnel and multiple intermediates basin, baselines show larger \textbf{\texttt{RMSD}} and lower \textbf{\texttt{THP}} than on alanine dipeptide. Baselines that rely on gradient driven or steered updates often traverse strained conformations. This inflates \textbf{\texttt{RMSD}} even after Kabsch alignment and lowers the chance of reaching the folded basin within the time horizon. In contrast, FAS enforces the endpoints throughout and searches only within admissible paths. This boundary enforcement removes endpoint drift and avoids late corrective pulls. As a result, transitions are smoother and lead to lower \textbf{\texttt{ETS}}. We illustrate  visualization of the folding dynamics in~\Cref{fig:chignolin}b.

\subsection{Discussions}

\paragraph{Discretization invariance sampling}

 Infinite dimensional formulation enables inference on paths at arbitrary time discretizations. Table~\ref{table:resolution free} reports \textbf{\texttt{ETS}} across varying resolutions. Specifically, we train $\alpha^{\theta}$ with a discretization of implicit time interval $[0,1]$ into $100$ uniform time steps (denoted $\blue{\times 1}$) and evaluate on unseen, more finer time discretization (denoted $\green{\times 2\text{-}100}$). Across both potentials, FAS maintains nearly unchanged \textbf{\texttt{ETS}} on unseen resolutions, which indicate discretization-agnostic behavior. In particular, for the synthetic Müller Brown potential, \textbf{\texttt{ETS}} and \textbf{\texttt{LLK}} remain stable even when the resolution increases by $\green{\times 100}$, where $[0,1]$ is discretized into $10^4$ steps. Compared to synthetic potential, alanine dipeptide exhibits higher \textbf{\texttt{ETS}} variability on unseen discretizations because molecular potentials are highly sensitive to configuration. We leave additional discussions on discretization invariance sampling and further ablation studies in~\Cref{sec:additional experiments}.

\begin{table}[t]
    \centering
    \footnotesize
    \captionof{table}{ \textbf{\texttt{ETS}} across various unseen discretization levels. \textbf{\texttt{Disc}} represents how fine the generated path is, where $\green{\times r}$ multiplies the base number of discretization by $\green{r}$. }
\renewcommand{\arraystretch}{1.2} 
\setlength{\tabcolsep}{6pt} 
\begin{tabular}{@{} r   c c  r  c }
        \toprule
        & \multicolumn{2}{c}{Müller Brown}
        & \multicolumn{2}{c}{Alanine Dipeptide} \\
        \cmidrule(lr){1-3} \cmidrule(lr){4-5}
        \textbf{\texttt{Disc}} &  \textbf{\texttt{ETS}}(kJ/mol $\downarrow$) & \textbf{\texttt{LLK}} (↑) & \textbf{\texttt{Disc}} & \textbf{\texttt{ETS}}(kJ/mol $\downarrow$) \\
        \midrule 
        $\blue{\times 1}$
        & -36.70{\color{gray}\tiny$\pm$3.09}
        & 11.48{\color{gray}\tiny$\pm$0.05}
        & $\blue{\times 1}$
        & -29.46{\color{gray}\tiny$\pm$\hspace{1mm}3.87} \\
        $\green{\times 10}$
        & -38.55{\color{gray}\tiny$\pm$1.41}
        & 11.65{\color{gray}\tiny$\pm$0.02}
        & $\green{\times 2}$ 
        & -28.72{\color{gray}\tiny$\pm$41.52} \\
        $\green{\times 100}$ 
        & -37.83{\color{gray}\tiny$\pm$3.17}
        & 11.72{\color{gray}\tiny$\pm$0.09}
        & $\green{\times 4}$
        & -33.58{\color{gray}\tiny$\pm$35.59} \\
        \bottomrule
    \end{tabular}
\label{table:resolution free}
\vspace{-6mm}
\end{table}

\section{Conclusion}
We introduced the Functional Adjoint Sampler (FAS), a scalable diffusion sampler for Gibbs-type distributions on  Hilbert spaces. FAS generalizes adjoint matching algorithms~\citep{domingo-enrich2025adjoint, havens2025adjoint} to infinite-dimensional Hilbert spaces, enabling efficient sampling over functional representations such as paths evaluated on a one-dimensional time grid, and it demonstrates superior performance on path sampling tasks. Despite these promising results, we focused on a one dimensional domain. However, FAS readily extends to other tasks such as PDE generation and graph based domains, and we leave a comprehensive study for future work.


\section*{Acknowledgments}

The authors would like to thank Benjamin Kurt Miller,  Juno Nam, Yuanqi Du, and Ricky T. Q. Chen for the helpful discussions and comments.

\bibliography{main.bib}
\bibliographystyle{plainnat}

\newpage
\onecolumn
\appendix

\section{Overview of Relevant Concepts}\label{sec:overview}

We first give a concise overview of core probabilistic notions in infinite dimensional spaces, then describe stochastic differential equations on Hilbert spaces~\citep{da2002second, da2014stochastic}.

\subsection{Gaussian Measure and Generalized Wiener Processes. } Gaussian measures are central because the infinite dimensional setting lacks a Lebesgue reference measure, and a Gaussian with a trace class covariance provides a natural baseline that keeps stochastic integrals square integrable and provides explicit RND, both essential for our framework.

\textbf{Gaussian Measure. } Let $(\Omega, \calF, \bbQ)$  be a probability space and $(\calH, B(\calH))$ be a measurable state spaces where $\calH$ be a separable Hilbert space with inner product $\la \cdot, \cdot \ra_{\calH}$ and norm $\norm{\cdot}_{\calH}$ and $\calB(\calH)$ is Borel measure on $\calH$. We consider the  $\calH$-valued random variable $\bX : \Omega \to \calH$. Now, define the push-forward measure $\nu := \bX_{\#}\bbQ$  by $\nu(\bB)=\bbQ\bigl(\mb X^{-1}(\bB)\bigr)$ for every Borel set $\bB \subset B(\calH)$. We call $\nu$ Gaussian if every $\bbR$-valued projection $\la u, \bX \ra_{\calH}$ is a Gaussian random variable for all $u \in \calH$. When $\calH = \bbR^d$ with the standard inner product, this definition reproduces the usual Gaussian distribution.

For a Gaussian measure $\nu$, there exists a unique mean function $\mean \in \calH$ that statisfies:
\[
    \mean = \int_{\calH} \bX d\nu, \quad \la \mean, u \ra_{\calH} = \bbE_{\nu}\left[ \la \bX, u \ra_{\calH} \right], \quad \forall u \in \calH.
\]
Moreover, there exists a unique bounded linear operator $Q : \calH \to \calH$ such that
\[\label{eq:covariance operator}
    Qu = \int_{\calH} \la \bX, u \ra_{\calH} \bX d\nu - \la \mean, u \ra_{\calH} \mean, \quad \forall u \in \calH.
\]
A Gaussian measure is fully specified by its mean and covariance operator. If the mean is zero, so that $\nu = \calN(0, Q)$, we say that $\nu$ is \textit{centred}. The Fernique's integrability theorem~\citep[Theorem 2.7]{da2014stochastic} states that for any centred Gaussian measure $\nu$ on a separable Banach space $\calB$ (in particular on $\calH$) there exists $r >0$ such that $\int_\calH \exp(r \norm{\bX}^2_{\calH}) d\nu < \infty$. Consequently, it implies that every polynomial moment is finite $\ie \bbE_{\nu}[\norm{\bX}^2_{\calH}] < \infty$. Using definition~\eqref{eq:covariance operator} and an arbitrary orthonormal basis of $\calH$, $\{\phi^{(k)}\}_{k \geq 1}$ for each $\phi^{(k)} \in \calH$ for every $k \in \bbN$:
\[
    \text{Tr}(Q) = \sum_{k \geq 1} \la Q \phi^{(k)}, \phi^{(k)} \ra_{\calH} = \sum_{k \geq 1} \bbE_{\nu} \left[ \la \bX, \phi^{(k)} \ra^2_{\calH} \right] = \bbE_{\nu}\left[\norm{\bX}^2_{\calH}\right] < \infty.
\]
This implies that the symmetric and non-negative covariance operator $Q$ is trace class, $\ie \text{Tr}(Q) < \infty$, hence implies that $Q$ is compact. Hence, there exists the eigen-system $\{(\lambda^{(k)}, \phi^{(k)}) \in \bbR \times \calH : k \in \bbN\}$, where $\{\phi^{(k)}\}_{k \geq 1}$ is orthonormal eigen-basis and $\{\lambda^{(k)}\}_{k \geq 1}$ is positive eigenvalues such that
\[
    Q \phi^{(k)} = \lambda^{(k)}\phi^{(k)},\quad  \text{and} \quad \text{Tr}(Q) = \sum_{k \geq 1} \la Q \phi^{(k)}, \phi^{(k)} \ra_{\calH} = \sum_{k \geq 1} \lambda^{(k)} < \infty.
\]
We now extend the notion of a Wiener process to an infinite dimensional SDE. 

\textbf{Wiener Processes on Hilbert Spaces. } A natural way to generalize the Wiener process to a Hilbert space $\calH$ is through the \textit{cylindrical Wiener process}. It is introduced as the formal series of independent $\bbR$-valued Wiener processes $\{\bW^{(k)}\}_{k \geq 1}$ along an orthonormal basis $\{\phi^{(k)}\}_{k \geq 1}$:
\[
    \bW_t = \sum_{k \geq 1} \bW^{(k)}_t \phi^{(k)}, \quad t \geq 0.
\]
However, the series $\bW_t$ dose not converge in $\calH$ because
\[\label{eq:covariance explode}
    \bbE\left[\norm{\bW_t}^2_{\calH}\right] = \sum_{k \geq 1} \bbE \left[ (\bW^{(k)}_t)^2\right] = t \sum_{k \geq 1} 1 = \infty,
\]
thereby $\bW_t$ fails to be an $\calH$-valued random variable. Hence, it can be used only under additional smoothing to obtain $\calH$-valued noise process. This smoothing is done with trace class operator $Q$. Define $\bW^Q_t = Q^{1/2}\bW_t = \sum_{k \geq 1} \sqrt{\lambda^{(k)} }\bW^{(k)}_t \phi^{(k)}$. Then, we have
\[\label{eq:Q-wiener process}
    \bbE\left[ \norm{\bW^Q_t}^2_{\calH} \right] = \sum_{k \geq 1} \bbE \left[ \lambda^{(k)}(\bW^{(k)}_t)^2\right] = t \sum_{k \geq 1} \lambda^{(k)} < \infty,
\]
where $\{(\lambda^{(k)}, \phi^{(k)}) \in \bbR \times \calH : k \in \bbN\}$ is eigen-system of $Q$. Hence $\bW^Q_t$ is $\calH$-valued process. This smoothed process is called a \textit{Q-Wiener process} or \textit{coloured noise} because the noise covariance operator $Q$ introduces a spatial correlation. 

\subsection{Stochastic Differential Equations in Hilbert Spaces} 

With the infinite dimensional Wiener process $\bW^Q_t$ well defined $\calH$, we then define the infinite-dimensional SDEs as follows:
\[
    d\bX_t = \calA \bX_t \rd t + \sigma_t \rd \bW^Q_t, \quad \bX_0=\bx_0 \in \calH,
\]
where $\calA : \calH \to \calH$ is a linear operator and $\sigma_t >0$. We recall the classical mild formulation driven by a general $Q$-Wiener process on $\calH$. We then restrict attention to a cylindrical Wiener process on $\calH_0$, introduce an appropriate Hilbert–Schmidt embedding that can be naturally aligned with our framework, and outline the minimal assumptions that still guarantee existence of a solution for the infinite-dimensional SDE under the choice of cylindrical Wiener process.

\textbf{Mild solution. }Let a linear operator $\calA : \calH \to \calH$ generate a strongly continuous $C_0$ semi-group $\{e^{t\calA}\}_{t \geq 0}$ on $\calH$. Assume $\bW^Q$ is a centred $Q$-Wiener process with trace-class covariance operator $Q$ and that $e^{t\calA}Q^{1/2}$ is Hilbert-Schmidt for any $t>0$. Under these conditions, the infinite-dimensional SDE (or stochastic PDE) in~\eqref{eq:uncontrolled SDE} admit the  unique mild solution:
\[
    \bX_t = e^{t\calA} \bx_0 + \sigma_t \int_0^t e^{(t-s)\calA} d\bW^Q_s, \quad t \geq 0.
\]
The resulting stochastic process $\{\bX_t\}_{t \geq 0}$ is then $\calH$-valued random variable since the stochastic convolution $\textstyle{\int_0^t} e^{(t-s)\calA} d\bW^Q_s = \textstyle{\int_0^t} e^{(t-s)\calA} Q^{1/2} d\bW_s$ is lies on $\calH$ because $e^{(t-s)\calA} Q^{1/2}$ is Hilbert-Schmidt. We can compute its mean $\mean_t$ and covariance operator $Q_t$ for any $t>0$ as follows:
\[\label{eq:gaussian moments}
    \mean_t = e^{t\calA}\bx_0, \quad Q_t = \sigma^2_t \int_0^t e^{(t-s)\calA} Q e^{(t-s)\calA^{\dag}} ds.
\]
Due to the Hille-Yosida Theorem~\citep[Theorem A.3]{da2014stochastic}, $\calA$ satisfies $\norm{e^{t\calA}} \leq M e^{-wt}$ for some $M \geq 1$, $w \geq 0$. Hence, for every $0 \leq s \leq t$, $T_{st} := e^{(t-s)\calA} Q e^{(t-s)\calA^{\dag}}$ satisfies:
\[
    \text{Tr}(T_{st}) \leq \norm{e^{(t-s)\calA}} \text{Tr}(Q) \norm{e^{(t-s)\calA^{\dag}}} \leq M^2 e^{-2w(t-s)} \text{Tr}(Q) < \infty.
\]
Implies that $T_{st}$ is a trace class operator. Therefore, since the bound is integrable on $[0,t]$,
\[
    \text{Tr}(Q_t) = \text{Tr}( \sigma^2_t  \int_0^t T_{st}ds) \leq  \sigma^2_t  \int_0^t \text{Tr}(T_{st}) ds < \infty.
\]
Hence $Q_t$ is a trace-class operator for every $t <\infty$. Hence, we can conform that the $\calH$-valued stochastic process $\bX_t$ has a Gaussian law $\nu_t = \calN(\mean_t, Q_t)$.

\textbf{Invariant Measure. } Let $\calA$ is dissipative ($\ie \la \calA\bx, \bx \ra_{\calH} \leq -\lambda \norm{\bx}^2_{\calH}$ for some $\lambda>0$), the integral converges as $t \to \infty$ and we define the noise schedule such that $\sup_t \sigma_t < \infty$, $\sigma_t \to \sigma_{\infty}$. Then, the operator $Q_t$ solves the following differential Lyapunov equation:
\[\label{eq:differential Lyapunov equation}
    \dot Q_t = \calA Q_t + Q_t \calA^{\dag} + \sigma^2_t Q, \quad Q_t = e^{t\calA} Q_0 e^{t\calA^{\dag}}+ \int_0^t \sigma^2_{t-r} e^{r\calA} Q  e^{r\calA^{\dag}}\rd r.
\]
Because $Q_0 = 0$ since we assume Dirac delta initial condition $\bx_0$ in~\eqref{eq:uncontrolled SDE}, we obtain
\[
    Q_t \stackrel{t \to \infty}{\rightarrow} Q_{\infty} := \sigma^2_{\infty} \int_0^\infty  e^{t\calA} Q  e^{t\calA^{\dag}}\rd t
\]
Now, let $F(t) = \sigma^2_{\infty} e^{t\calA} Q e^{t\calA^{\dag}}$. Then, we get:
\[\label{eq:Q ODE}
    \tfrac{\rd}{\rd t} F(t) = \calA F(t) + F(t)\calA^{\dag} = \calA \sigma^2_{\infty} e^{t\calA} Q e^{t\calA^{\dag}} + \sigma^2_{\infty} e^{t\calA} Q e^{t\calA^{\dag}} \calA^{\dag}.
\]
Since $F(t) \stackrel{t \to \infty}{\rightarrow} 0$ and $F(0) = \sigma^2_{\infty} Q$, by integrating~\eqref{eq:Q ODE} from $0$ to $\infty$, we get:
\[
    \calA Q_{\infty} + Q_{\infty} \calA^{\dag} =  \int_0^{\infty} \tfrac{\rd}{\rd t} F(t) \rd t =  \lim_{t \infty} F(t) - F(0) = -\sigma^2_{\infty} Q
\]
Hence, we get the operator Lyapunov equation~\citep[Proposition 10.1.4]{da2002second}: 
\[\label{eq:Lyapunov equation}
    \calA Q_{\infty} + Q_{\infty} \calA^{\dag} + \sigma^2_{\infty} Q= 0,
\]
where $Q_{\infty}$ is trace class operator. so $\calN(0, \infty)$ is the unique invariant Gaussian measure of the process in~\eqref{eq:uncontrolled SDE}. Finally, if $\calA$ is trace-class, self-adjoint operator $(\ie \calA = \calA^{\dag})$ and $\calA$ and $Q$ are commute $(\ie Q\calA =\calA Q)$, we get $Q_{\infty} = -\tfrac{1}{2}\sigma^2_{\infty} \calA^{-1}Q$~\citep[Proposition 10.1.6]{da2002second}.

\subsection{Brief Introduction of the Transtiion Path Theory}\label{sec:brief introduction of the transition path theory}

In this section, we introduce the classic Traditional \textit{transitional path theory}~\citep[TPT;][]{vanden2010transition}, a method to sample and quantify reactive pathways meta stable states between $\bA$ and $\bB$ in complex system. Here, we breifely introduce two representative methods for TPT, \textit{transition path sampling} (TPS) and \textit{chain of states} (CoS) methods ($\eg$ NEB, string). Then, we will make explicit connection between these two viewpoints and show how our infinite-dimensional formulation provides a principled bridge between them by using the Feynman-Kac formula~\citep{moral2011feynman}.

Following the notion of~\citep{dellago1998transition}, let us denote a discrete space-time path $\bP := \{\bp_0, \bp_1, \cdots \bp_L \}$, where $\bp_u = (\br_u, \bv_u) \in \bbR^{6N}$ is configuration with $\br_u \in \bbR^{3N}$ the atom-wise positions and $\bv_u \in \bbR^{3N}$ the atom-wise velocities with $N$ number of atom on time interval $\bL = [0, L]$. We assume that consecutive configuration states follow a Markov transition $p(\bp_{u} \to \bp_{u+1})$, then the path probability is given by product of the transition probabilities as shown in below:
\[\label{eq:kernel product}
    \bbP(\bP) = e^{-\tfrac{1}{k_B T} V(\bp_0)} \prod_{u=0}^{L-1} p(\bp_{u} \to \bp_{u+1}),
\]
where $V:\bbR^{6N} \to \bbR$ is the energy function. In this setting, our goal is to sample the transition path bridging boundary constraints $\pi_{\bA}(\bp_0)$ and $\pi_{\bB}(\bp_L)$ from the transition path distribution:
\[\label{eq:transition path distribution}
    \bbP^{\star}(\bP) := e^{-S(\bP)} = \pi_{\bA}(\bp_0) \pi_{\bB}(\bp_L) \bbP(\bP),
\]
where $\pi_{\bA}(\bp_0)$ and $\pi_{\bB}(\bp_L)$ enforces the path to be started and terminated with prescribed $\bA$ and $\bB$, respectively. Note that we can choose Markov transition $p(\bp_{u} \to \bp_{u+1})$ freely, if it conserves the Gibbs distribution normalized~\citep{dellago1998transition}. In following, we show how the choice of Markov transition differentiating the TPS from CoS-based \textit{maximum energy path} (MEP) optimization.

\subsection{Transition Path Sampling} Classic TPS is usually formulated for Langevin dynamics in the under-damped regime. However we focus on the over-damped case for its simple form, and the extension to the under-damped case is straightforward. The molecule system governed by over-damped Langevin dynamics is given by:
\[\label{eq:overdamped SDEs}
    \rd \bp_t = -\tfrac{1}{\gamma \mean} \nabla_{\bp} V(\bp_t) \rd t + \sqrt{\tfrac{2 k_B T}{\gamma \mean}}\rd \bW_t,
\]
where we denote $\gamma$ as friction and $\mean$ as a mass. In this case, the discrete Markov transition kernel of overdamped dynamics in~\eqref{eq:overdamped SDEs} follows from an Euler-Maruyama discretization, since the increment over $\delta_t$ is Gaussian with mean $\mu_t = \bp_t -\tfrac{1}{\gamma \mean} \nabla_{\bp} V(\bp_t) \delta_t$ and covariance $\Sigma_t = \tfrac{2 k_B T}{\gamma \mean}\delta_t \bI_{3N}$: 
\[\label{eq:kernel underdamped}
    & \bp_{u+1} = \bp_{u} -\tfrac{1}{\gamma \mean} \nabla_{\bp} V(\bp_t) \delta_t +  \sqrt{\tfrac{2 k_B T}{\gamma \mean}\delta_t } \zeta, \quad \zeta \sim \calN(0, \bI_{3N}). \\
    & \text{Hence}, \; p(\bp_{u} \rightarrow \bp_{u+1}) = \calN(\bp_{u+1} | \mu_t, \Sigma_t).
\]
Hence, substituting \eqref{eq:kernel underdamped} into \eqref{eq:kernel product} yields the TPS transition path distribution, which is widely used in recent literature~\citep{holdijk2022path, seong2025transition, blessing2025trust}. Concretely, starting from the overdamped dynamics in~\eqref{eq:overdamped SDEs}, this line of work sets up a finite-dimensional SOC problem by biasing the potential $V$ with a neural-network based control $\alpha^{\theta}$:
\[
    \rd \bp^{\alpha}_t = -\tfrac{1}{\gamma \mean} \nabla_{\bp} \left[V(\bp^{\alpha}_t) + \alpha^{\theta}(t, \bp^{\alpha}_t) \right] \rd t + \sqrt{\tfrac{2 k_B T}{\gamma \mean}}\rd \bW_t, \quad \bp^{\alpha}_0 = \bA.
\]
Now, the goal of the optimization is to learn $\alpha^{\theta}$ that steers the prior dynamics start from a given metastable state $\bA$ into another target metastable state $\bB$ by maximizing the log-likelihood with respect to the TPD in~\eqref{eq:transition path distribution}.

\subsection{Chain of States Methods} If the given system has smooth energy landscape, TPS provides an adequate description of the transition process. In this case, the main target is the \textit{transition state}, which is a saddle point on the potential energy landscape~\citep{vanden2010transition}. However, for a complex energy landscape, when the initial and target states are separated by intermediates basins, the MEP that bridges the relevant minima through the sequence of transition states is preferred, which motivates CoS methods. The objective of CoS is to parametrize the geometric path as discrete chain $\bP^{\theta} := \{\bp_0=\bA, \bp^{\theta}_1, \cdots, \bp^{\theta}_{L-1}, \bp_L=\bB\}$ over the grid with fixed boundary and to optimize a geometric functional. For example, the objective of elastic-band method~\citep{JÓNSSON1998} is given by:
\[\label{eq:cos objective}
    S(\bP^{\theta}) = \sum_{u=0}^{L-1} \left[V(\bp^{\theta}_u)  +  \tfrac{\kappa}{2} \norm{\bp^{\theta}_{u+1} - \bp^{\theta}_u}^2\right],
\]
where $\kappa >0$ is constant. Recent studies~\citep{petersen2025pinnmep, Ramakrishnan2025} optimize the neural-network parameterization of continuous path $\bP^{\theta}$ by minimizing objectives of the similar form in~\eqref{eq:cos objective} to find the MEP. However, CoS formulations lack a probabilistic foundation, so they cannot target the reactive path ensemble or uncertainty without additional modeling.

\subsection{Choice of the Terminal Cost in~\eqref{eq:SOC problem}}\label{sec: choice of the terminal cost}
Previously, the transition path distribution of TPS is often derived by plugging the Langevin transition kernel \eqref{eq:kernel underdamped} into the factorized form \eqref{eq:kernel product} as follows:
\begin{subequations}\label{eq:TPS distribution}
\vspace{-1mm}
\begin{align}
    & \yellow{\bbP(\bP)} = e^{-\tfrac{1}{k_B T} V(\bp_0)} \prod_{u=0}^{L-1} \yellow{p(\bp_{u} \to \bp_{u+1})}, \label{eq: path probability 1} \\
    & \text{where} \quad \yellow{p(\bp_{u} \rightarrow \bp_{u+1})} = \calN(\bp_{u+1} |\yellow{\bp_u - \tfrac{1}{\gamma \mean} \nabla_{\bp} V(\bp_u) \delta_u}, \tfrac{2k_B T}{\gamma \mean} \bI_{3N} \delta_u).
\end{align}
\end{subequations}

Intuitively,~\eqref{eq: path probability 1} gives the path log-likelihood of a discrete trajectory under the overdamped Langevin dynamics~\eqref{eq:overdamped SDEs} at temperature $T$. Then, to enforce the endpoint condition $\pi_{\bB}(\bp_L)$ for TPS:
\[\label{eq:tps tps}
    \bbP^{\star}_{\texttt{TPS}}(\bP) = e^{-S(\bP)} = \pi_{\bA}(\bp_0) \pi_{\bB}(\bp_L) \yellow{\bbP(\bP)},
\]
For the terminal cost $U$ in~\eqref{eq:SOC problem}, one can take the TPS energy in~\eqref{eq:tps tps} as a negative log-likelihood (NLL) $\ie U(\bX) = -\log \bbP^{\star}_{\texttt{TPS}}(\bX)$, and then optimize the control $\alpha^{\theta}$ to sample transition paths. In practice, however, we observed that this choice led to instability during FAS training. Indeed, the simulation of Langevin dynamics in~\eqref{eq:overdamped SDEs} moves according to the local gradient $-\nabla V$, which helps stabilize trajectories, because this force steers paths toward descending the potential and suppress large uphill moves. In contrast, our infinite-dimensional SDE in~\eqref{eq:controlled SDE} injects noise and evolves on a fixed training grid without explicitly aligning the path with geometric of $-\nabla V$, so small perturbations of the path can cause large oscillations in $V$ along the grid, thereby $U$ fluctuates across gradient steps. 

Hence, we instead derive a principled path-objective that is well-suited for our path parameterization. Compared to TPS, by adopting well-known reweighting principle of Markov process known as the Feynman-Kac formula~\citep{moral2011feynman}, we can rewrite the path distribution as \textit{tilted} Markov chain:
\begin{subequations}\label{eq:CoS distribution}
\begin{align}
    &  \green{\tilde{\bbP}(\bP)} =  \prod_{u=0}^{L-1} \green{e^{-\tfrac{1}{k_B T} V(\bp_u)} \tilde{p}(\bp_{u} \to \bp_{u+1})}, \label{eq:path probability 2} \\
    & \text{where} \quad \green{\tilde{p}(\bp_{u} \rightarrow \bp_{u+1})} = \calN(\bp_{u+1} |\green{\bp_u}, \tfrac{2k_B T}{\gamma \mean} \bI_{3N} \delta_u ). \hspace{20mm} \label{eq:brownian motion kernel}
\end{align}
\end{subequations}
Note that, the potential $\green{V(\bp_u)}$ is carried by the path distribution rather than by the dynamics. Although the goal of~\eqref{eq:CoS distribution} is same as~\eqref{eq:TPS distribution}, which is to concentrate the resulting path measure in high-potential region, this construction differs from~\eqref{eq:TPS distribution}, the high-potential can be achieved by direct reweighting, not from the local gradient $-\nabla V$. This viewpoint enable us to derive the CoS type objective~\eqref{eq:cos objective} that is suited to our path-parametrization, leads to the transition path distribution:
\[\label{eq:cos tps}
    \bbP^{\star}_{\texttt{CoS}}(\bP) \propto e^{-\tilde{S}(\bP)} = \pi_{\bA}(\bp_0) \pi_{\bB}(\bp_L) \green{\tilde{\bbP}(\bP)},
\]
With this transition path distribution, the NLL $U(\bX) = -\log \bbP^{\star}_{\texttt{CoS}}(\bX)$ can be written as follows:
\[\label{eq:NLL true}
    U(\bX) &= - \left[\log \pi_{\bA}(\bX[0]) + \pi_{\bB}(\bX[L])\right] + \tfrac{1}{k_B T} \sum_{u=0}^{L-1} \left[ V(\bX[u]) + \tfrac{\kappa}{2} \norm{\bX[u+1] - \bX[u]}^2 \right] \\
    & \stackrel{(i)}{=} C + \tfrac{1}{k_B T} \underbrace{\sum_{u=0}^{L-1} \left[ V(\bX[u]) + \tfrac{\kappa}{2} \norm{\bX[u+1] - \bX[u]}^2 \right]}_{\text{CoS objective described in~\eqref{eq:cos objective}}} 
\]
where $\kappa = \tfrac{1}{2}\tfrac{\gamma \mean}{\delta_u}$ and $(i)$ follows from a fixed endpoints of our methods. Note that the second term on the right-hand side equals the CoS objective in \eqref{eq:cos objective}. Hence \eqref{eq:cos tps} can be viewed as a probabilistic interpretation of classic elastic band methods. Compared to the NLL based on~\eqref{eq:tps tps},~\eqref{eq:NLL true} is better aligned with a path  parameterization. It pushes every grid point $\bX[u]$ for all $u \in \bL$ toward low values of $V$ and penalizes rough increments, which damps oscillations of $V$ and yields smoother gradients, thereby training is more stable particular on sharp potentials such as molecular systems. 

\section{Proofs and Derivations}\label{sec:proofs and derivations}


\begin{assumption}\label{assumption:operator} We assume a linear operator $\calA:\calH\to\calH$ that generates a $C_0$-semigroup and is exponentially stable, dissipative $\ie \la \calA\bx,\bx\ra_{\calH}\le -\lambda\norm{\bx}_{\calH}^2$ for some $\lambda>0$, trace class $\ie \Tr(\calA)<\infty$, and self-adjoint $\ie \calA=\calA^{\dag}$. We also take a $Q$-Wiener process with a self-adjoint trace-class operator $Q$ that commutes with $\calA$ $\ie Q\calA=\calA Q$.
\end{assumption}

\begin{assumption}\label{assumption SMP} For each $(\bx,\alpha)\in\calH\times\calU$, the maps $t\mapsto f(t,\bx,\alpha)$, $t\mapsto g(t,\bx,\alpha)$, and $t\mapsto l(t,\bx,\alpha)$ are predictable. The function $h(\bx)$ is $\calF_T$-measurable. For each $(t,\alpha,\omega)\in[0,T]\times\calU\times\Omega$, the functions $f$, $g$, $l$, and $h$ are globally twice Fréchet differentiable with respect to $\bx$. The derivatives $D_{\bx}f$, $D_{\bx}g$, $D_{\bx\bx}f$, $D_{\bx\bx}g$, $D_{\bx\bx}l$, and $D_{\bx\bx}h$ are continuous in $\bx$ and uniformly bounded by a constant $K$. Moreover, we assume the growth bounds:
\[
    & |f(t, \bx, \alpha)| + \norm{g(t, \bx, \alpha)} + \norm{D_{\bx}l(t, \bx, \alpha)} + \norm{D_{\bx} h(\bx)} \leq K(1 + \norm{\bx}_{\calH} + |\alpha|_{\calU}), \\
    & |l(t, \bx, \alpha)| + |h(\bx)| \leq K(1 + \norm{\bx}_{\calH} + |\alpha|^2_{\calU}).
\]
\end{assumption}

\begin{assumption}\label{assumption: value function}
The function $\mc{V}: [0, T]\times \mc{H} \to \mathbb{R}$ and its derivatives $D_{\mb{x}} \mc{V}, D_{\mb{xx}} \mc{V}, \partial_t \mc{V}$ are uniformly continuous on bounded subsets of $[0, T] \times \mc{H}$ and $(0, T) \times \mc{H}$, respectively.
    Moreover, for all $(t, \mb{x}) \in (0, T) \times \mc{H}
    $, there exists $C_1, C_2 >0$ such that
    \begin{equation}
        |\mc{V}(t, \mb{x})| + |D_{\mb{x}}\mc{V}(t, \mb{x})| + |\partial_t \mc{V}(t, \mb{x})| + \norm{D_{\mb{xx}} \mc{V}(t, \mb{x})} + |\mc{A}^{\dag} D_{\mb{x}} \mc{V}(t, \mb{x})| \leq C_1(1+ |\mb{x}|)^{C_2}.
        \end{equation}
\end{assumption}

\subsection{Preliminary}

We recall two key ingredients. The first is the product rule for Hilbert space semimartingales, also called stochastic integration by parts. We use this rule to derive the adjoint matching objective. The second is a verification theorem that we use to establish the optimality.

\begin{lemma}[Stochastic integration by parts in $\calH$]\label{lemma:stochastic integration by parts} Assume $A,C$ are $\calH$–valued predictable processes with $\int_0^T \bbE\|A_s\|^2\,\rd s<\infty$ and $\int_0^T \bbE\|C_s\|^2\,\rd s<\infty$ for every $T>0$ and $B,D$ are predictable processes with values in the Hilbert–Schmidt space $\calL_2(\calU_Q,\calH)$ and $\int_0^T \bbE\|B_s\|_{\calL_2}^2\,\rd s<\infty$, $\int_0^T \bbE\|D_s\|_{\calL_2}^2\,\rd s<\infty$ for every $T>0$. Then, let us define two stochastic convolutions:
    \[
        \bX_t = \bX_0 + \int_0^t A_s ds + \int_0^t B_s d\bW^Q_s, \quad \bY_t = \bY_0 + \int_0^t C_s ds + \int_0^t D_s d\bW^Q_s
    \]
Then, for all $t \geq 0$, almost surely, we get:
\[
    \la \bX_t, \bY_t \ra_{\calH} &=  \la \bX_0, \bY_0\ra_{\calH} + \int_0^t \la \bY_s, A_s \ra_{\calH} \rd s + \int_0^t \la \bX_s, C_s \ra_{\calH} \rd s \\
    & \quad + \int_0^t \la \bY_s, B_s d\bW^Q_s \ra_{\calH} + \int_0^t \la \bX_s, D_s d\bW^Q_s \ra_{\calH} + \int_0^t \textit{Tr}\left[ B_s Q D^{\dag}_s \right] ds.
\]
Equivalently, in differential form,
\[\nonumber
\rd\la \bX_t,\bY_t\ra_{\calH} = \la \bY_t,A_t\ra_{\calH}\rd t + \la \bX_t,C_t\ra_{\calH}\rd t + \la \bY_t,B_t \rd \bW^Q_t\ra_{\calH} + \la \bX_t , D_t \rd \bW^Q_t\ra_{\calH} + \textit{Tr}\left[B_t Q D_t^{\dag}\right]\rd t .
\]
\end{lemma}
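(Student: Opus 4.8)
The plan is to reduce this infinite-dimensional product rule to the classical one-dimensional Itô product rule by a Galerkin projection and then pass to the limit, controlling every term with the stated square-integrability of $A,B,C,D$. First I would fix an orthonormal basis $\{e_k\}_{k\ge 1}$ of $\calH$ (taking the eigenbasis of $Q$ for convenience) and introduce the scalar coordinates $x^k_t := \la\bX_t,e_k\ra_{\calH}$, $y^k_t := \la\bY_t,e_k\ra_{\calH}$. Pairing the two defining stochastic-convolution equations with $e_k$ shows that $x^k$ and $y^k$ are genuine real-valued Itô processes with drifts $\la A_s,e_k\ra_{\calH}$, $\la C_s,e_k\ra_{\calH}$ and martingale parts $M^k_t:=\int_0^t\la B_s\,\rd\bW^Q_s,e_k\ra_{\calH}$, $N^k_t:=\int_0^t\la D_s\,\rd\bW^Q_s,e_k\ra_{\calH}$. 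Since $\la\pi_n\bX_t,\pi_n\bY_t\ra_{\calH}=\sum_{k\le n}x^k_ty^k_t$, where $\pi_n$ is the orthogonal projection onto $\mathrm{span}\{e_1,\dots,e_n\}$, it suffices to sum the scalar product rule over $k$ and send $n\to\infty$.

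The second step is to identify the correction term. For each $k$, the one-dimensional product rule gives $x^k_t y^k_t = x^k_0y^k_0+\int_0^t y^k_s\,\rd x^k_s+\int_0^t x^k_s\,\rd y^k_s+[x^k,y^k]_t$, and the cross-variation of the two $Q$-Wiener integrals satisfies $[x^k,y^k]_t=\int_0^t\la B_sQD_s^{\dag}e_k,e_k\ra_{\calH}\,\rd s$ by the Itô isometry for Hilbert–Schmidt integrands (using $Q=Q^{\dag}$ to place the operators symmetrically). Summing over $k$ turns this into $\int_0^t\Tr[B_sQD_s^{\dag}]\,\rd s$, which is well-defined and integrable because $B_sQD_s^{\dag}=(B_sQ^{1/2})(D_sQ^{1/2})^{\dag}$ is a product of two Hilbert–Schmidt operators, hence trace class, and Cauchy–Schwarz against $\int_0^T\bbE\norm{B_s}_{\calL_2}^2\,\rd s<\infty$ and $\int_0^T\bbE\norm{D_s}_{\calL_2}^2\,\rd s<\infty$ bounds its time integral.

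The third step is to take $n\to\infty$ term by term in the summed identity for $\la\pi_n\bX_t,\pi_n\bY_t\ra_{\calH}$. The left-hand side converges to $\la\bX_t,\bY_t\ra_{\calH}$ since $\bbE\norm{\bX_t}_{\calH}^2,\bbE\norm{\bY_t}_{\calH}^2<\infty$. The two drift sums $\sum_{k\le n}\int_0^t y^k_s\la A_s,e_k\ra_{\calH}\,\rd s$ and its $C$-analogue converge to $\int_0^t\la\bY_s,A_s\ra_{\calH}\,\rd s$ and $\int_0^t\la\bX_s,C_s\ra_{\calH}\,\rd s$ by dominated convergence, using $\int_0^T\bbE\norm{A_s}_{\calH}^2\,\rd s<\infty$ and the bound for $C$. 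The martingale sums $\sum_{k\le n}\int_0^t y^k_s\,\rd M^k_s$ reassemble into $\int_0^t\la\bY_s,B_s\,\rd\bW^Q_s\ra_{\calH}$ (and likewise the $D$ term), and I would justify convergence in $L^2(\Omega)$, uniformly in $t\in[0,T]$, via Doob's maximal inequality and the Itô isometry, the residuals carrying a factor $(I-\pi_n)$ applied to square-integrable integrands and therefore vanishing; the trace terms converge by Step 2. Matching the pieces yields the integral identity, and reading off the differential form is immediate.

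I expect the main obstacle to be the bookkeeping in Step 3 — specifically, verifying that the finite sum of scalar stochastic integrals genuinely reconstitutes the Hilbert-space stochastic integral $\int_0^t\la\bY_s,B_s\,\rd\bW^Q_s\ra_{\calH}$ and that the limit may be interchanged with it, which is exactly where the $L^2$ integrability hypotheses on all four processes are essential. A cleaner alternative worth noting is to invoke the infinite-dimensional Itô formula of Da Prato–Zabczyk directly for the smooth map $F(\bx,\by)=\la\bx,\by\ra_{\calH}$ on $\calH\times\calH$: its second Fréchet derivative is the bilinear form $\big((h_1,k_1),(h_2,k_2)\big)\mapsto\la h_1,k_2\ra_{\calH}+\la h_2,k_1\ra_{\calH}$, and plugging the joint diffusion coefficient $(B_t,D_t)$ into the Itô correction produces precisely $\Tr[B_tQD_t^{\dag}]\,\rd t$, bypassing the Galerkin argument at the cost of citing a heavier result.
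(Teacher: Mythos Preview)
Your proposal is correct. Your primary route---Galerkin projection onto a finite orthonormal system, applying the scalar It\^o product rule coordinate-wise, summing, and passing to the limit with $L^2$ control---is a valid and standard approach, and you have correctly identified both the form of the cross-variation term and the place (Step~3) where the square-integrability hypotheses are essential.

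The paper, however, takes exactly the route you flag at the end as the ``cleaner alternative'': it applies the infinite-dimensional It\^o formula of Da~Prato--Zabczyk directly to the $C^2$ map $f(\bx,\by)=\la\bx,\by\ra_{\calH}$ on the product space $\calH\times\calH$, packaging $(\bX_t,\bY_t)$ as a single $\calH\times\calH$-valued semimartingale with joint diffusion coefficient $\tilde B_s u=(B_su,D_su)$. Since $D_{\bx\bx}f=D_{\by\by}f=0$ and $D_{\bx\by}f$ is the bilinear form $(h,k)\mapsto\la h,k\ra_{\calH}$, the It\^o correction is read off in one line as $\Tr[B_tQD_t^{\dag}]\,\rd t$, and the drift and martingale terms fall out from $D_{\bx}f=\by$, $D_{\by}f=\bx$. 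Your Galerkin argument is more self-contained and shows explicitly how the trace arises as a sum of scalar cross-variations, at the price of the limit-interchange bookkeeping you anticipated; the paper's argument is shorter but defers all the analysis to the cited It\^o formula. Both are standard, and you correctly predicted the paper's choice.
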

\begin{proof}
    Let us define $\bbR$-valued process $\bZ_t := \la \bX_t, \bY_t \ra_{\calH}$ and $C^2$ mapping $f : \calH \times \calH \to \bbR$ such that $f(\bx, \by) = \la \bx, \by \ra_{\calH}$ for any $\bx, \by \in \calH$. Since 
    \[
        D_{\bx}f(\bx, \by) = \by, D_{\by}f(\bx, \by) = \bx, D_{\bx\bx} f(\bx, \by) = 0, D_{\by\by} f(\bx, \by) = 0,
    \]
    and $D_{\bx\by} f(\bx, \by)$ is the continuous bilinear form $(h, k) \to \la h, k \ra_{\calH}$. Now, write the semi-martingale decomposition on $\calH \times \calH$:
    \[
        (\bX_t, \bY_t) = (\bX_0, \bY_0) + \int_0^t (A_s, B_s) ds + \int_0^t \tilde{B}_s d\bW^Q_s,
    \]
    where $\tilde{B}_s : \calU \to \calH \times \calH$ such that $\tilde{B}_s u = (B_s u, D_s u)$. By applying the infinite-dimensional Itô formula~\citep[Chapter 4.4]{da2014stochastic} for $f$ on $\calH \times \calH$ gives:
    \[
        d\bZ_t = \la D_{\bx} f(\bX_t, \bY_t), d\bX_t \ra_{\calH} + \la D_{\by} f(\bX_t, \bY_t), d\bY_t \ra_{\calH} + \tfrac{1}{2}\textit{Tr}\left[D_{\bx\by} f(\bX_t, \bY_t)(\tilde{B}_t Q \tilde{B}^{\dag}_t) \right] \rd t,
    \]
    where we compute the trace by expanding the function in any orthonormal basis $\{\phi^{(k)}\}_{k \geq 1}$ of $\calU$:
    \[
        \tfrac{1}{2}\textit{Tr}\left[D_{\bx\by} f(\bX_t, \bY_t)(\tilde{B}_t Q \tilde{B}^{\dag}_t) \right] = \sum_{k \geq 1} \lambda^{(k)} \left( \la B_t, \phi^{(k)} \ra_{\calH} + \la D_t, \phi^{(k)} \ra_{\calH} \right) = \textit{Tr}\left[ B_t Q D^{\dag}_t \right],
    \]
    where $\lambda^{(k)}$ is eigen-value corresponding eigen-basis $\phi^{(k)}$ such that $Q \phi^{(k)} = \lambda^{(k)} \phi^{(k)}$.  Therefore,
    \[
        d \la \bX_t, \bY_t \ra_{\calH} = \la \bY_t, d\bX_t \ra_{\calH} + \la \bX_t, d\bY_t \ra_{\calH} + \textit{Tr}\left[ B_t Q D^{\dag}_t \right] \rd t.
    \]
    By plugging $d\bX_t = A_t \rd t  + B_t d\bW^Q_t$ and  $d\bY_t = C_t\rd t +  D_t d\bW^Q_t$, we get the desired results.
\end{proof}

\begin{lemma}[Verification Theorem]\label{lemma:verification} Let $\mc{V}$ be a solution of Hamilton-Jacobi-Bellman (HJB) equation:
\[\label{eq:PDE HJB}
    \partial_t \calV_t + \calA \calV_t + \inf_{\alpha \in \bbA} \left[\la \alpha, \sigma_t Q^{1/2} D_{\bx} \calV_t \ra + \tfrac{1}{2}\norm{\alpha}^2_{\calH} \right] = 0, \quad \calV(T, \mb{x}) = g(\mb{x})
\] which satisfying the~\Cref{assumption: value function}. Then, we have $\mc{V}(0, \mb{x}_0) \leq \mathcal{J}(\alpha, \bbP^{\alpha})$ for every $\alpha \in \bbA$. Now, let $(\alpha^{*}, \mb{X}^{\star})$ be an admissible pair such that
\[\label{eq:HJB appx}
\alpha_t^{*}(t, \bX^{\star}_t) = \arginf_{\alpha \in \bbA} \left[\la   \alpha_t, \sigma_t Q^{1/2} D_{\mb{x}} \calV_t \ra + \frac{1}{2}\norm{\alpha_t}_{\calH}^2 \right] = -\sigma_t Q^{1/2}D_{\bx}\calV(t, \bX^{\star}_t)
\]
for almost every $t \in [0,  T]$ and $\bbP$-almost surely. Then $(\alpha^{*}, \bX^{\alpha^{*}})$ satisfying $\calV(0, \bx_0) = \mathcal{J}(\alpha^{\star}, \bbP^{\star})$.
\end{lemma}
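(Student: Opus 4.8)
I would run the classical verification argument, carried out at the level of \emph{mild} solutions on $\calH$. First, fix an arbitrary admissible control $\alpha\in\bbA$ with controlled process $\bX^{\alpha}$ solving~\eqref{eq:controlled SDE}, and apply the infinite-dimensional It\^o formula~\citep[Chapter~4.4]{da2014stochastic} to the real-valued process $t\mapsto\calV(t,\bX^{\alpha}_t)$ on $[0,T]$. Because $\bX^{\alpha}$ is only a mild solution, the term $\calA\bX^{\alpha}_t$ is not literally defined, so the rigorous route is to replace $\calA$ by its Yosida approximation $\calA_n:=n\calA(nI-\calA)^{-1}$ --- for which $\calA_n\bX^{\alpha,n}_t$ is well defined and the usual It\^o formula applies --- and then to let $n\to\infty$. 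The polynomial bounds on $\calV$, $D_{\bx}\calV$, $\partial_t\calV$, $D_{\bx\bx}\calV$ and, crucially, on $\calA^{\dag}D_{\bx}\calV$ in~\Cref{assumption: value function}, together with finiteness of all polynomial moments of $\sup_{t\le T}\norm{\bX^{\alpha}_t}_{\calH}$ (which follows from the dissipative, trace-class structure in~\Cref{assumption:operator} via a Fernique/Burkholder estimate), justify this passage to the limit and, at the same time, show that $\int_0^{\cdot}\la D_{\bx}\calV(t,\bX^{\alpha}_t),\sigma_t\,\rd\bW^Q_t\ra_{\calH}$ is a true martingale with zero mean.

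Taking expectations in the It\^o expansion then gives
\begin{equation*}
\bbE\!\left[\calV(T,\bX^{\alpha}_T)\right]-\calV(0,\bx_0)
=\bbE\!\left[\int_0^T\!\Big(\partial_t\calV_t+\calA\calV_t+\la\alpha_t,\sigma_t Q^{1/2}D_{\bx}\calV_t\ra\Big)\rd t\right],
\end{equation*}
where $\calA\calV_t$ abbreviates the action of the uncontrolled OU generator $\la\calA\bx,D_{\bx}\calV_t\ra_{\calH}+\tfrac12\Tr\!\left[\sigma_t^2 Q D_{\bx\bx}\calV_t\right]$ on $\calV$. Next I would invoke the HJB equation~\eqref{eq:PDE HJB}: since $\alpha_t\mapsto\la\alpha_t,\sigma_t Q^{1/2}D_{\bx}\calV_t\ra+\tfrac12\norm{\alpha_t}_{\calH}^2$ is a convex quadratic minimized at $\alpha_t=-\sigma_t Q^{1/2}D_{\bx}\calV_t$, one has $\partial_t\calV_t+\calA\calV_t+\la\alpha_t,\sigma_t Q^{1/2}D_{\bx}\calV_t\ra\ge-\tfrac12\norm{\alpha_t}_{\calH}^2$ for \emph{every} admissible $\alpha_t$, with equality iff $\alpha_t$ is that minimizer. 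Substituting this pointwise lower bound and the terminal condition $\calV(T,\cdot)=g$ yields
\begin{align*}
\calV(0,\bx_0)&\le\bbE\!\left[g(\bX^{\alpha}_T)\right]+\bbE\!\left[\int_0^T\tfrac12\norm{\alpha_t}_{\calH}^2\,\rd t\right]\\
&=\bbE\!\left[g(\bX^{\alpha}_T)\right]+\bbE\!\left[\int_0^T\tfrac12\norm{Q^{1/2}\alpha_t}_{\calU}^2\,\rd t\right]=\mathcal{J}(\alpha,\bbP^{\alpha}),
\end{align*}
the middle step being the Cameron--Martin identity $\norm{\alpha}_{\calH}=\norm{Q^{1/2}\alpha}_{\calU}$. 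Since $\alpha\in\bbA$ was arbitrary, this proves the first assertion.

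For the equality, I would take the admissible pair $(\alpha^{*},\bX^{\star})$ with $\alpha^{*}_t(t,\bX^{\star}_t)=-\sigma_t Q^{1/2}D_{\bx}\calV(t,\bX^{\star}_t)$ and repeat the computation along $\bX^{\star}$. Because this feedback realizes the infimum in~\eqref{eq:PDE HJB} pointwise, the single inequality used above becomes an equality for a.e.\ $t$ and $\bbP$-a.s., so the chain collapses to $\calV(0,\bx_0)=\bbE[g(\bX^{\star}_T)]+\bbE[\int_0^T\tfrac12\norm{Q^{1/2}\alpha^{*}_t}_{\calU}^2\,\rd t]=\mathcal{J}(\alpha^{*},\bbP^{\star})$; admissibility of the closed-loop pair --- existence of a mild solution under the feedback and square-integrability of $\alpha^{*}$ --- is exactly what the statement assumes. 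The algebra here is routine; the real obstacle is the first step, namely making the It\^o expansion rigorous for a mild solution and verifying the true-martingale and uniform-integrability claims, which is precisely where the polynomial control of $\calV$ and its derivatives (including $\calA^{\dag}D_{\bx}\calV$) in~\Cref{assumption: value function} and the dissipative, trace-class structure in~\Cref{assumption:operator} enter.
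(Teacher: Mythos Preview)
Your proposal is correct and follows essentially the same classical verification argument as the paper: apply It\^o's formula to $\calV(t,\bX^{\alpha}_t)$, take expectations, and use the HJB equation to turn the resulting identity into the inequality $\calV(0,\bx_0)\le\calJ(\alpha,\bbP^{\alpha})$, with equality along the pointwise Hamiltonian minimizer. You are somewhat more explicit than the paper about the infinite-dimensional technicalities (Yosida approximation, true-martingale property, the Cameron--Martin identity $\norm{\alpha}_{\calH}=\norm{Q^{1/2}\alpha}_{\calU}$), but the logical skeleton is identical.
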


\begin{proof}
To start the proof, we formally minimize $F(D_{\mb{x}}\mc{V})$. Using results from \citep{DaPrato1997}, we obtain the candidate minimizer and proceed accordingly:
\begin{equation}
F(\mb{x}) = \inf_{\alpha \in \bbA} \left[ \la \mb{x}, \alpha \ra + \frac{1}{2}\norm{\alpha}_{\calH}^2\right] = -\frac{1}{2}\norm{\mb{x}}^2_{\calH}.
\end{equation}
Thus, $F(\sigma_t Q^{1/2} D_{\mb{x}}\mc{V})$ is minimized at at $\alpha^{*} = -\sigma Q^{1/2}D_{\mb{x}}\mc{V}$. Next, applying Itô's formula~\citep[Chapter 4.4]{da2014stochastic}  to $\mc{V}$ and take the expectation on both sides to obtain
\[\label{eq:V function Ito} 
&\bbE_{\bbP^{\alpha}}\left[\mc{V}(T, \mb{X}^{\alpha}_T)\right] =  \bbE_{\bbP^{\alpha}}\left[g(\mb{X}^{\alpha}_T)\right]= \\
&=   \mc{V}(0, \bx_0) + \bbE_{\bbP^{\alpha}}\left[\int_0^T \left(\partial_t \calV(t, \bX^{\alpha}_t) + \calA\calV(t, \mb{X}_t^{\alpha}) + \la \sigma_t Q^{1/2} D_{\mb{x}}\mc{V}(t, \mb{X}^{\alpha}_t),  \alpha_t \ra_{\calH} \right) \rd t\right],
\]
Using that $\mc{V}$ satisfies \eqref{eq:HJB appx}, add $\bbE_{\bbP^{\alpha}} \left[\int_0^T \tfrac{1}{2}\norm{\alpha_t}_{\calH}^2 \rd t\right]$ to both sides. Then the LHS of \eqref{eq:V function Ito} becomes:
\[
\bbE_{\bbP^{\alpha}}\left[g(\mb{X}^{\alpha}_T) + \textstyle{\int_0^T} \tfrac{1}{2}\norm{\alpha_t}_{\calH}^2 \rd t \right] = \mathcal{J}(\alpha, \bbP^{\alpha}).
\]
Now for the RHS of the equation~\eqref{eq:V function Ito}:
\[
    &\calV(0, \bx_0) + \bbE_{\bbP^{\alpha}}\left[ \int_0^T \left( \tfrac{1}{2}\norm{\alpha_t}^2_{\calH} + \partial_t \calV(t, \bX_t^{\alpha}) + \calA \calV(t, \bX_t^{\alpha}) + \la \sigma_t Q^{1/2} D_{\bx} \calV(t, \bX_t^{\alpha}), \alpha_t \ra_{\calH}   \right) \rd t\right] \\
    & \stackrel{(i)}{=} \calV(0, \bx_0) + \bbE_{\bbP^{\alpha}}\left[\int_0^T \left( \la \sigma_t Q^{1/2} D_{\bx} \calV(t, \bX^{\alpha}_t), \alpha_t \ra_{\calH} + \tfrac{1}{2}\norm{\alpha_t}^2_{\calH} - F(\sigma_t Q^{1/2} D_{\bx} \calV(t, \bX^{\alpha}_t) \right) \rd t \right].
\]
Here $(i)$ holds because we add and subtract $\bbE_{\bbP^{\alpha}}\left[ \int_0^T F(D_{\bx}\calV(t, \bX_t^{\alpha})) \rd t \right]$ and use again that $\calV$ satisfies~\eqref{eq:HJB appx}. Therefore, we obtain the following equation:
\[\label{eq: HJB derivation 1}
    & \calJ(\alpha, \bbP^{\alpha}) = \calV(0, \bx_0) \\
    & \quad\quad\quad + \bbE_{\bbP^{\alpha}}\left[\int_0^T \left( \la \sigma_t Q^{1/2} D_{\bx} \calV(t, \bX^{\alpha}_t), \alpha_t \ra_{\calH} + \tfrac{1}{2}\norm{\alpha_t}^2_{\calH} - F(\sigma_t Q^{1/2} D_{\bx} \calV(t, \bX^{\alpha}_t) \right) \rd t \right].
\]
By definition, $\left[ \la \sigma_t Q^{1/2} D_{\bx}\calV(t, \mb{X}^{\alpha}_t), \alpha_t \ra_{\calH} + \frac{1}{2}\norm{\alpha_t}^2_{\calH} \right] 
 - F(D_{\bx}\calV(t, \bX_t^{\alpha})) \geq  0. $ Thus, taking the infimum over $\alpha \in \bbA$ on the RHS of \eqref{eq: HJB derivation 1} yields $\mathcal{J}(\alpha, \bbP^{\alpha}) \geq \mc{V}(0, \mb{x}_0) $.
We already verified that $F(\sigma_t Q^{1/2} D_{\bx} \calV)$ attains the infimum at $\alpha^{\star}_t = -\sigma_t Q^{1/2} D_{\bx} \calV$, so choose $u = \alpha^{\star}$. Then,
\begin{equation}
    \left[ \la  \sigma Q^{1/2} D_{\mb{x}}\mc{V}(s, \mb{X}^{u}_s), u_s \ra + \frac{1}{2}\norm{u_s}^2 \right] 
 - F( \sigma Q^{1/2} D_{\mb{x}}\mc{V}(s, \mb{X}_s^{u})) = 0.
\end{equation}
Thus, we get $\mathcal{J}(u, \bbP^u) = \mc{V}(t, \mb{x}) $.
Combining with \eqref{eq: HJB derivation 1}, we conclude that $(\alpha^{\star}, \mb{X}^{\alpha^{\star}})$ is optimal for $(t,\mb{x})\in[0,T]\times\mc{H}$. This completes the proof.
 \end{proof}

\subsection{Derivation of Divergence Between Path Measures}\label{sec:girsanov}

Here we present an infinite-dimensional generalization of Girsanov theorem~\citep[Theorem~10.14]{da2014stochastic}, which plays a crucial role in estimating the divergence between two path.
\begin{theorem}[Generalized Girsanov]\label{theorem:girsanov} Let $\{\zeta_t\}_{0 \leq t \leq T}$ be a $\calU$-valued predictable process such that
\begin{equation}
    \mathbb{P}\left(\int_0^T \norm{\zeta_t}_{\calU}^2 dt < \infty \right) = 1, \quad \mathbb{E}\left[\exp\left( \int_0^T \la \zeta_t, d\mb{W}^Q_t\ra_{\calU} - \frac{1}{2} \int_0^T  \norm{\zeta_t}_{\calU}^2 dt\right)\right]=  1.
\end{equation}
Then the process $\mb{\tilde{W}}^Q_t = \mb{W}^Q_t - \int_0^t \zeta_s ds$ is a $Q$-Wiener process with respect to the filtration on the probability space $(\Omega, \mc{F}, \mathbb{Q})$ where
\begin{equation}
    d\mathbb{Q} = \exp\left( \int_0^T \la \zeta_t, d\mb{W}^Q_t\ra_{\calU} - \frac{1}{2} \int_0^T  \norm{\zeta_t}_{\calU}^2 dt\right) d\mathbb{P}.
\end{equation}
Or alternatively, by substituting $\mb{W}^Q_t = \mb{\tilde{W}}^Q_t +  \int_0^t \zeta_s ds$ to~\eqref{eq:Girsanov_1_appx}, we obtain
\begin{equation}
    d\mathbb{Q} = \exp\left( \int_0^T \la \zeta_t, d\mb{\tilde{W}}^Q_t\ra_{\calU} + \frac{1}{2} \int_0^T  \norm{\zeta_t}_{\calU}^2 dt\right) d\mathbb{P}.
\end{equation}
\end{theorem}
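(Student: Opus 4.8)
The plan is to run the classical Girsanov argument in the Hilbert-space setting, reducing everything to a real-valued computation by testing the candidate process against fixed directions $h\in\calH$, and then to identify the resulting object via the infinite-dimensional L\'evy characterization of a $Q$-Wiener process.

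First I would introduce the real-valued continuous local martingale $M_t:=\int_0^t\la\zeta_s,\rd\bW^Q_s\ra_\calU$ under $\bbP$. Writing $\bW^Q=Q^{1/2}\bW$ with $\bW$ cylindrical gives $\la\zeta_s,\rd\bW^Q_s\ra_\calU=\la Q^{-1/2}\zeta_s,\rd\bW_s\ra_\calH$ (the first hypothesis guarantees $\zeta_s\in\calU$ a.s. and the requisite square-integrability, so this is well defined), whence $[M]_t=\int_0^t\norm{\zeta_s}^2_\calU\,\rd s$. Then $\calE_t:=\exp\!\big(M_t-\tfrac12[M]_t\big)$ is the Dol\'eans--Dade exponential, hence a nonnegative $\bbP$-local martingale and a supermartingale; the second hypothesis $\bbE[\calE_T]=1$ forces this supermartingale to have constant mean, so $\calE$ is a genuine $\bbP$-martingale on $[0,T]$ and $\rd\bbQ:=\calE_T\,\rd\bbP$ is a probability measure equivalent to $\bbP$ on $\calF_T$.

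Next, for fixed $h\in\calH$ I would show $N^h_t:=\la h,\tilde\bW^Q_t\ra_\calH=\la h,\bW^Q_t\ra_\calH-\int_0^t\la h,\zeta_s\ra_\calH\,\rd s$ is a continuous $\bbQ$-local martingale. By the Bayes rule this is equivalent to $N^h\calE$ being a $\bbP$-local martingale, which I verify with the real-valued It\^o product rule: since $\rd\calE_t=\calE_t\,\rd M_t$, the relevant cross-variation is $\rd[N^h,\calE]_t=\calE_t\,\rd[\la h,\bW^Q\ra_\calH,M]_t=\calE_t\la h,\zeta_t\ra_\calH\,\rd t$ (computed through the cylindrical representation as above), which exactly cancels the $-\calE_t\la h,\zeta_t\ra_\calH\,\rd t$ drift, leaving $\rd(N^h_t\calE_t)=N^h_t\,\rd\calE_t+\calE_t\la h,\rd\bW^Q_t\ra_\calH$, a driftless local martingale. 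Because quadratic (co)variation is invariant under an equivalent change of measure, $[N^h,N^k]_t=[\la h,\bW^Q\ra_\calH,\la k,\bW^Q\ra_\calH]_t=t\,\la Qh,k\ra_\calH$ for all $h,k\in\calH$. Invoking the Hilbert-space L\'evy characterization \citep[cf.][]{da2014stochastic}---equivalently, checking that $t\mapsto\exp\!\big(i\la h,\tilde\bW^Q_t\ra_\calH+\tfrac12 t\la Qh,h\ra_\calH\big)$ is a $\bbQ$-martingale for every $h$---identifies $\tilde\bW^Q$ as a $Q$-Wiener process with respect to the original filtration under $\bbQ$. The alternative density formula is then immediate: substituting $\bW^Q_t=\tilde\bW^Q_t+\int_0^t\zeta_s\,\rd s$ yields $\int_0^T\la\zeta_t,\rd\bW^Q_t\ra_\calU=\int_0^T\la\zeta_t,\rd\tilde\bW^Q_t\ra_\calU+\int_0^T\norm{\zeta_t}^2_\calU\,\rd t$, so $M_T-\tfrac12[M]_T=\int_0^T\la\zeta_t,\rd\tilde\bW^Q_t\ra_\calU+\tfrac12\int_0^T\norm{\zeta_t}^2_\calU\,\rd t$.

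The main obstacle is bookkeeping rather than conceptual: one must consistently interpret $\int\la\zeta_s,\rd\bW^Q_s\ra_\calU$ through the cylindrical Wiener process because $\calU$ is the non-closed Cameron--Martin space and $Q^{-1/2}$ is unbounded, and one must use the \emph{joint} martingale (or exponential-martingale) property---not merely the scalar marginals $N^h$---to upgrade the projections to a bona fide $\calH$-valued $Q$-Wiener process. Establishing that $\calE$ is a true rather than merely local martingale would ordinarily be the crux, but it is assumed as a hypothesis here, so what remains is the product-rule computation and the citation of the Hilbert-space L\'evy theorem.
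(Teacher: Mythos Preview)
Your proof sketch is correct and follows the standard route: form the Dol\'eans--Dade exponential of the real-valued stochastic integral $M_t=\int_0^t\la\zeta_s,\rd\bW^Q_s\ra_\calU$, use the hypothesis $\bbE[\calE_T]=1$ to upgrade it to a true martingale and define $\bbQ$, then verify via the Bayes rule / It\^o product computation that each projection $\la h,\tilde\bW^Q\ra_\calH$ is a $\bbQ$-local martingale with the right covariation, and conclude by the Hilbert-space L\'evy characterization. The substitution for the alternative density is straightforward.

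The paper itself does not supply a proof: it simply cites \citep[Theorem~10.14]{da2014stochastic}. Your argument is essentially the one given there, so you have effectively unpacked that citation. The only remarks worth adding are that your caution about interpreting $\int\la\zeta_s,\rd\bW^Q_s\ra_\calU$ through the cylindrical process and the unbounded $Q^{-1/2}$ is well placed, and that the joint exponential-martingale check (rather than the individual $N^h$) is indeed what the L\'evy characterization in infinite dimensions requires; both points are handled in the Da~Prato--Zabczyk reference, so in the paper's context the one-line citation suffices.
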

\begin{proof}
    Proof can be found in~\citep[Theorem~10.14]{da2014stochastic}
\end{proof}

Let us define $\zeta_t = Q^{1/2}\alpha_t$. For a $\bbP^{\alpha}$ $Q$-Wiener process $\tilde \bW^Q_t$,  $\bW^Q_t = \tilde \bW^Q_t + \int_0^t Q^{1/2} \alpha_s ds$ is a $Q$-Wiener process on $\bbP$ because
\[
    d\bX_t &= \calA \bX_t dt +  \sigma_t Q^{1/2}\alpha_t dt + \sigma_t d \tilde{\bW}^{Q}_t \\
    & = \calA \bX_t dt + 
\sigma_t Q^{1/2}\alpha_t dt + \sigma_t \left[d\bW^Q_t - Q^{1/2} \alpha_tdt \right] \\
 & = \calA \bX_t dt + \sigma_tdW^Q_t
\]
Hence, we get
\begin{equation}\label{eq:Girsanov_1_appx}
    d\bbP^{\alpha} = \exp\left( \int_0^T \la \zeta_s, d\tilde\bW^Q_t \ra_{\calU} + \frac{1}{2} \int_0^T  \norm{\zeta_s}_{\calU}^2 dt\right) d\mathbb{P}.
\end{equation}
Now, by change of measure in~\eqref{eq:target path measure}, we get the following relation: 
\[
    d\bbP^{\star} &= \frac{1}{\tilde\calZ} e^{-U(\bX_T) -\log q_T(\bX_T)} d\bbP \\
    & = \frac{1}{\tilde\calZ} e^{-U(\bX^{\alpha}_T) -\log q_T(\bX^{\alpha}_T)} \frac{d\bbP}{d\bbP^{\alpha} }d\bbP^{\alpha},\label{eq:target path measure_appx}
\]
where $\tilde \calZ = \bbE\left[ \frac{1}{\tilde\calZ} e^{-U(\bX_T) -\log q_T(\bX_T)}\right]$ is normalization constant. 
Combining~\eqref{eq:Girsanov_1_appx} and~\eqref{eq:target path measure_appx} and take logarthm on both sides, we get
\[
    \log \frac{d\bbP^{\star}}{d\bbP^{\alpha}} &= \log \frac{1}{\tilde\calZ} e^{-U(\bX^{\alpha}_T) -\log q_T(\bX^{\alpha}_T)} + \log \frac{d\bbP}{d\bbP^{\alpha}} \\
    & = -U(\bX^{\alpha}_T) -\log q_T(\bX^{\alpha}_T) - \int_0^T \la \zeta_t, d\tilde\bW^Q_t \ra_{\calU} - \frac{1}{2} \int_0^T  \norm{Q^{1/2}\alpha_t}_{\calU}^2 dt - \log \tilde \calZ. \label{eq:Girsanov_2_appx}
\]
Since $\tilde\bW^Q_t$ is $\bbP^{\alpha}$ $Q$-Wiener process, taking expecatation with respect to $\bbP^{\alpha}$ on~\eqref{eq:Girsanov_2_appx} we get
\[
    \bbE_{\bbP^{\alpha}} \left[ \frac{d\bbP^{\alpha}}{d\bbP^{\star}} \right] = \bbE_{\bbP^{\alpha}} \left[   \frac{1}{2} \int_0^T  \norm{Q^{1/2}\alpha_t}_{\calU}^2 dt + \left(U(\bX^{\alpha}_T) + \log q_T(\bX^{\alpha}_T) \right)\right] + \log \tilde \calZ.
\]
Then, since $\tilde \calZ$ is constant, we get the desired KL-divergence minimization objective:
\[
    \min_{\alpha} \KL(\bbP^{\alpha} | \bbP^{\star}) = \min_{\alpha} \bbE_{\bbP^{\alpha}} \left[   \frac{1}{2} \int_0^T  \norm{Q^{1/2}\alpha_t}_{\calU}^2 dt + \left(U(\bX^{\alpha}_T) + \log q_T(\bX^{\alpha}_T) \right)\right].
\]

\subsection{Proof of Lemma~\ref{lemma:partition}}\label{sec:finite partition}

\lmone*

\begin{proof}
The Fernique's integrability theorem~\citep[Theorem 2.7]{da2014stochastic} states that for any centred Gaussian measure $\nu = \calN(0, Q)$ with any $\text{Tr}(Q) < \infty$ on a separable Banach space $\calB$ (in particular on $\calH$) there exists $r >0$ such that $\int_\calH e^{r \norm{\bx}^2_{\calH}} d\nu(\bx) < \infty$. Because the energy satisfies quadratic lower bound $U(\bx) \geq \beta \norm{\bx}^2_{\calH} - C$, we obtain $e^{- U(\bx)} \leq \tilde{C} e^{-\beta \norm{\bx}^2}$,
where $\tilde{C} = e^{C}$. Hence, we get:
\[
    \calZ = \int_{\calH} e^{- U(\bx)} d\nu(\bx) \leq \tilde{C} \int_{\calH}  e^{-\beta \norm{\bx}^2} d\nu(\bx) \leq \tilde{C} \int_{\calH}  e^{r \norm{\bx}^2}  d\nu(\bx) < \infty,
\]
which confirms that the partition function $\calZ$ is finite.
\end{proof}

\subsection{Proof of Theorem~\ref{theorem:infinite dimensional density}}

\thmone*
\begin{proof} From~\eqref{eq:differential Lyapunov equation}, we get the covariance idendity:
\[
    Q_t &= \int_0^t \sigma^2_{t-r} e^{r\calA} Q e^{r\calA^{\dag}} \rd r \\
    &= \int_0^t \sigma^2_{\infty} e^{r\calA} Q e^{r\calA^{\dag}} \rd r  + \int_0^t (\sigma^2_{t-r} - \sigma^2_{\infty})  e^{r\calA} Q e^{r\calA^{\dag}} \rd r \\
    & =  Q_{\infty} - \int_t^{\infty}  e^{r\calA} Q e^{r\calA^{\dag}} \rd r + \int_0^t (\sigma^2_{t-r} - \sigma^2_{\infty})  e^{r\calA} Q e^{r\calA^{\dag}} \rd r \\
    & = Q_{\infty} - e^{t\calA} Q_{\infty} e^{t\calA^{\dag}}+ \int_0^t (\sigma^2_{t-r} - \sigma^2_{\infty})  e^{r\calA} Q e^{r\calA^{\dag}} \rd r.
\]
Since, $\int_0^t (\sigma^2_{t-r} - \sigma^2_{\infty})  e^{r\calA} Q e^{r\calA^{\dag}} \rd r \to o(1)$ as we choose $\sigma_t \to \sigma_{\infty}$, we get the relation:
\[\label{eq:Q_t relation}
    Q_t \approx Q_{\infty} - e^{t\calA}Q_{\infty} Q^{t\calA^{\dag}} = Q^{1/2}_{\infty} \Theta_t Q^{1/2}_{\infty}.
\]
Hence, by applying~\citep[Proposition 1.3.11]{da2002second}, for two centered Gaussians $\calN(0, Q_t)$ and $\calN(0, Q_{\infty})$,
\[
    q_t(0, \bx) = \frac{d\calN(0, Q_t)}{d\calN(0, Q_{\infty})}(\bx) = \textit{det}(\Theta)^{-\tfrac{1}{2}} \exp\left[ -\tfrac{1}{2} \la  e^{2t\calA}\Theta^{-1}\bx, \bx\ra_{\infty}  \right].
\]
Now, for the shift from centered case to the general mean function $e^{t\calA}\bx_0$, we utilize the Cameron-Martin theorem~\citep[Theorem. 1.3.6]{da2002second}. It implies that:
\[
    & \frac{d\calN(e^{t\calA}\bx_0, Q_t)}{d\calN(0, Q_{t})}(\bx) = \exp \left[ \la Q_t^{-\tfrac{1}{2}} e^{t\calA} \bx_0, Q_t^{-\tfrac{1}{2}} \bx  \ra_{\calH} - \tfrac{1}{2} \la Q_t^{-\tfrac{1}{2}} e^{t\calA} \bx_0, Q_t^{-\tfrac{1}{2}} e^{t\calA} \bx_0 \ra_{\calH} \right] \\
    & = \exp \left[ \la Q_\infty^{\tfrac{1}{2}} Q^{-1}_t e^{t\calA} \bx_0, Q_\infty^{-\tfrac{1}{2}}  \bx  \ra_{\calH} - \tfrac{1}{2} \la Q_\infty^{\tfrac{1}{2}} Q_t^{-1} e^{t\calA} \bx_0, Q_\infty^{-\tfrac{1}{2}}e^{t\calA} \bx_0 \ra_{\calH} \right] \\
    & \stackrel{(i)}{=}  \exp \left[ \la \Theta^{-1}Q^{-\tfrac{1}{2}}_{\infty} e^{t\calA} \bx_0, Q_\infty^{-\tfrac{1}{2}}  \bx  \ra_{\calH} - \tfrac{1}{2} \la \Theta^{-1}Q^{-\tfrac{1}{2}}_{\infty} e^{t\calA} \bx_0, Q_\infty^{-\tfrac{1}{2}}e^{t\calA} \bx_0 \ra_{\calH} \right]  \\
    & = \exp \left[ \la \Theta^{-1} \mean^{\bx_0}_t,   \bx  \ra_{\infty} - \tfrac{1}{2} \la \Theta^{-1}  \mean^{\bx_0}_t, \mean^{\bx_0}_t \ra_{\infty} \right],
\]
where (i) follows from~\eqref{eq:Q_t relation} such that $\Theta^{-1} Q^{-1/2}_{\infty} = (Q^{-1/2}_t Q^{1/2}_{\infty})^{\dag} Q^{-1/2}_t = Q^{1/2}_{\infty} Q^{-1}_t$. Hence, by chain-rule, we get the desired result:
\[
    q_t(\bx_0, \bx) &:=  \frac{d\calN(e^{t\calA}\bx_0, Q_t)}{d\calN(0, Q_{t})}\frac{d\calN(0, Q_t)}{d\calN(0, Q_{\infty})}(\bx)  \\
    & = \textit{det}(\Theta)^{-\tfrac{1}{2}} \exp \left[- \tfrac{1}{2} \la \Theta^{-1}  \mean^{\bx_0}_t, \mean^{\bx_0}_t \ra_{\infty} +  \la \Theta^{-1} \mean^{\bx_0}_t,   \bx  \ra_{\infty} -\tfrac{1}{2} \la  e^{2t\calA}\Theta^{-1}\bx, \bx\ra_{\infty} \right]
\]
It concludes the proof.
\end{proof}

\subsection{Proof of Proposition~\ref{proposition:adjoint matching in Hilbert}}

\proptwo*
\begin{proof}
Let $\bX^{\theta} := \bX^{\alpha^{\theta}}$ be a controlled process with parametrized control policy $\alpha_{\theta}$ and $\xi_t := \partial_{\theta} \bX^{\theta}_t$ be the variation process of $\bX^{\theta}$ with respect to $\theta$. Since diffusion coefficient $\sigma_t$ is independent to $\theta$, we get the differential equation for the variation process by differentiate~\eqref{eq:controlled SDE} with respect to $\theta$ as follows:
\[\label{eq:variation process}
    d\xi_t = \calA \xi_t \dt + \sigma_t Q^{1/2}\left( D_{\bx} \alpha^{\theta}(\bX^{\theta}_t, t)\xi_t + \partial_{\theta} \alpha^{\theta}(\bX^{\theta}_t, t) \right) \dt, \quad \xi_t = 0.
\]
Now, let us define the cost functional with $\bbP^{\theta} := \bbP^{\alpha^{\theta}}$:
\[\label{eq:cost functional}
    \calJ(\alpha^{\theta}, \bbP^{\theta}) = \bbE_{\bbP^{\theta}} \left[\int_0^T \frac{1}{2}\norm{\alpha^{\theta}(t, \bX^{\theta}_t)}^2_{\calH} dt 
    + g(\bX^{\theta}_T)\right].
\]
Then, differentiating cost functional~\eqref{eq:cost functional} with respect to $\theta$ is given by, 
\[\label{eq:gradient cost functional}
    \tfrac{\rd}{\rd \theta} \calJ(\alpha^{\theta}, \bbP^{\theta}) = \bbE_{\bbP^{\theta}} \left[ \int_0^T \la \alpha^{\theta}(\bX^{\theta}_t, t),  D_{\bx} \alpha^{\theta}(\bX^{\theta}_t, t) \xi_t + \partial_{\theta} \alpha^{\theta}(\bX^{\theta}_t, t) \ra_{\calH} \dt + \la D_{\bx} g(\bX^{\theta}_T), \xi_T \ra_{\calH} \right].
\]
Next, by applying the~\Cref{lemma:stochastic integration by parts} to $\la \bY^{\theta}_t, \xi_t \ra_{\calH}$,we obtain:
\[\label{eq:product process}
     &\rd \la \bY^{\theta}_t, \xi_t \ra_{\calH} = \la \rd \bY^{\theta}_t, \xi_t \ra_{\calH} + \la \bY^{\theta}_t,  \rd \xi_t \ra_{\calH} \\
    & \stackrel{(i)}{=} \la -\calA^{\dag} \bY^{\theta}_t, \xi_t \ra_{\calH}\rd t + \la \bZ^{\theta}_t d\bW^Q_t, \xi_t \ra_{\calH} \\
    & \quad + \la  \bY^{\theta}_t, \calA \xi_t + \sigma_t Q^{1/2}\left(D_{\bx} \alpha^{\theta}(\bX^{\theta}_t, t) \xi_t + \partial_{\theta} \alpha^{\theta}(\bX^{\theta}_t, t) \right)\ra_{\calH} \\
    & \stackrel{(ii)}{=} \la \bY^{\theta}_t, \sigma_t Q^{1/2}\left(D_{\bx}\alpha^{\theta}(\bX^{\theta}_t, t)\xi_t + \partial_{\theta} \alpha^{\theta}(\bX^{\theta}_t, t)\right) \ra_{\calH} \rd t  + \la \bZ^{\theta}_t d\bW^Q_t, \xi_t \ra_{\calH}
\]
where $(i)$ follows since the quadratic covariation $[\xi, \bY^{\theta}]_t$ canceled out because $\xi$ has no martingale term, and $(ii)$ follows from the adjointness of $\calA$ and $Q^{1/2}$ such that $\ie \la \calA^{\dag} u, v \ra_{\calH} = \la u, \calA v \ra_{\calH}$. Since $\xi_0 = 0$ and $\bY^{\theta}_T = D_{\bx} g(\bX^{\theta}_T)$, by integrating~\eqref{eq:product process} and takes expectation, we obtain:
\[\label{eq:terminal adjoint}
    & \bbE_{\bbP^{\theta}} \left[ \la D_{\bx} g(\bX^{\theta}_T), \xi_T \ra\right] = \bbE_{\bbP^{\theta}} \left[\int_0^T \rd \la \bY^{\theta}_t, \xi_t \ra_{\calH}  \right]  \\
    & = \bbE_{\bbP^{\theta}} \left[ \int_0^T \la \bY^{\theta}_t, \sigma_t Q^{1/2}\left(D_{\bx}\alpha^{\theta}(\bX^{\theta}_t, t)\xi_t + \partial_{\theta} \alpha^{\theta}(\bX^{\theta}_t, t)\right) \ra_{\calH} \rd t  \right].
\]
Finally, substituting~\eqref{eq:terminal adjoint} into~\eqref{eq:gradient cost functional}, we get:
\[\label{eq:gradient cost functional 3}
    \tfrac{\rd}{\rd \theta} \calJ(\alpha^{\theta}, \bbP^{\theta}) &\stackrel{(i)}{=} \bbE_{\bbP^{\theta}} \left[\int_0^T \la \alpha^{\theta}(\bX^{\theta}_t, t) + \sigma_t Q^{1/2} \bY^{\theta}_t, D_{\bx} \alpha^{\theta}(\bX^{\theta}_t, t)\xi_t + \partial_{\theta} \alpha^{\theta}( \bX_t^{\theta}, t)  \ra_{\calH} dt\right],
\]
where $(i)$ follows from the adjointness of $Q^{1/2}$ and scalar $\sigma_t$. Since $\bar{\alpha} = \text{stopgrad}(\alpha^{\theta})$ only blocks the gradient, $\bar{\alpha}(t, \bx) = \alpha^{\theta}(t, \bx)$ in pointwise manner. Hence, the forward law satisfies $\bbP^{\theta} = \bbP^{\bar{\alpha}}$. Therefore, from~\eqref{eq:gradient cost functional 3}, we can define
\[\label{eq:gradient cost functional 4}
    \tfrac{\rd}{\rd \theta} \calJ(\alpha^{\theta}, \bbP^{\bar{\alpha}}) &\stackrel{(i)}{=}  \bbE_{\bbP^{\bar{\alpha}}} \left[\int_0^T \la \alpha^{\theta}(\bX^{\bar{\alpha}}_t, t) + \sigma_t Q^{1/2} \bY^{\bar{\alpha}}_t, \partial_{\theta} \alpha^{\theta}( \bX_t^{\bar{\alpha}}, t)  \ra_{\calH} dt\right] \\
    & = \bbE_{\bbP^{\bar{\alpha}}} \left[\int_0^T \la \alpha^{\theta}(\bX^{\bar{\alpha}}_t, t), \partial_{\theta} \alpha^{\theta}( \bX_t^{\bar{\alpha}}, t)  \ra_{\calH} +  \la \sigma_t Q^{1/2}  \bY^{\bar{\alpha}}_t,  \partial_{\theta} \alpha^{\theta}( \bX_t^{\bar{\alpha}}, t)  \ra_{\calH} dt\right] \\
    & = \bbE_{\bbP^{\bar{\alpha}}} \left[ \int_0^T  \tfrac{1}{2} \partial_{\theta} \norm{\alpha^{\theta}(\bX^{\bar{\alpha}}_t, t) + \sigma_t Q^{1/2} \bY^{\bar{\alpha}}_t}^2_{\calH} \rd t\right] \\
    & \stackrel{(ii)}{=} \partial_{\theta}  \int_0^T \bbE_{\bbP^{\bar{\alpha}}} \left[  \tfrac{1}{2} \norm{\alpha^{\theta}(\bX^{\bar{\alpha}}_t, t) + \sigma_t Q^{1/2} \bY^{\bar{\alpha}}_t}^2_{\calH} \right]\rd t \\
    & = \tfrac{\rd}{\rd \theta} \calL(\theta),
\]
where $(i)$ follows by substituting $\bbP^{\bar{\alpha}}$ into~\eqref{eq:gradient cost functional 3} while $D_{\bx} \alpha^{\theta}\xi_t$ disappears due to stopgrad operation blocks the gradient through $\bX^{\bar{\alpha}}_t$ in~\eqref{eq:variation process} $(\ie \xi_t = 0)$ and $(ii)$ follows from dominate convergence. Therefore, from the SMP stationarity: 
\[\label{eq:critical point}
    \alpha^{\theta}(\bX^{\theta}_t, t) + \sigma_t Q^{1/2} \bY^{\theta}_t = \alpha^{\theta}(\bX^{\bar{\alpha}}_t, t) + \sigma_t Q^{1/2} \bY^{\bar{\alpha}}_t =0, \quad t \in [0, T],
\]
we obtain $\tfrac{\rd}{\rd \theta} \calJ(\alpha^{\theta}, \bbP^{\bar{\alpha}})  = \tfrac{\rd}{\rd \theta}\calL(\theta) = 0$. This implies that the SOC objective in~\eqref{eq:SOC problem} and matching objective in~\eqref{eq:infinite-dimensional adjoint matching} share the same critical points.

Now, let us verify that the critical point $\alpha^{\theta}(\bX^{\bar{\alpha}}_t, t) = -\sigma_t Q^{1/2} \bY^{\bar{\alpha}}_t$ in~\eqref{eq:critical point} is optimal control $\alpha^{\star}$ for the SOC problem in~\eqref{eq:SOC problem}. To do so, let us assume that there exists $\calV$ be a solution of HJB equation~\eqref{eq:PDE HJB} satisfying~\Cref{assumption: value function}. Then, by applying Itô formula, we have:
\[
    d\calV(t, \bX^{\theta}_t) &= \big[\partial_t \calV_t + \la \calA \bX^{\theta}_t, D_{\bx} \calV_t \ra_{\calH} \\
    & \quad \quad + \tfrac{1}{2}\text{Tr}(\sigma_t^2 Q D_{\bx \bx} \calV_t) + \la \sigma_t Q^{1/2} \alpha^{\theta}_t, D_{\bx} \calV_t \ra_{\calH}\big] \rd t + \la D_{\bx} \calV, \sigma_t \rd \bW^Q_t \ra_{\calH} \\
    & \stackrel{(i)}{=} \big[ \la \alpha^{\theta}_t, \sigma_t Q^{1/2} D_{\bx} \calV_t \ra_{\calH} - F(\sigma_t Q^{1/2} D_{\bx} \calV_t)\big] \rd t + \la D_{\bx} \calV, \sigma_t \rd \bW^Q_t \ra_{\calH},
\]
where $(i)$ follows from the definition of HJB equation~\eqref{eq:PDE HJB} and we define $F(\sigma_t Q^{1/2} D_{\bx} \calV_t) := \inf_{\alpha \in \bbA} \left[\la \alpha_t, \sigma_t Q^{1/2} D_{\bx} \calV_t \ra_{\calH} + \tfrac{1}{2} \norm{\alpha_t}^2_{\calH} \right]$. Now, assume $\bY^{\theta}_t := D_{\bx} \calV_t$ and $\bZ_t^{\theta} := D_{\bx \bx} \calV_t$ and apply the Itô formula, then we get
\[\label{eq:Ito Yt}
    d\bY^{\theta}_t = \big[ \partial_t D_{\bx} \calV_t &+ \la \calA \bX^{\theta}_t, D_{\bx \bx} \calV_t  \ra_{\calH}  \\
    & \quad \quad + \tfrac{1}{2} \text{Tr}(\sigma_t Q D_{\bx \bx} (D_{\bx} \calV_t)) + \la \sigma_t Q^{1/2} \alpha^{\theta}_t, D_{\bx\bx} \calV_t \ra_{\calH} \big] + \sigma_t \bZ^{\theta}_t d\bW^Q_t.
\]
Here, by differentiate the HJB in~\eqref{eq:PDE HJB} with respect to $\bx$, we get:
\[\label{eq:diff HJB with x}
    \partial_t D_{\bx} \calV_t &+ \calA^{\dag} D_{\bx} \calV + \la \calA \bX^{\theta}_t, D_{\bx \bx} \calV_t \ra_{\calH} \\
    & \quad \quad + \tfrac{1}{2} \text{Tr}(\sigma^2_t Q D_{\bx \bx} ( D_{\bx} \calV_t)) + D_{\bx \bx} \calV_t \sigma_t Q^{1/2} D_{\bx} F(\sigma_t Q^{1/2} D_{\bx} \calV) = 0.
\]
Now, by substituting~\eqref{eq:diff HJB with x} into~\eqref{eq:Ito Yt}, we get the following:
\[
    \rd \bY^{\theta}_t &= \left[-\calA^{\dag} \bY_t + \la \sigma_t Q^{1/2}  \alpha^{\theta}_t - \sigma_t Q^{1/2} D_{\bx} F(\sigma_t Q^{1/2} \bY^{\theta}_t), D_{\bx \bx} \calV \ra \right] \rd t + \sigma_t \bZ^{\theta}_t d\bW^Q_t \\
    & \stackrel{(i)}{=} \left[-\calA^{\dag} \bY^{\theta}_t + \la \sigma_t Q^{1/2}  \alpha^{\theta}_t + \sigma^2_t Q \bY^{\theta}_t, D_{\bx \bx} \calV \ra \right] \rd t + \sigma_t \bZ^{\theta}_t d\bW^Q_t \\
    & \stackrel{(ii)}{=} -\calA^{\dag} \bY^{\theta}_t \rd t + \sigma_t \bZ^{\theta}_t d\bW^Q_t,
\]
where $(i)$ follows from $F(\sigma_t Q^{1/2} \bY^{\theta}_t) = -\tfrac{1}{2}\norm{\sigma_t Q^{1/2} \bY^{\theta}_t}^2_{\calH}$ by definition of $F$ and $(ii)$ follows from $\alpha^{\theta}_t = -\sigma_t Q^{1/2} \bY^{\theta}_t$. Since the stopgrad operation does not affect any step in the derivation, 
therefore, if $\bY^{\theta}_t=D_{\bx}\calV(t,\bX^{\theta}_t)$, then  $\bY^{\theta}_t$ is the solution of~\eqref{eq:barkward SDEs adjoint} and the critical point in~\eqref{eq:critical point} satisfies
\[
\alpha^{\theta}_t = -\sigma_t Q^{1/2}\bY^{\bar{\alpha}}_t = -\sigma_t Q^{1/2}D_{\bx}\calV_t.
\]
Hence, by Lemma~\ref{lemma:verification}, this control satisfies the HJB and is optimal, hence $\alpha^{\theta}=\alpha^{\star}$. 
Conversely, if we assume that the critical point $\alpha^{\theta}_t=-\sigma_t Q^{1/2}\bY^{\bar{\alpha}}_t$ is optimal where $\bY^{\bar{\alpha}}_t$ is the solution of~\eqref{eq:barkward SDEs adjoint}, then the same verification immediately yields $\bY^{\bar{\alpha}}_t=D_{\bx}\calV(t,\bX^{\bar{\alpha}}_t)$. This shows that the critical point in~\eqref{eq:critical point} is the optimal control $\alpha^{\star}$ for the SOC problem in~\eqref{eq:SOC problem}. This completes the proof.
\end{proof}

\paragraph{Sufficient condition}
Indeed, in the special case $\ie$ when $g$ is convex, the converse becomes a sufficient condition. One may ask whether solving the adjoint equation and minimizing the Hamiltonian ensures optimality. In general, SOC sampling needs non convex $g$, so this condition often fails. We therefore do not apply it in our study. Instead, we formulate and prove a sufficient optimality condition tailored to specific SOC problem in~\eqref{eq:SOC problem}.
\begin{proposition}[Stochastic Maximum Principle]\label{proposition:sufficient condition} Assume the terminal cost $g$ is convex. Let us fix an admissible control $\bar{\alpha}\in\bbA$ and consider the corresponding infinite dimensional coupled FBSDEs:
\[
    & \rd \bX^{\bar{\alpha}}_t = \left[ \calA\bX^{\bar{\alpha}}_t  + \sigma_t Q^{1/2}\bar{\alpha}_t \right] \dt + \sigma_t \rd \bW^Q_t, \quad \bX^{\bar{\alpha}}_0 = \bx_0, \\
    & \rd \bY^{\bar{\alpha}}_t = - \left[ \calA^{\dag}\bY^{\bar{\alpha}}_t + D_{\bx} H(t, \bX^{\bar{\alpha}}_t, \bar{\alpha}_t, \bY^{\bar{\alpha}}_t, \bZ^{\bar{\alpha}}_t)\right] \dt + \bZ^{\bar{\alpha}}_t \rd \bW^Q_t, \quad \bY^{\bar{\alpha}}_T = D_{\bx} g(\bX^{\bar{\alpha}}_T). \label{eq:backward SDEs sufficient}
\]
Then, under the state variables is fixed $(\bX^{\bar{\alpha}}_t, \bY^{\bar{\alpha}}_t, \bZ^{\bar{\alpha}}_t)$ for a choosen control $\bar{\alpha}$, if following holds:
\[\label{eq: Hamiltonian proposition}
H(t, \bX^{\bar{\alpha}}_t, \alpha^{\star}_t, \bY^{\bar{\alpha}}_t, \bZ^{\bar{\alpha}}_t) = \min_{\alpha \in \bbA} H(t, \bX^{\bar{\alpha}}_t, \alpha, \bY^{\bar{\alpha}}_t, \bZ^{\bar{\alpha}}_t), \quad \forall t \in [0, T],
\] the resulting sequence of controls $\{\alpha^{\star}_t\}_{t \in [0, T]}$ is the optimal control $\alpha^{\star}$ for a given SOC problem~\eqref{eq:SOC problem}.
\end{proposition}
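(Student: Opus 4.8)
The plan is to run the classical \emph{sufficient} stochastic maximum principle comparison argument, transplanted to the Hilbert-space setting, exploiting that for the specific problem~\eqref{eq:SOC problem} the running cost $\tfrac12\norm{\alpha_t}^2_{\calH}$, the control-affine drift $\sigma_t Q^{1/2}\alpha_t$, and the diffusion coefficient $\sigma_t$ are all independent of the state. Hence the Hamiltonian $H$ of~\eqref{eq:hamiltonian sufficient} is constant in $\bX$ --- in particular $D_{\bx}H\equiv 0$, which is why the adjoint BSDE~\eqref{eq:barkward SDEs adjoint} reduces to $\rd\bY^{\bar\alpha}_t=-\calA^{\dag}\bY^{\bar\alpha}_t\dt+\bZ^{\bar\alpha}_t\rd\bW^Q_t$ --- and strictly convex (quadratic) in $\alpha$, so the only genuine convexity hypothesis needed is that of $g$. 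The pointwise minimality~\eqref{eq: Hamiltonian proposition} then forces $\alpha^{\star}_t=-\sigma_t Q^{1/2}\bY^{\bar\alpha}_t$, exactly the critical point of \Cref{proposition:adjoint matching in Hilbert}; I take the self-consistent choice $\bar\alpha=\alpha^{\star}$ and write $(\bX^{\star},\bY^{\star},\bZ^{\star}):=(\bX^{\bar\alpha},\bY^{\bar\alpha},\bZ^{\bar\alpha})$.

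First I would fix an arbitrary admissible $\alpha\in\bbA$ with state $\bX^{\alpha}$ started at $\bx_0$, realized on the same probability space as $\bX^{\star}$ through the common $Q$-Wiener process, and set $\Delta_t:=\bX^{\alpha}_t-\bX^{\star}_t$. Since $\sigma_t$ depends on neither control nor state, $\Delta$ solves the pathwise linear equation $\rd\Delta_t=[\calA\Delta_t+\sigma_t Q^{1/2}(\alpha_t-\alpha^{\star}_t)]\dt$ with $\Delta_0=0$ and \emph{no} martingale part. Convexity of $g$ together with the terminal condition of~\eqref{eq:barkward SDEs adjoint} gives $g(\bX^{\alpha}_T)-g(\bX^{\star}_T)\ge\la D_{\bx}g(\bX^{\star}_T),\Delta_T\ra_{\calH}=\la\bY^{\star}_T,\Delta_T\ra_{\calH}$. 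Next I would evaluate $\bbE[\la\bY^{\star}_T,\Delta_T\ra_{\calH}]$ via the Hilbert-space integration-by-parts formula \Cref{lemma:stochastic integration by parts} applied to $t\mapsto\la\bY^{\star}_t,\Delta_t\ra_{\calH}$: the trace/covariation term vanishes because $\Delta$ has no $\bW^Q$-part; the two $\calA$-contributions $\la\Delta_t,-\calA^{\dag}\bY^{\star}_t\ra_{\calH}$ and $\la\bY^{\star}_t,\calA\Delta_t\ra_{\calH}$ cancel by self-adjointness of $\calA$ (\Cref{assumption:operator}); and the residual stochastic integral $\int\la\Delta_t,\bZ^{\star}_t\rd\bW^Q_t\ra_{\calH}$ has zero mean under the integrability supplied by well-posedness of the BSDE in \Cref{lemma:Stochastic Maximum Principle} and the growth bounds of \Cref{assumption SMP}. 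This leaves $\bbE[\la\bY^{\star}_T,\Delta_T\ra_{\calH}]=\bbE[\int_0^T\la\sigma_t Q^{1/2}\bY^{\star}_t,\alpha_t-\alpha^{\star}_t\ra\dt]$. Combining and noting that $\tfrac12\norm{\alpha_t}^2_{\calH}-\tfrac12\norm{\alpha^{\star}_t}^2_{\calH}+\la\sigma_t Q^{1/2}\bY^{\star}_t,\alpha_t-\alpha^{\star}_t\ra=H(t,\bX^{\star}_t,\alpha_t,\bY^{\star}_t,\bZ^{\star}_t)-H(t,\bX^{\star}_t,\alpha^{\star}_t,\bY^{\star}_t,\bZ^{\star}_t)$ (the $\bZ$- and $\bX$-parts of $H$ being control-independent), I would obtain
\[
  \calJ(\alpha,\bbP^{\alpha})-\calJ(\alpha^{\star},\bbP^{\star})
  &= \bbE\left[\int_0^T\Big(\tfrac12\norm{\alpha_t}^2_{\calH}-\tfrac12\norm{\alpha^{\star}_t}^2_{\calH}\Big)\dt+g(\bX^{\alpha}_T)-g(\bX^{\star}_T)\right]\nonumber\\
  &\ge \bbE\left[\int_0^T\Big(H(t,\bX^{\star}_t,\alpha_t,\bY^{\star}_t,\bZ^{\star}_t)-H(t,\bX^{\star}_t,\alpha^{\star}_t,\bY^{\star}_t,\bZ^{\star}_t)\Big)\dt\right]\ \ge\ 0,
\]
where the last inequality is precisely~\eqref{eq: Hamiltonian proposition}. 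Since $\alpha$ was arbitrary, $\alpha^{\star}$ is optimal for~\eqref{eq:SOC problem}, and strict convexity of $\tfrac12\norm{\cdot}^2_{\calH}$ additionally yields uniqueness.

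The step I expect to be the main obstacle is making the integration-by-parts / It\^o computation fully rigorous in the presence of the \emph{unbounded} generator $\calA$: the identity $\la\Delta_t,\calA^{\dag}\bY^{\star}_t\ra_{\calH}=\la\calA\Delta_t,\bY^{\star}_t\ra_{\calH}$ is only formal unless one works with mild solutions and regularizes, e.g.\ via the Yosida approximation $\calA_n:=n\calA(n-\calA)^{-1}$ (or the variational triple attached to the Dirichlet Laplacian of \Cref{proposition:laplacian}), establishes the cancellation and covariation identity at level $n$, and passes to the limit using dissipativity, $\Tr(Q)<\infty$, and the a priori estimates on $(\bY^{\star},\bZ^{\star})$ from \Cref{assumption SMP}. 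An alternative, in the spirit of the proof of \Cref{proposition:adjoint matching in Hilbert}, is to identify $\bY^{\star}_t=D_{\bx}\calV(t,\bX^{\star}_t)$ for a classical solution $\calV$ of the HJB equation~\eqref{eq:PDE HJB} and invoke the verification \Cref{lemma:verification}, but that demands the extra regularity of \Cref{assumption: value function}, so the direct route above is preferable; the remaining algebra is routine.
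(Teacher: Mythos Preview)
Your proposal is correct and follows essentially the same route as the paper: fix an arbitrary competitor control, use convexity of $g$ to bound the terminal-cost gap by $\la\bY^{\star}_T,\Delta_T\ra_{\calH}$, apply the Hilbert-space product rule (\Cref{lemma:stochastic integration by parts}) to expand this expectation with the $\calA$/$\calA^{\dag}$ terms canceling by adjointness, and conclude via the pointwise Hamiltonian minimality~\eqref{eq: Hamiltonian proposition}. Your version is in fact slightly cleaner than the paper's because you exploit from the outset that for this specific problem $D_{\bx}H\equiv 0$, whereas the paper carries the $\la D_{\bx}H,\Delta\bX_t\ra$ term and then invokes joint convexity of $H$ in $(\bx,\alpha)$ to dispose of it; your remarks on the unbounded-$\calA$ issue and the Yosida regularization are also more careful than what the paper records.
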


\begin{proof} We adapt the proof of the sufficient condition in~\citep[Theorem 4.14]{carmona2016lectures} into our infinite-dimensional control problem. Fix $\beta \in \calU$ be a admissible control, let us write $\bX^{\star}_t := \bX^{\alpha^{\star}}_t$ and define $\Delta \bX_t := \bX^{\beta}_t- \bX_t^{\star}$. By convexity of $g$, we get 
\[
    g(\bX^{\beta}_T) - g(\bX^{\star}_T) \geq \la D_{\bx} g(\bX^{\beta}_T), \bX^{\beta}_T - \bX^{\star}_T \ra_{\calH} = \la \bY^{\star}_T, \Delta \bX_T \ra_{\calH}.
\]
Now, apply the Hilbert space product rule~\ref{lemma:stochastic integration by parts} to $\la \bY^{\star}_T, \Delta \bX_T \ra_{\calH}$. Then, by using the dynamics of $\bX^{\beta}$ and $\bX^{\star}$ and the adjoint equation for $\bY^{\star}$, we get:
\[
    \bbE\left[ \la \bY^{\star}_T, \Delta \bX_T \ra_{\calH} \right] &= \bbE \int_0^T \la -\calA^{\dag} \bY_t^{\star} - D_{\bx} H(t, \bX^{\star}_t, \alpha^{\star}, \bY_t^{\star}, \bZ_t^{\star}, \Delta \bX_t  \ra_{\calH} \dt \\
    & + \bbE \int_0^T \la \bY^{\star}_t, \calA \Delta \bX_t + \sigma_t Q^{1/2} \beta_t - \sigma_t Q^{1/2} \alpha^{\star}_t \ra_{\calH} \dt \\
    & \stackrel{(i)}{=} - \bbE \int_0^T \la D_{\bx} H(t, \bX^{\star}_t, \alpha^{\star}, \bY_t^{\star}, \bZ_t^{\star}, \Delta \bX_t \ra_{\calH} \dt  \\
    & + \bbE \int_0^T \la \bY^{\star}_t, \sigma_t Q^{1/2} \beta_t - \sigma_t Q^{1/2} \alpha^{\star}_t \ra_{\calH} \dt,
\]
where (i) follows from the adjointness of $\calA$. On the other hand, using the definition of $\calJ$ in~\eqref{eq:cost functional}:
\[
    & \calJ(\beta) - \calJ(\alpha^{\star}) = \bbE \left[\int_0^T \left( \tfrac{1}{2}\norm{\beta_t}^2_{\calH} - \tfrac{1}{2}\norm{\alpha^{\star}_t}^2_{\calH} \right) \dt + g(\bX^{\beta}_T) - g(\bX^{\star}_T)\right] \\
    & \geq \bbE \left[\int_0^T \left( \tfrac{1}{2}\norm{\beta_t}^2_{\calH} - \tfrac{1}{2}\norm{\alpha^{\star}_t}^2_{\calH} \right) + \la \bY^{\star}_t, \sigma_t Q^{1/2} \beta_t - \sigma_t Q^{1/2} \alpha^{\star}_t\ra_{\calH} \dt \right] \\
    & - \bbE \int_0^T \la D_{\bx} H(t, \bX^{\star}_t, \alpha^{\star}, \bY_t^{\star}, \bZ_t^{\star}, \Delta \bX_t \ra_{\calH} \dt \\
    & \stackrel{(i)}{=} \bbE \int_0^T \left[ H(t, \bX^{\beta}_t, \beta_t, \bY^{\star}_t, \bZ^{\star}_t) - H(t, \bX^{\star}_t, \alpha^{\star}, \bY^{\star}_t, \bZ^{\star}_t) -  \la D_{\bx} H(t, \bX^{\star}_t, \alpha^{\star}, \bY_t^{\star}, \bZ_t^{\star}, \Delta \bX_t \ra_{\calH} \right] \dt,
\]
where $(i)$ follows by the definition of Hamiltonian~\eqref{eq:hamiltonian sufficient}. By convexity of $\calH$ with respect to $(\bx, \alpha)$,
\[
    H(t, \bX^{\beta}_t, \beta_t, \bY^{\star}_t, \bZ^{\star}_t) &\geq H(t, \bX^{\star}_t, \alpha^{\star}, \bY^{\star}_t, \bZ^{\star}_t) \\
    &+  \la D_{\bx} H(t, \bX^{\star}_t, \alpha^{\star}, \bY_t^{\star}, \bZ_t^{\star}, \Delta \bX_t \ra_{\calH} + \la D_{\alpha} H(t, \bX^{\star}_t, \alpha^{\star}, \bY_t^{\star}, \bZ_t^{\star}, \beta_t - \alpha^{\star}_t \ra_{\calH}.
\]
Then, the point-wise minimality of $\alpha^{\star}$ implies that $\la D_{\alpha} H(t, \bX^{\star}_t, \alpha^{\star}, \bY_t^{\star}, \bZ_t^{\star}, \beta_t - \alpha^{\star}_t \ra_{\calH} \geq 0$ for any admissible control $\beta$. Therefore, it implies that $\calJ(\beta) \geq  \calJ(\alpha^{\star})$ because of~\eqref{eq: Hamiltonian proposition}.  Finally, since we choose $\beta$ arbitrary, resulting $\alpha^{\star}$ is optimal control for all $t \in [0, T]$. This concludes the proof. 
\end{proof}

\subsection{Proof of Proposition~\ref{proposition:unbiased estimator}}\label{sec:unbiased estimator}

\propthree*

\begin{proof} Under the stopgrad operation, the forward law $\bbP^{\bar{\alpha}}$ does not depend on $\theta$. By the dominated convergence theorem we pass the derivative through the expectation and the integral:
    \[
         &\tfrac{\rd}{\rd \theta}\bbE_{\bbP^{\bar{\alpha}}}\left[\tilde{\calL}(\theta) \right]
        =\bbE_{\bbP^{\bar{\alpha}}} \left[ \tfrac{\rd}{\rd \theta} \int_0^T \tfrac{1}{2}||\alpha^{\theta}(\bX^{\bar{\alpha}}_t, t) + \sigma_t  Q^{1/2} \tilde{\bY}^{\bar{\alpha}}_t||^2_{\calH} \dt   \right]\\
        & = \bbE_{\bbP^{\bar{\alpha}}} \left[ \int_0^T \la  \alpha^\theta(\bX^{\bar{\alpha}}_t,t)+\sigma_t Q^{1/2} \tilde{\bY}^{\bar{\alpha}}_t,
        \ \partial_\theta \alpha^\theta(\bX^{\bar{\alpha}}_t,t)\ra_{\calH} \dt   \right] \\
        & =   \int_0^T  \left[\bbE_{\bbP^{\bar{\alpha}}} \left[
        \la
        \alpha^\theta(\bX^{\bar{\alpha}}_t,t), \partial_{\theta} \alpha^{\theta}(\bX^{\bar{\alpha}}_t, t) \ra_{\calH} \right] + \bbE_{\bbP^{\bar{\alpha}}} \left[\la \sigma_t Q^{1/2} \tilde{\bY}^{\bar{\alpha}}_t,  \partial_\theta \alpha^\theta(\bX^{\bar{\alpha}}_t,t)  \ra_{\calH} \right]  \right]\rd t \\
        & = \int_0^T  \left[\bbE_{\bbP^{\bar{\alpha}}} \left[
        \la
        \alpha^\theta(\bX^{\bar{\alpha}}_t,t), \partial_{\theta} \alpha^{\theta}(\bX^{\bar{\alpha}}_t, t) \ra_{\calH} \right] + \bbE_{\bbP^{\bar{\alpha}}} \left[\la \sigma_t Q^{1/2} \bbE_{\bbP^{\bar{\alpha}}} \left[\bY^{\bar{\alpha}}_t |\bX^{\bar{\alpha}}_t \right],  \partial_\theta \alpha^\theta(\bX^{\bar{\alpha}}_t,t)  \ra_{\calH} \right]  \right]\rd t \\
        & \stackrel{(i)}{=} \int_0^T  \left[\bbE_{\bbP^{\bar{\alpha}}} \left[
        \la
        \alpha^\theta(\bX^{\bar{\alpha}}_t,t), \partial_{\theta} \alpha^{\theta}(\bX^{\bar{\alpha}}_t, t) \ra_{\calH} \right] + \bbE_{\bbP^{\bar{\alpha}}} \left[\la \sigma_t Q^{1/2} \bY^{\bar{\alpha}}_t,  \partial_\theta \alpha^\theta(\bX^{\bar{\alpha}}_t,t)  \ra_{\calH} \right]  \right]\rd t \\
        & = \bbE_{\bbP^{\bar{\alpha}}} \left[ \int_0^T \la  \alpha^\theta(\bX^{\bar{\alpha}}_t,t)+\sigma_t Q^{1/2} \bY^{\bar{\alpha}}_t,
        \ \partial_\theta \alpha^\theta(\bX^{\bar{\alpha}}_t,t)\ra_{\calH} \dt   \right]\\
        & \stackrel{(ii)}{=}  \tfrac{\rd}{\rd \theta} \calL(\theta), \\
\]
where $(i)$ follows from the tower property and $(ii)$ follows from~\eqref{eq:gradient cost functional 4}. Hence, we get $\tfrac{\rd}{\rd \theta}\bbE_{\bbP^{\bar{\alpha}}}\left[\tilde{\calL}(\theta) \right] =  \tfrac{\rd}{\rd \theta} \calL(\theta) = 0$ at $\alpha^{\theta}(\bX^{\bar{\alpha}}_t, t) = -\sigma_t Q^{1/2} \bY^{\bar{\alpha}}_t$. It shows that the critical point remains unchanged to $\tfrac{\rd}{\rd \theta} \calL(\theta)$. It concludes the proof. 
\end{proof}

\subsection{Endpoint Disintegration}\label{sec:disintegration}
From~\eqref{eq:target path measure}, let us denote $\bX := \{\bX_t\}_{t \in [0, T]} \in \Omega$ and define the IS as follows:
\[
    w(\bX) = e^{-U(\bX_T) - \log q_T(\bx_0, \bX_T)}.
\]
Then, the RND for $\bbP^{\star}$ with respect to reference measure $\bbP$ is given by:
\[
   \frac{d\bbP^{\star}}{d\bbP}(\bX) = \frac{w(\bX)}{\bbE_{\bbP}[w(\bX)]}.
\]
Then, for any bounded functional $F: \Omega \to \bbR$, we have the expectation:
\[
   \bbE_{\bbP^{\star}}[F(\bX^{\star})] = \frac{\bbE_{\bbP}\!\left[F(\bX)\,w(\bX)\right]}{\bbE_{\bbP}[w(\bX)]}.
\]
Since terminal law of $\bbP$ admits the density $q_T$ with respect to $\nu_{\infty}$ based on~\Cref{theorem:infinite dimensional density}, we can disintegrate $\bbP$ with respect to $\bX_T$ to obtain:
\[
     \bbE_{\bbP}[F(\bX) w(\bx)]  &=\int_{\calH} \left( \int_{\calH} F(\bX) \bbP_{\cdot|T}(d\bX | \bX_T=\by) \right)e^{-U(\by) - \log q_T(\bx_0, \bx_T)} q_T(\bx_0, \by) d\nu_{\infty}(\by)  \\
    &=\int_{\calH} \left( \int_{\calH} F(\bX) \bbP_{\cdot|T}(d\bX | \bX_T=\by) \right)e^{-U(\by)} d\nu_{\infty}(\by) \\
    & \stackrel{(i)}{=} \int_{\calH} \left( \int_{\calH} F(\bX) \bbP_{\cdot|T}(d\bX | \bX_T=\by) \right) \bbE_{\bbP}\left[w(\bX)\right]d\pi(\by),
\]
where $(i)$ follows from the definition of target distribution $\pi$ in~\eqref{eq:target measure}. Hence, we obtain that
\[
     \bbE_{\bbP^{\star}}[F(\bX^{\star})]  &=  \int \left( \int F(\bX) \bbP_{\cdot|T}(d\bX | \bX_T=\by) \right) \pi (d\by)  \\
     & = \bbE_{\bbP_{\cdot|T}\bbP^{\star}_T}\left[F(\bX) \right].
\]

\propfour*

\begin{proof}

\textbf{(A) Trace-class} Let $\{\phi^{(k)}\}_{k \geq 1}$ be an eigen-basis with $\calA \phi^{(k)} = -\lambda^{(k)} \phi^{(k)}$. Then, we have
\[
    Q \bx = \sum_{k \geq 1} (\lambda^{(k)})^{-s} \bx^{(k)} \phi^{(k)}.
\]
Hence the eigenvalues of $Q$ are $(\lambda^{(k)})^{-s}$ and it implies that $\textit{Tr}(Q) = \sum_{k \geq 1} (\lambda^{(k)})^{-s}$. By the Weyl law $\lambda^{(k)} \asymp k^{\tfrac{2}{d}}$ on a bounded $C^{\infty}$ domain in $\bbR^d$, therefore $\textit{Tr}(Q) < \infty$ if $s > \tfrac{d}{2}$.
\item \textbf{(B) Self-adjoint} Since Laplacian $\Delta$ is self-adjoint, we get adjointness of $\calA$ directly, and for $Q$:
\[
    Q^{\dag} = ((-\calA)^{-s})^{\dag} = ((-\calA)^{-s}) = Q
\]
\item \textbf{(C) Dissipative} Since $0 < \lambda_1 < \lambda_2, \cdots <\lambda_k < \cdots$ for $k \in \bbN$, we obtain:
\[
    \la \calA \bx, \bx \ra_{\calH} = -\sum_{k \geq 1} \lambda^{(k)} \norm{\la \bx, \phi^{(k)}\ra_{\calH}}^2 \leq - \lambda^{(1)} \sum_{k \geq 1} \norm{\la \bx, \phi^{(k)}\ra_{\calH}}^2 = -\lambda^{(1)} \norm{\bx}^2_{\calH}.
\]
\item \textbf{(D) Commute} For any $\bx \in \calH$, we get:
\[
    \calA Q \bx &= \calA \left( \sum_{k \geq 1} (\lambda^{(k)})^{-s} \bx^{(k)} \phi^{(k)} \right) = \sum_{k \geq 1} (\lambda^{(k)})^{-s} \bx^{(k)} (\calA \phi^{(k)}) \\
    & = \sum_{k \geq 1} (\lambda^{(k)})^{-s} \bx^{(k)} (-\lambda^{(k)} \phi^{(k)}) = -\sum_{k \geq 1} (\lambda^{(k)})^{1-s} \bx^{(k)} \phi^{(k)}
\]
and 
\[
    Q \calA \bx &= -Q \left( \sum_{k \geq 1} \lambda^{(k)} \bx^{(k)} \phi^{(k)} \right) = -\sum_{k \geq 1} \lambda^{(k)}\bx^{(k)} (Q \phi^{(k)}) \\
    & = \sum_{k \geq 1} \lambda^{(k)} \bx^{(k)} ((\lambda^{(k)})^{-s} \phi^{(k)}) = -\sum_{k \geq 1} (\lambda^{(k)})^{1-s} \bx^{(k)} \phi^{(k)}.
\]
Therefore, we get $\calA Q^{1/2} \bx = Q^{1/2} \calA \bx$. It completes the proof.
\end{proof}

\section{Experimental Details}\label{sec:experimental details}
\subsection{Numerical Computation}\label{sec:boundary condition}

\paragraph{Enforcing boundary condition} To properly define a transition path between two metastable states, say $\bA$ and $\bB$, we must fix these states as the boundary conditions for a desired path. Our method achieves this by decomposing the path $\bX_t$ into two parts: a fixed reference path $\bx_0$ and a fluctuating residual path $\bR_t$. Then, the total path is given by $\bX_t = \bx_0 + \bR_t$, where $\bx_0$ lifting the residual path (boundary values are zero) to the path that have desired boundary values. The reference path $\bx_0$ provides the direct connection from $\bA$ to $\bB$, while the residual path $\bR_t$ is forced to be zero at the boundaries. This construction guarantees that the full path $\bX_t$ always begins at $\bA$ and ends at $\bB$. 

For a residual path $\bR_t$, we enforce the sample-path endpoints to be a absorbed states $\ie \bR_t[0] = 0, \bR_t[L] = 0$ for a function $\bR_t : \bbR_{>0} \to \bbR^d$ by choosing Dirichlet eigen-basis:
\[
    & \phi^{(k)}[u] = (\tfrac{2}{L})^{1/2} \sin \left( \tfrac{\pi k u}{L} \right), \; \forall k \in \bbN, \; u \in \bL:=[0, L] \label{eq:dirichlet eigen basis}\\
    & \bR_t = \sum_{k \geq 1} \bR^{(k)}_t \phi^{(k)}, \; \text{where} \; \bR^{(k)}_t := \la \bR_t, \phi^{(k)} \ra_{\calH} = \int_{\bL} \bR_t[u] \phi^{(k)}[u] \rd u. \label{eq:function expansion}
\]
Therefore, we get $\bR_t[0] = \bR_t[L] = 0$ for all $t \in [0, T]$ since $\phi^{(k)}[0] = \phi^{(k)}[L] = 0$ from~\eqref{eq:function expansion}. For a reference path $\bx_0$, we define the as a continuous function $\bx_0 : \bL \to \bbR^d$:
\[
    \bx_0[u] = c_0[u] \bA + c_1[u] \bB, \;\; \text{with} \begin{cases} 
    c_0[0] = 1, c_0[L] = 0 \\
    c_1[0] = 0, c_1[L] = 1 
    \end{cases}
\]
Here, the residual process $\bR_t$ on the Dirichlet space is given by:
\[\label{eq:residual path}
    d\bR_t = \left[ \calA \bR_t + \sigma_t Q^{1/2} \alpha(t, \bx_0 + \bR_t)\right] \dt + \sigma_t  d\bW^Q_t, \quad \bR_0 = 0.
\]
It results that since $\bX_t = \bx_0 + \bR_t$, we can deduce the following $\calH$-valued SDE:
\[
    d\bX_t = \left[\calA(\bX_t - \bx_0) + \sigma_t Q^{1/2}  \alpha(t, \bX_t) \right] \rd t + \sigma_t  d\bW^Q_t, \quad \bX_0 = \bx_0.
\]
Given Dirichlet eigen-basis in~\eqref{eq:dirichlet eigen basis}, we can express the infinite-dimensional SDEs into infinite system of real-valued SDEs for each coordinate $k$:
\[\label{eq: residual equation}
    \rd \bR^{(k)}_t &= \left[ -\lambda^{(k)} \bR^{(k)}_t + \sigma_t \la \alpha(t, \bx_0 + \bR_t), (Q^{1/2})^{\dag} \phi^{(k)} \ra \right] \dt + \sigma_t (\lambda^{(k)})^{-\tfrac{s}{2}} \rd \beta^{(k)}_t, \quad \bR^{(k)}_t = 0, \\
    & = \left[ -\lambda^{(k)} \bR^{(k)}_t + \sigma_t(\lambda^{(k)})^{-\tfrac{s}{2}}\la \alpha(t, \bX_t), \phi^{(k)} \ra \right] \dt + \sigma_t (\lambda^{(k)})^{-\tfrac{s}{2}} \rd \beta^{(k)}_t
\]
where $\{\beta^{(k}_t\}_{k \geq 1}$ are standard Wiener processes. Note that, to compute $(Q^{1/2})^{\dag}\phi^{(k)}$, we choose $\calU = \calH$, means $\bW_t = \sum_{k \geq 1} \phi_k \beta^{(k)}_t$. Even though $\bW_t$ is not an $\calH$-valued random variable, it is a cylindrical process indexed by $\calH$, applying Hilbert-Schmidt $Q^{1/2}$ before stochastic integral yields the resulting process remain square integrable.

If we treat the finite path evaluations as samples on a uniform grid $\bL = [0, L]$, we project onto the sine basis with the discrete sine transform (\texttt{DST}). Since a finite-resolution path has a natural cutoff frequency $K$ as a number of evaluation points, projection to and from the sine basis is exact. Concretely, approximate the Laplacian by $\Delta \triangleq \bE \bD \bE^{\top}$, where $\bE^{\top}$ is the \texttt{DST} projection consists of eigen functions $\{\phi^{(k)}\}_{k=1}^K$ so that $\tilde{\bR}_t = \bE^{\top} \bR_t$ and $\bD$ is a diagonal matrix containing the eigen values $\{\lambda^{(k)}\}_{k=1}^K$. Hence, stochastic evolution in~\eqref{eq:residual path} is described by the finite-dimensional model (configuration to spectral space):
\[
    d\tilde{\bR}_t = \left[-\bD \tilde{\bR}_t + \sigma_t \bD^{-\tfrac{s}{2}} \bE^{\top} \alpha(t, \bX_t) \right] \dt + \sigma_t \bD^{-\tfrac{s}{2}} d\tilde{\bW}_t.
\]
The simulation algorithm for FAS is provided in detail in~\Cref{alg:FAS sampling}. Additionally, we introduce a spectral precision scale $\kappa$ on the Dirichlet Laplacian, replacing $\bD$ by $\kappa^2\bD$ to control stiffness and smoothness of the reference dynamics.

\paragraph{Correction RND} From~\Cref{proposition:laplacian}, we can calculate the RND in~\Cref{theorem:infinite dimensional density} as follows:
\[
    \log q_t(\bx_0, \bx) &= -\frac{1}{2} \sum_{k \geq 1} \frac{1}{q^{(k)}_{\infty} (e^{2t\lambda_k} - 1)}\left[ \norm{\bx^{(k)}_0}^2 - 2 e^{-t\lambda_k} \la \bx_0^{(k)}, \bx^{(k)} \ra +  \norm{\bx^{(k)}}^2 \right],
\]
where $q^{(k)}_{\infty}$ is diagonal component of $Q_{\infty}$ such that $Q_{\infty} \phi^{(k)} = q^{(k)}_{\infty} \phi^{(k)}$. From~\eqref{eq:Lyapunov equation}, we get $Q_{\infty} = -\tfrac{1}{2}\sigma^2_{\infty} \calA^{-1} Q$, therefore we get $q^{(k)}_{\infty} = \tfrac{1}{2}\sigma^2_{\infty}(\lambda^{(k)})^{(1+s)}$.

\paragraph{Bridge sampling} Because the solution of the reference process in \eqref{eq:uncontrolled SDE} defines an Ornstein-Uhlenbeck semigroup, the conditional law at time $t$ given time $T$ is:
\[
    \bbP_{t|T}(\bX_t | \bx_T) = \calN(\mean_{t|T}, Q_{t|T}),
\]
where the conditional mean function and covariance operator is given by:
\[
    & \mean_{t|T} = \bx_0 + Q_{t|T}Q^{-1}_T(\bx_T -\bx_0), \quad Q_{t|T} = Q_T - Q_{t|T}Q^{-1}_T Q_{T|t} \\
    & Q_t = \int_0^t e^{(t-s)\calA} \sigma^2_s Q e^{(t-s)\calA^{\dag}} \rd s, \\
    & Q_{t|T} = \int_0^t e^{(t-s)\calA} \sigma^2_s Q e^{(T-s)\calA^{\dag}} \rd s = Q_t e^{(T-t) \calA^{\dag}}, \quad Q_{T|t} = (Q_{t|T})^{\dag}.
\]
Moreover, the coordinate process $\bX_t^{(k)}=\langle \bX_t,\phi^{(k)}\rangle$ of the Ornstein-Uhlenbeck semigroup in~\eqref{eq:uncontrolled SDE} is also Gaussian and the coordinates are independent. Therefore we can compute coordinate-wise:
\[
    \bbP_{t|T}(\bX^{(k)}_t | \bx^{(k)}_T) = \calN(\mean^{(k)}_{t|T}, q^{(k)}_{t|T})
\]
where we denote $q^{(k)}_{t} =  \la Q_{t}, \phi^{(k)} \ra $ for any $t \in [0, T]$ and 
\[
    & \mean^{(k)}_{t|T} = \bx^{(k)}_0 + \tfrac{q^{(k)}_t}{q^{(k)}_{\infty}} e^{-(T-t)\lambda^{(k)}} (\bx^{(k)}_T - \bx^{(k)}_0),  \\
    & q^{(k)}_{t|T} = q^{(k)}_{t} - \tfrac{(q^{(k)}_{t})^2}{q^{(k)}_{\infty}} e^{-2(T-t)\lambda^{(k)}} \\
    & q^{(k)}_t = \int_0^t \sigma^2_s e^{-2(t-s)\lambda^{(k)}} (\lambda^{(k)})^{-s} \rd s, \quad \forall t \in [0, T]. \label{eq:Q_t appendix}
\]

\paragraph{Noise schedule $\sigma_t$} The closed-form simulation may be infeasible because computing $q^{(k)}_t$ in~\eqref{eq:Q_t appendix} requires numerical integration for arbitrary noise schedule $\sigma_t$. In this case, the two noise schedules studied for adjoint sampling in~\citep{havens2025adjoint,liu2025adjoint} can be adapted to our setting.

\begin{itemize}[leftmargin=20pt]
    \item The constant noise schedule sets $\sigma(t) := \sigma$. Then, we can can compute the integral in~\eqref{eq:Q_t appendix}:
    \[
    q^{(k)}_t = (\lambda^{(k)})^{-s}\frac{(1-e^{-2t\lambda_k})}{2\lambda_k}.
    \]
    
    \item The geometric noise schedule~\citep{song2021scorebased, karras2022elucidating} decreases monotonically by predefined $\beta_{\min}$ and $\beta_{\max}$:
    \[
        \sigma_t = \begin{cases} \beta_{\text{min}} \left(\tfrac{\beta_{\text{max}}}{\beta_{\text{min}}} \right)^{T-t} \sqrt{2 \log \tfrac{\beta_{\text{max}}}{\beta_{\text{min}}}}, \quad t < T \\
        \beta_{\text{min}}  \sqrt{2 \log \tfrac{\beta_{\text{max}}}{\beta_{\text{min}}}}, \quad t \geq T.
        \end{cases}
    \]
    Because choosing the noise schedule $\sigma_t=\beta e^{\alpha t}$ gives a closed-form for the integral in~\eqref{eq:Q_t appendix}, we rearrange it as follows:
    \[
        \sigma_t &= \beta_{\text{min}} \left(\tfrac{\beta_{\text{max}}}{\beta_{\text{min}}} \right)^{T-t} \sqrt{2 \log \tfrac{\beta_{\text{max}}}{\beta_{\text{min}}}} = \beta_{\text{min}} (\tfrac{\beta_{\text{max}}}{\beta_{\text{min}}})^T \sqrt{2\log \tfrac{\beta_{\text{max}}}{\beta_{\text{min}}}} e^{-\log \left(\tfrac{\beta_{\text{max}}}{\beta_{\text{min}}}\right) t}.
    \]
    Therefore, we get $\beta = \beta_{\text{min}} (\tfrac{\beta_{\text{max}}}{\beta_{\text{min}}})^T \sqrt{2\log \tfrac{\beta_{\text{max}}}{\beta_{\text{min}}}}$, $\alpha = -\log \left(\tfrac{\beta_{\text{max}}}{\beta_{\text{min}}}\right)$. This choice yields that:
    \[
        q^{(k)}_t = \beta^2 (\lambda^{(k)})^{-s} \frac{e^{2\alpha t} - e^{-2\lambda_k t}}{2(\alpha + \lambda_k)}
    \]

\end{itemize}

\begin{table}[!t]
\caption{Training Hyper-parameters}
\label{table:training_hyperparameters}
\centering
\renewcommand{\arraystretch}{1.2} 
\setlength{\tabcolsep}{6pt} 
\small
\begin{tabular}{c ccc}
\toprule
& Synthetic potential & Alanine dipeptide & Chignolin \\
\midrule
$\beta_{\text{min}}$    
& 0.1 
& 0.01
& 0.01 \\
$\beta_{\text{max}}$    
& 10
& 15 
& 5 \\
$\kappa$
& $10^{-2}$
& $10^{-5}$
& $10^{-5}$ \\
$M$
& $10^3$
& $10^4$
& $10^4$ \\
$L$
& 100
& 100
& 100 \\
$N$
& 512
& 16
& 16 \\
$|\calB|$
& $10^4$
& $10^3$
& $10^3$ \\
$\alpha_{\text{max}}$
& 100
& $10^4$
& $10^4$ \\
$|\bU|$
& 100
& 100
& 100 \\
Learning rate
& $10^{-4}$
& $10^{-5}$
& $10^{-4}$ \\
Base channel
& 32
& 64
& 128 \\
Base mode
& 8
& 8
& 8 \\
Embed dim
& 128
& 256
& 512 \\
\bottomrule
\end{tabular}
\end{table}

\paragraph{Replay buffer $\calB$} Following previous diffusion samplers~\citep{havens2025adjoint, liu2025adjoint, choi2025non}, we maintain a replay buffer $\calB$ for the computation of the objective in~\eqref{eq:infinite-dimensional adjoint sampling_modified}. In this case, the expectation under $\bbP^{\alpha}$ is approximated by an average over $\calB$, which holds the most recent $|\calB|$ samples. We refresh the buffer $\calB$ by inserting $N$ new samples after every $L$ gradient steps.

\paragraph{Parametrization fo control network} We parameterize the control $\alpha^{\theta}$ with a neural operator to preserve its functional form. We parametrize the control as $\alpha^{\theta}(t, \bx) := \sigma_t Q^{1/2} e^{-(T-t) \calA^{\dag}} u^{\theta}(t, \bx)$, which absorbs the noise schedule $\sigma_t$ and Laplacian contraction $Q^{1/2} e^{-(T-t)\calA^{\dag}}$ into the control and removes its explicit appearance from the matching objective in~\eqref{eq:infinite-dimensional adjoint sampling_modified}, yielding more stable training.  For all experiment, we take $u^{\theta}$ to be a modified U-shaped Neural Operator (UNO)~\citep{rahman2022u} tailored to diffusion-style conditioning. Specifically, the control takes a 1D sequence $\bx \in \bbR^{|\bL| \times \bbR^d}$ and a time input $t$. We embed $t$ with a sinusoidal MLP and build simple Fourier grid features, then compress these into a single conditioning vector. Each encoder and decoder block applies a spectral convolution on low Fourier modes with a point-wise path, then uses adaptive groupnorm modulated by conditioning vector. Conditioning enters only through normalization, which keeps the operator discretization agnostic and suitable for diffusion training.  We set a base channel dimension $\bc$ and set channel multiplier $[1,2,4]$. Each stage uses two spectral residual blocks. Conversely, we set a base Fourier modes $\bm$ and set channel multiplier $[1, 0.5, 0.25]$. The timestep embedding and the conditioning vector use dimension equal to the highest channel.

\textbf{Clipping} $\alpha_{\text{max}}\,$ We clip the energy gradient so its maximum norm does not exceed $\alpha_{\text{max}}$

\bgroup
\begin{algorithm}[t]
    \caption{FAS Sampling with path lifting}
    \label{alg:FAS sampling}
    \begin{algorithmic}[1]
        \REQUIRE Initial condition $\bx_0$, linear operator $\calA$, trace-class operator $Q$, control $\alpha$, discrete and inverse sine transforms $\texttt{DST}, \texttt{iDST}$, set of evaluation points $\bU = \{u_i\}_{i=1}^K$.
        
            \hspace*{-\fboxsep}\colorbox{background}{\parbox{\linewidth}{%
            \STATE Set initial condition $\bX^{\alpha}_0 = \bx_0$ and $\bR^{\alpha}_0 = \boldsymbol{0}$
            \FOR{time t \textbf{in range} $[0, T]$}
                \STATE Estimate control $\alpha_t = \alpha(\bX^{\alpha}_t, t)$
                \STATE Sample Gaussian noise $\varepsilon \sim \calN(0, \bI_K)$
                \STATE Project state, control
                $\{\tilde{\bR}^{(k)}_t\}_{k \in \bU}  = \texttt{DST}(\bR^{\alpha}_t), \{\tilde{\alpha}^{(k)}_t\}_{k \in \bU} = \texttt{DST}(\alpha_t)$
                \STATE \textbf{for} $k=1, \cdots, |\bU|$ \textbf{do in parallel}
                \STATE\hspace{+4mm}
                $\tilde{\bR}^{(k)}_{t+\delta_t} = \left[-\lambda^{(k)} \tilde{\bR}^{(k)}_t  + \sigma_t (\lambda^{(k)})^{-\tfrac{s}{2}}   \tilde{\alpha}^{(k)}_t\right] \delta_t + \sigma_t (\lambda^{(k)})^{-\tfrac{s}{2}} \varepsilon \sqrt{\delta_t}$
                \STATE \textbf{end for}
                \STATE Get residual $\bR^{\alpha}_{t+\delta_t}  = \texttt{iDST}(\{\tilde{\bR}^{(k)}_{t+\delta_t}\}_{k \in \bU})$
                \STATE Update $\bR^{\alpha}_t = \bR^{\alpha}_{t+\delta_t}$
                \STATE Get path $\bX^{\alpha}_t = \bx_0 + \bR^{\alpha}_t$
            \ENDFOR
            }}
        \RETURN $\bX^{\alpha}_T \sim \bbP^{\alpha}_T$
    \end{algorithmic}
\end{algorithm}
\egroup

\bgroup
\begin{algorithm}[t]
    \caption{Function Space Adjoint Sampler (FAS)}
    \label{alg:FAS-full}
    \begin{algorithmic}[1]
        \REQUIRE Initial condition $\bx_0$, terminal cost $g(x)$, control network $\alpha^\theta(\bx, t)$, replay buffer $\calB$,
        total epoch $M$,
        resample sizes $N$,
        gradient steps $L$, 
        maximum energy norm $\alpha_\text{max}$.
        
            \hspace*{-\fboxsep}\colorbox{background}{\parbox{\linewidth}{%
            \FOR{epoch m \textbf{in} $1, 2, \dots, M$}
                \STATE Sample $N$ paths $\bX^{\bar{\alpha}^m}_T$ from~\Cref{alg:FAS sampling} with $\bar \alpha^{m} = \text{stopgrad}(\alpha^{\theta_m}_\theta) $
                \STATE Compute adjoint $\bY_T^{\bar{\alpha}^m} = \texttt{clip}\left(\nabla g(\bX_T^{\bar{\alpha}^m}), \alpha_\text{max}\right)$
                \STATE Update buffer $\calB \leftarrow \calB \cup (\bX_T^{\bar{\alpha}^m}, \bY_T^{\bar{\alpha}^m})$
                \FOR{step \textbf{in} $1, 2, \dots, L$}
                \STATE Sample $t \sim \calU[0,T]$, $(\bX_T, \bY_T) \sim \calB$ and $\bX_t \sim \bbP_{t|T}(\cdot|\bX_T)$
                \STATE Compute the training objective:
                \vspace{-5pt}
                \[\nonumber
                    \calL_\texttt{FAS}(\theta)= 
                    \bbE_{\scriptscriptstyle t, (\bX_T, \bY_T), \bX_t} \left[\norm{\alpha^{\theta_s}( \bX_t, t) + \sigma_t Q^{1/2} e^{-(T-t)\calA^{\dag}} \bY_T}^2_{\calH} \right]
                \]
                \vspace{-5pt}
                \STATE Update $\theta^{l+1}_m$ with $\nabla_{\theta} \calL_{\texttt{FAS}}(\theta^l_m)$.
                \ENDFOR
            \ENDFOR
            }}
        \RETURN Approximated optimal control $\alpha^{\star} \approx \alpha^{\theta_M}$
    \end{algorithmic}
\end{algorithm}
\egroup

\paragraph{Algorithms} The overall algorithm of the implementation of FAS is summarized in the~\Cref{alg:FAS-full}.

\subsection{Possible parameterization strategie for other domain}\label{sec:discussion other domain}

\paragraph{Rectangular domain} Let $\bX:\bbR^2 \to \bbR^d$ be a field on a rectangular subset of $\bbR^2$ with zero Neumann BCs, cosine modes from the natural basis. Because real data are sampled on a finite grid, the spectrum already contains a built-in cut-off frequency. We can therefore write the operator with a finite eigen decomposition via the discrete cosine transform (\texttt{DCT})~\citep{rissanen2023generative, lim2023scorebased, park2024stochastic}. Concretely, we can approximate the Laplacian operator as $\calA  \triangleq \bE \bD \bE^{\top}$ with cosine projection matric $\bE^{\top}$ and diagonal eigen values $\{\lambda^{(k)}\}^{K}_{k=1}$.

\paragraph{Graph domain} For a function $\bX : \calG \to \bbR^3$ of a discrete graph $\calG = \la \bV, \calE \ra$, where $\bV = \{v_i\}_{i=1}^n$ is the set of vertices and $\calE = \{e_{ij} \mid (i, j) \subseteq |\bV| \times |\bV |\}$ is the set of edges, we can choose the eigenvectors of the symmetric graph Laplacian $\calA_{\texttt{sym}}$ as the spectral basis. Each graph signal is already finite-dimensional, with $|\bV|$ modes acting as an inherent spectral cut-off. The operator admits a graph Fourier decomposition $\calA_{\texttt{sym}} = \bU \boldsymbol{\Lambda} \bU^{\top}$, where $\bU^{\top}$ holds the Laplacian eigenvectors and $\boldsymbol{\Lambda}$ is diagonal with eigen values $\{\lambda^{(k)}\}_{k=1}^{|\bV|}$. Specifically, in molecule conformal generation, $\bX : \calG \to \bbR^3$ represents a conformal field~\citep{wang2024swallowing}. Note that unlike the rectangular or time-series domain, where every function shares a common basis, each molecule $\calG_i$ has its own graph, so we should computed the corresponding eigen-system for each training sample.

\subsection{Baselines and Evaluation Metrics}
For consistency and fair comparison, we follow the respective experimental setups established by baseline methods. For the Müller Brown potential, we follow the experimental setup and reported results from~\citep{du2024doob}. for the molecule potentials, we follow~\citep{seong2025transition} and reported results from~\citep{seong2025transition, blessing2025trust}.

\begin{itemize}[leftmargin=15pt]
\item \textbf{\texttt{RMSD}}: We first align the heavy atoms of the endpoint $\bX[L]$ to the target state $\bB$ with the Kabsch algorithm~\citep{kabsch1976solution}. Then calculate the root mean square deviation (\textbf{\texttt{RMSD}}) between the heavy atom positions of $\bX[L]$ and $\bB$.

\item \textbf{\texttt{THP}}: is the indicator-based success rate of trajectories reaching the target metastable state $\bB$. Given endpoints $\{\bX^{(i)}[L]\}_{i=1}^N$ from $N$ paths, we define \textbf{\texttt{THP}} as:
\[
    \textbf{\texttt{THP}} = 100 \times \frac{|\{i : \bX^{(i)}[L] \in \calB \}|}{N}, \quad \text{where} \quad \calB = \{\bX[L] | \bX[L] - \bB < \epsilon\}.
\]
We choose $\epsilon$ following~\citep{seong2025transition}.

\item \textbf{\texttt{ETS}}: We measures how well the method find probable transition
states when crossing the energy barrier. Formally, given a transition path $\bX$ of length $L$ that reaches $\bB$, we define
\[
    \textbf{\texttt{ETS}}(\bX) = \max_{u \in [0,L]} V(\bX[u]).
\]
\end{itemize}

\subsection{Potential Functions and Initial conditions} \begin{itemize}[leftmargin=15pt]
\item The synthetic Müller–Brown potential $V:\bbR^2 \to \bbR$ used in our experiments is given by
\[
    V(x, y) &= -200 \cdot \exp(-(x_1 -1)^2 - 10y^2) - 100 \cdot  \exp(-x^2 -10 \cdot( y-0.5)^2)  \\
    & - 170 \cdot \exp(-6.5 \cdot (x + 0.5)^2 + 11 \cdot (x + 0.5) \cdot (y - 1.5) - 6.5 \cdot(y - 1.5)^2) \\
    & + 15 \cdot \exp(0.7 \cdot (x+1)^2 + 0.6 \cdot  (x+1) \cdot (y-1) + 0.7 \cdot (y-1)^2).
\]
For synthetic potential, we set initial condition $\bx_0$ as linear interpolation between two local minimums $\bA = (-0.558, 1.442)$ and $\bB = (0.624, 0.028)$.

\item For real-world molecule system, we implement the potential with OpenMM library~\citep{Eastman2023}. Since OpenMM library dose not provide automatic differentiation, we wrap it in PyTorch~\citep{paszke2019pytorch} and expose batched energy and force calls efficiently. 

We use \texttt{amber99sbildn}~\citep{lindorff2010improved} for alanine dipeptide in vacuum and set $\bA = \teal{C5}$ and $\bB = \purple{C7ax}$. For Chignolin we use \texttt{ff14SBonlysc}~\citep{maier2015ff14sb} with the \texttt{gbn2} solvent~\citep{nguyen2013improved} and we set $\bA$ as \texttt{unfolded protein} and $\bB$ as \texttt{folded protein} by following the setting in~\citep{seong2025transition}. 

\end{itemize}

\subsection{Transition Path Sampling on Molecule Potential}
\paragraph{Path Initialization} Due to the rugged and stiff nature of molecular potentials, linear interpolation alone can produce clashing and high energy images along the path, which might produces sub-optimal solutions. To avoid this problem, we introduce additional refinement strategy for molecule potentials.

Given an initial condition $ \bx_0 $, we linearly interpolate Cartesian coordinates between $ \bA $ and $ \bB $, then refine the path using the \textit{image dependent pair potential} (IDPP)~\citep{smidstrup2014improved} before training. Let $\bC=(\br_1,\dots,\br_N)$ denote a configuration with atom positions $\br_i \in \mathbb{R}^3$. Given endpoints $\bA$ and $\bB$, and a path with images $\bx[u] = \mathbf{0}_{\bN \times 3}$ and the target distance for pair $(i,j)$ at time $u$ as linear interpolation between pair-wise distance for a given endpoints:
\[\label{eq: pairwise dist}
\tilde d_{ij}[u] = (1-u) d_{ij}(\bA) + u d_{ij}(\bB), \quad \text{where}\quad  d_{ij}(\bC) = |\br_i - \br_j|.  
\]
Then, the IDPP minimizes the mismatch between current and target distances via
\[\label{eq: IDPP energy}
\bE_{\text{IDPP}}(\bx[u]) = \textstyle{\sum_{i<j}} w_{ij}^{(u)}\big(d_{ij}(\bx[u]) - \tilde d_{ij}[u]\big)^2,
\]
where $\bw_{ij}^{(u)} = 1/\big(\tilde d_{ij}[u]\big)^{4}$ is the weight over all images $u \in \bU$. This initialization yields a path near a minimum-energy path with negligible time cost $\eg$ less than a minute. Algorithm~\ref{alg:get initial condition} summarizes the IDPP initialization. For chignolin, we take additional gradient flow step on energy~\eqref{eq:terminal_cost_molecule} with small step size $\epsilon = 10^{-6}$ to make the initial path more stable after IDPP initialization:
\[\label{eq:gradient flow}
    \bx_{t+\epsilon} = \bx_{t}- \nabla U(\bX_t) \epsilon.
\]

\bgroup
\begin{algorithm}[t]
    \caption{IDPP initialization}
    \label{alg:get initial condition}
    \begin{algorithmic}[1]
            \REQUIRE  Transition states $\bA$ and $\bB$, discretization $\bU = \{u_i\}_{i=1}^{K}$ with $0 = u_1 < \cdots < u_K = 1$.
            
            \hspace*{-\fboxsep}\colorbox{background}{\parbox{\linewidth}{%
            \STATE Set initial condition $\bx_0 =  (1-t) \bA + t \bB$
            \FOR{implicit time u \textbf{in range} $[u_1, u_K]$}
                \STATE Compute target distance $\tilde{d}_{ij}[u]$ in~\eqref{eq: pairwise dist} for all $(i,j) \in N\times N$.
                \STATE Take $L$ gradient steps  $\nabla_{\bx_0[u]} \bE_{\text{IDPP}}(\bx_0[u])$ w.r.t on IDPP energy~\eqref{eq: IDPP energy}.
            \ENDFOR
            }}
        \RETURN  Refined initial condition $\tilde{\bx}_0$
    \end{algorithmic}
\end{algorithm}
\egroup

\paragraph{Regularization} The IDPP improves the starting path by producing realistic images for both alanine dipeptide and chignolin. However, for alanine dipeptide, diffusion in Cartesian coordinates may move the path into energy troughs that stretch or compress bonds beyond physical ranges. We therefore add a regularization term to the terminal cost to limit bond-length deviations:
\[
\tilde{U}(\bX) = U(\bx) + \lambda_{\texttt{reg}} \sum_{u \in \bL} \bE_{\text{IDPP}}(\bx[u]),
\]
where $\lambda_{\texttt{reg}} >0$ is constant. This regularization cost penalizes deviations of pairwise distances from the IDPP reference at each image and suppresses short-range clashes and over-stretched bonds. We set $\lambda_{\texttt{reg}}=1$ for alanine dipeptide.

\subsection{Additional Experiments}\label{sec:additional experiments}

\paragraph{Discretization invariance sampling} Beyond effective boundary enforcement, the functional formulation offers a key benefit: discretization invariance. Once the model is trained on a fixed and usually coarse grid, it can sample paths at arbitrary resolution. This capability has increased interest in functional formulations across scientific domains. 

In this section, we evaluate the discretization invariance of FAS on TPS. For TPS, this property is valuable when more fine configuration image near saddle points is required. Classical Langevin dynamics based samplers can refine a path by reducing the step size. However, such an  refinement requires additional force and energy evaluations that grow with the number of steps, which increases computational cost and inference time. In contrast, FAS is trained once on a small discretization. At inference it generates target paths on any chosen refined grid without extra energy evaluations because the energy cost is amortized in the learned control.

\begin{figure}
    \centering
    \includegraphics[width=1.\linewidth]{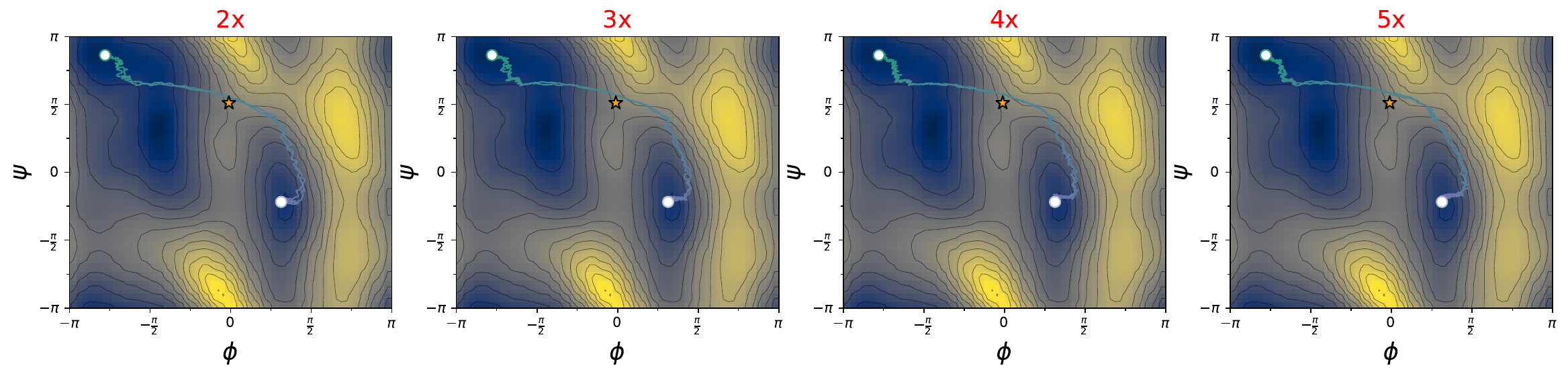}
    \caption{Sampled transition path on alanine dipeptide over various discretization steps.}
    \label{fig:resolution_example}
\end{figure}

\begin{wrapfigure}[10]{r}{0.45\linewidth}
\vspace{-3mm}
\centering
    \includegraphics[width=1.\linewidth]{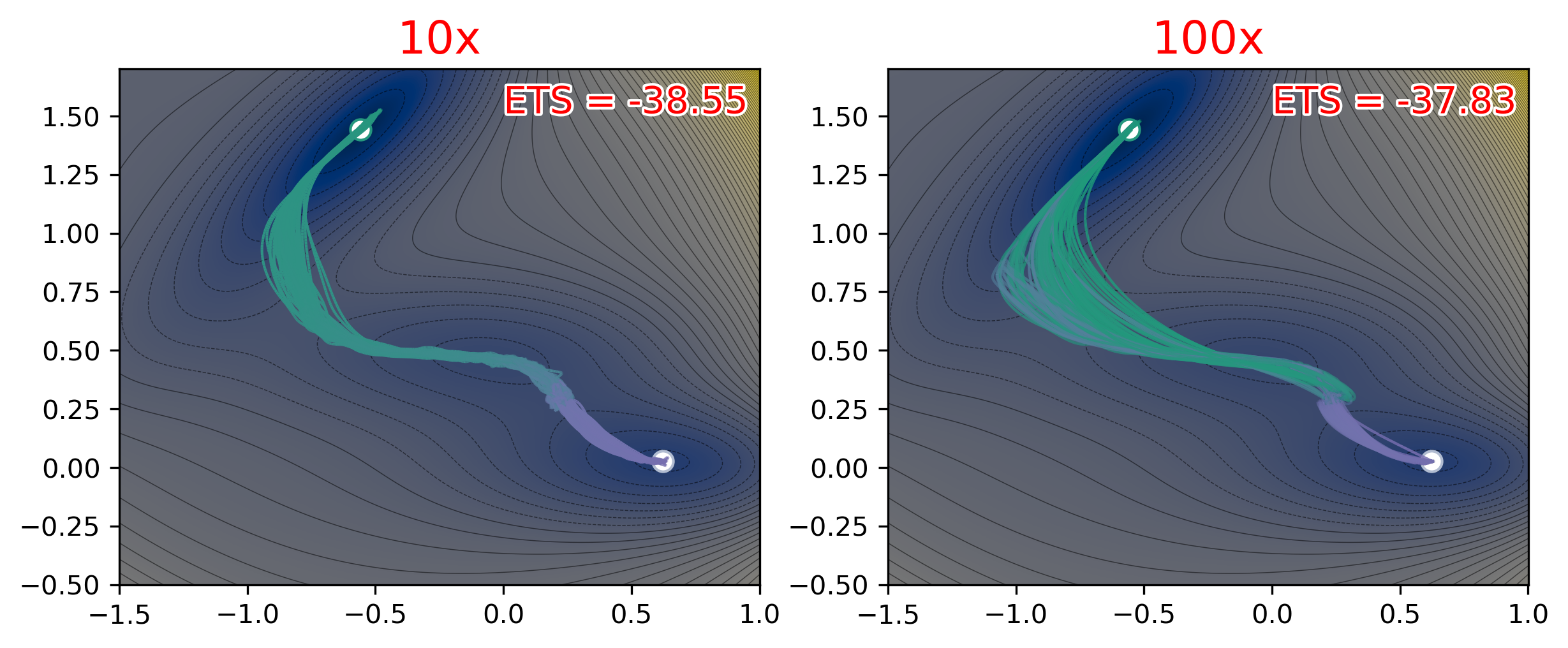}
    \caption{Sampled transition path on synthetic potential over various discretization steps.}
     \label{fig:resolution_example_extreme}   
\end{wrapfigure}

Figure~\ref{fig:resolution_example_extreme} shows zero shot TPS on a synthetic potential at extremely fine resolutions. Specifically, We train on $100$ number of uniform discretization over implicit time interval $[0,1]$, then evaluate at $\times 10$ and $\times 100$ finer grids with $10^{3}$ and $10^{4}$ discretization steps. Despite zero shot inference, FAS generates reliable paths consistently. Note that sampling on high resolution requires a larger diffusion scale than originally chosen $(\beta_\text{min}, \beta_{\text{max}})$ for desired performance, so we scale $(\beta_\text{min}, \beta_{\text{max}})$ with the refinement factor. Concretely, if the grid is refined by factor $r$, we use $\hat{\beta}_\text{min}=  (1+2\log_{10}(r)) \beta_\text{min}$ and $\hat{\beta}_\text{max}=  (1+2\log_{10}(r)) \beta_\text{max}$ for high-resolution sampling.

Figure~\ref{fig:resolution_example} shows the same behavior on alanine dipeptide. Due to the complex potential landscape of molecule system, we observed that a few evaluation points can show short spikes. We remove these artifacts by applying $10$ number of post processing gradient flow steps as in~\eqref{eq:gradient flow} with $\epsilon=10^{-6}$.

\begin{figure}
\begin{minipage}{0.5\linewidth}
    \includegraphics[width=1.\linewidth]{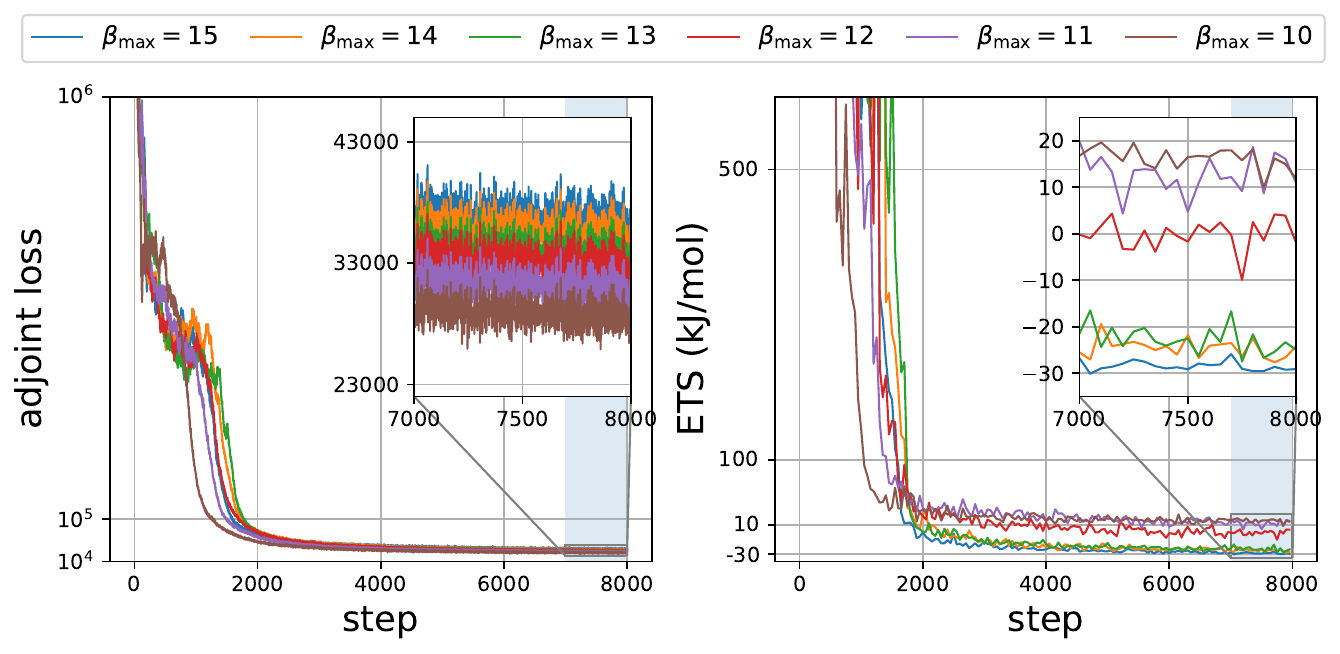}
\end{minipage}
\begin{minipage}{0.5\linewidth}
    \includegraphics[width=1.\linewidth]{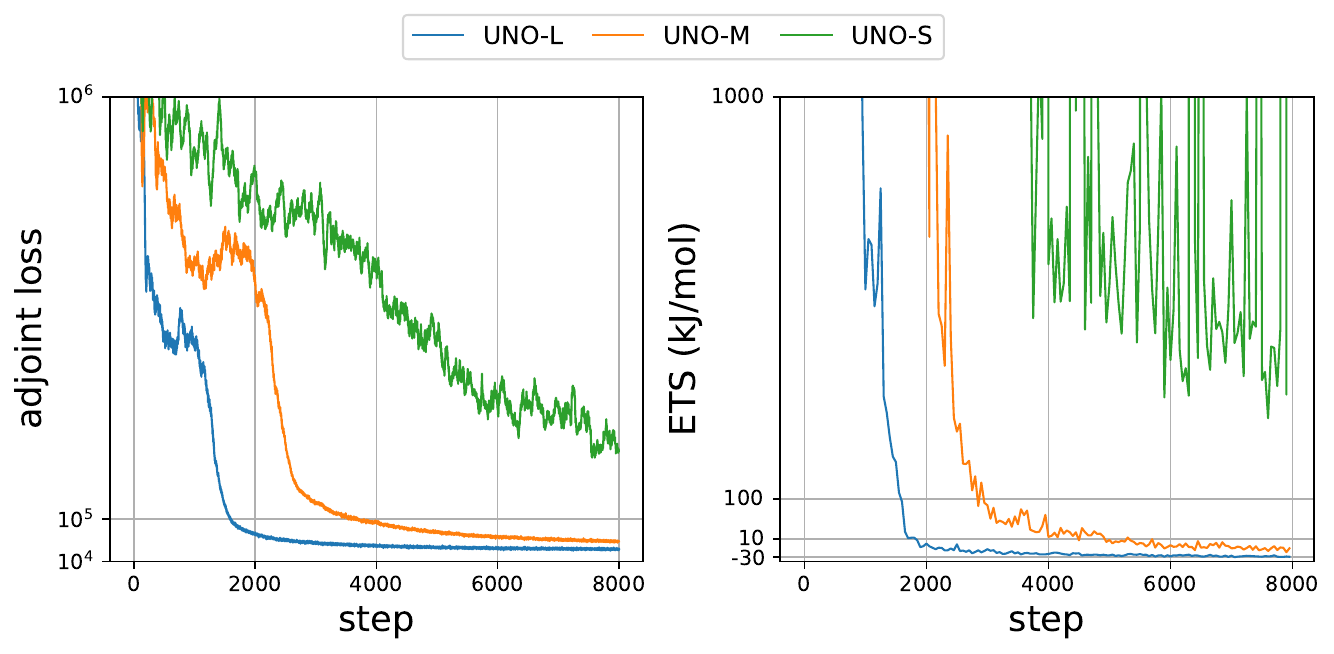}
\end{minipage}
\caption{\textbf{Ablation studies on alanine dipeptide.} (Left) Ablation study on $\beta_{\texttt{max}}$. (Right) Ablation study on model Scaling.}
\label{fig:abliation_study}
\end{figure}

\paragraph{Ablation studies} For FAS training, the matching objective in~\eqref{eq:infinite-dimensional adjoint sampling_modified} depend on sufficient exploration of the path-energy landscape $g$. Moreover, broad exploration in early stage is key to locating the critical point of $\calL_{\texttt{FAS}}$ because higher early diffusion schedule $\sigma_t$ enlarges the support of sampled paths in $g$ effectively. We therefore ablate the effect of diffusion magnitude $\sigma_t$ helps the performance by increasing $\beta_{\texttt{max}} \in [10,15]$ with fixed $\beta_{\text{min}} = 0.01$. As shown in Figure~\ref{fig:abliation_study}, we observed that resulting $\textbf{\texttt{ETS}}$ consistently decreases as $\beta_{\text{max}}$ increases. 

Additionally, we ablate how FAS scales with model size. Compared to the original path-wise SOC objective in~\eqref{eq:SOC problem}, the point-wise computation nature of matching-type objective in~\eqref{eq:infinite-dimensional adjoint sampling_modified} can scales better without heavy computation. We vary network size by setting the base dimension to $32$ (UNO-S), $48$ (UNO-M), and $64$ (UNO-L). Note that although the number of parameters increases, memory usage remains nearly unchanged. Figure~\ref{fig:abliation_study} shows that larger models yield better \textbf{\texttt{ETS}}, which demonstrates the FAS scales with model size.

\end{document}